\documentclass[11pt]{article}
\usepackage[margin=1in]{geometry}
\usepackage{amsmath,amssymb,amsthm,mathtools}
\usepackage{enumitem}
\usepackage[hypertexnames=false]{hyperref}
\usepackage[T1]{fontenc}
\usepackage[utf8]{inputenc}
\usepackage{lmodern}
\usepackage{microtype}

\usepackage[round]{natbib}
\renewcommand{\cite}{\citep}
\usepackage{multicol}
\usepackage{booktabs}
\usepackage{array}
\usepackage{tabularx}
\usepackage{needspace}
\usepackage{xcolor}
\usepackage{tikz}
\usetikzlibrary{
  arrows.meta,
  backgrounds,
  calc,
  decorations.pathreplacing,
  fit,
  patterns,
  positioning
}
\usepackage{mleftright}
\mleftright

\newcommand{\R}{\mathbb R}
\newcommand{\E}{\mathbb E}

\newcommand{\Var}{\operatorname{Var}}
\newcommand{\1}{\mathbf 1}
\newcommand{\eps}{\varepsilon}

\newcommand{\calP}{\mathcal P}
\newcommand{\calD}{\mathcal D}
\newcommand{\calW}{\mathcal W}
\newcommand{\calG}{\mathcal G}
\newcommand{\calF}{\mathcal F}

\newcommand{\calI}{\mathcal I}
\newcommand{\calT}{\mathcal T}

\newcommand{\calU}{\mathcal U}
\newcommand{\calQ}{\mathcal Q}

\newcommand{\len}{\operatorname{len}}

\DeclareMathOperator{\dist}{dist}

\DeclareMathOperator{\Proj}{Proj}

\DeclareMathOperator*{\argmin}{arg\,min}

\newcommand{\imax}{i_{\max}}

\newtheorem{theorem}{Theorem}
\newtheorem{lemma}{Lemma}

\theoremstyle{remark}

\theoremstyle{plain}

\newtheorem{corollary}[theorem]{Corollary}

\definecolor{queryblue}{RGB}{56,111,164}
\definecolor{querylight}{RGB}{211,226,241}
\definecolor{coverorange}{RGB}{230,126,34}
\definecolor{covergreen}{RGB}{70,150,110}
\definecolor{filterred}{RGB}{190,65,65}
\definecolor{quantpurple}{RGB}{126,87,166}
\definecolor{quantlight}{RGB}{232,222,242}
\definecolor{softgray}{RGB}{238,240,243}
\hypersetup{
  colorlinks=true,
  linkcolor=queryblue,
  citecolor=covergreen,
  urlcolor=queryblue,
  pdftitle={Non-Adaptive 1-Bit Mean Estimation: Minimax Rates and the Sample-Interval Tradeoff},
  pdfauthor={Ivan Lau and Jonathan Scarlett}
}

\title{Non-Adaptive 1-Bit Mean Estimation:\\
Minimax Rates and the Sample--Interval Tradeoff}
\author{Ivan Lau \qquad Jonathan Scarlett
\\
\\
National University of Singapore}
\date{}

\begin{document}
\maketitle

\begin{abstract}
We study distributed one-dimensional mean estimation under a 1-bit
communication constraint.  Each agent observes one sample, drawn independently
from an unknown distribution, and returns a single bit in response to a query
$Q\colon\mathbb R\to\{0,1\}$ chosen by a central learner.  The distribution
has mean in $[-\lambda,\lambda]$ and
$k$-th central moment at most $\sigma^k$, for a fixed $k>1$.  The order-optimal
two-stage protocol of \citet{lau2026order} uses responses from the first batch
to choose the second-batch queries, and whether this single round of interaction
is necessary was posed as an open problem.  We answer this negatively: for every
$k>1$, a non-adaptive protocol attains the adaptive 1-bit minimax
rate (and we note that concurrent works reached the same conclusion via different strategies).  We further determine the minimax sample
complexity among non-adaptive 1-bit estimators when every one-set
$Q^{-1}(1)$ is restricted to a union of at most $s$ intervals.  Relative to
unrestricted non-adaptive 1-bit querying, this constraint adds a term of
order $(\lambda\sigma/(s\eps^2))\log(1/\delta)$, giving the full tradeoff
between sample complexity and interval complexity to within $k$-dependent constant factors.  As a corollary, we
identify, order-wise, the minimum interval budget needed
to retain the unrestricted 1-bit minimax sample rate.
\end{abstract}

\section{Introduction}
\label{sec:introduction}

In distributed 1-bit mean estimation, a central learner does not observe the
samples directly. Instead, a number of agents have access to a single sample each, and each agent returns one bit indicating 
whether its sample lies in some measurable set chosen by the learner.  
When the mean may lie anywhere in a large range $[-\lambda,\lambda]$, 
a query aimed at the wrong region can carry almost
no information (e.g., due to the region having almost no probability mass). An adaptive protocol can address this obstacle by using an initial batch of responses to localize the mean near a coarse center $c$, and then choosing later queries to refine the estimate within that neighborhood. 
This is the architecture of the order-optimal two-stage estimator of
\citet{lau2026order}, illustrated in Figure~\ref{fig:information-flow}(a).
In that estimator, $c$ determines the second-batch refinement queries, which therefore cannot be chosen until the first-batch responses have been observed.
\citet{lau2026open} asked whether this query-side interaction is necessary for order-optimal 1-bit mean estimation.

We answer this question negatively by constructing a non-adaptive protocol that attains the adaptive 1-bit minimax rate. In our protocol, both the localization and refinement queries are chosen before any response is observed, so no query-side feedback is required and all agents can answer in parallel.  After all responses have been collected, the learner recovers $c$ from the localization bits and uses it to decode the stored refinement bits. The procedure is therefore ``two-stage'' only in the decoding mechanism, and not in query selection; Figure~\ref{fig:information-flow}(b) illustrates this distinction. 
We also characterize the minimax sample complexity when the set $Q^{-1}(1)$ associated with each query is a union of at most $s$ intervals (with $s=1$ recovering the non-adaptive interval-query setting considered in \citep[Theorem~11]{lau2026order}).  This describes the full sample--interval tradeoff up to constant factors and, as a consequence, the minimum order of $s$ that preserves the unrestricted 1-bit sample rate. Section~\ref{sec:contributions} gives a more detailed summary of these contributions.

\begin{figure}[t]
\centering
\begin{tikzpicture}[
    x=0.91cm,y=0.72cm,>=Latex,font=\footnotesize,
    qnode/.style={draw=queryblue!78,fill=querylight,
      text width=2.15cm,minimum height=0.78cm,align=center,inner sep=2.5pt},
    lnode/.style={draw=coverorange!82,fill=coverorange!10,
      text width=2.15cm,minimum height=0.78cm,align=center,inner sep=2.5pt},
    bnode/.style={draw=gray!65,fill=white,
      text width=1.65cm,minimum height=0.78cm,align=center,inner sep=2.5pt},
    cnode/.style={draw=coverorange!82,fill=coverorange!10,
      text width=1.45cm,minimum height=0.78cm,align=center,inner sep=2.5pt},
    dnode/.style={draw=queryblue!78,fill=queryblue!8,
      text width=2.95cm,minimum height=0.92cm,align=center,inner sep=2.5pt},
    onode/.style={draw=queryblue!82,fill=queryblue!13,rounded corners=2pt,
      text width=2.10cm,minimum height=0.82cm,align=center,inner sep=2.5pt},
    stagelabel/.style={font=\scriptsize,text=gray!72!black}
]
  \node[font=\small\bfseries,anchor=west] at (0.0,7.55)
  {(a) Existing adaptive protocol: $c$ determines the refinement queries};

  \node[lnode] (aqloc) at (1.30,6.05)
    {localization\\queries $Q^{\rm loc}$};
  \node[bnode,draw=coverorange!82] (ayloc) at (3.92,6.05)
    {localization\\bits $Y^{\rm loc}$};
  \node[cnode] (acenter) at (6.30,6.05)
    {decode\\$c$};
  \node[qnode,text width=2.40cm] (aqref) at (9.02,6.05)
    {refinement\\queries $Q^{\rm ref}(c)$};
  \node[bnode,draw=queryblue!78] (ayref) at (11.85,6.05)
    {refinement\\bits $Y^{\rm ref}$};
  \node[onode] (aout) at (14.30,6.05)
    {final estimate\\$\widehat\mu=c+\widehat\theta$};
  \draw[->,thick,coverorange!85!black] (aqloc)--(ayloc);
  \draw[->,thick,coverorange!85!black] (ayloc)--(acenter);
  \draw[->,very thick,filterred] (acenter)--(aqref)
    node[midway,above=12pt,font=\scriptsize,text=filterred]
      {query-side feedback};
  \draw[->,thick,queryblue] (aqref)--(ayref);
  \draw[->,thick,queryblue] (ayref)--(aout);

  \draw[gray!38,thick] (0.0,5.00)--(15.50,5.00);

  \node[font=\small\bfseries,anchor=west] at (0.0,4.45)
    {(b) Our non-adaptive protocol: $c$ enters only for decoding};
  \node[stagelabel] at (1.30,3.72) {queries fixed};
  \node[stagelabel] at (4.12,3.72) {parallel responses};
  \node[stagelabel] at (10.32,3.72) {decode after all responses};

  \node[lnode] (bqloc) at (1.30,2.85)
    {localization\\queries $Q^{\rm loc}$};
  \node[bnode,draw=coverorange!82] (byloc) at (4.12,2.85)
    {localization\\bits $Y^{\rm loc}$};
  \node[cnode] (bcenter) at (6.72,2.85)
    {decode\\$c$};
  \node[qnode] (bqref) at (1.30,1.25)
    {refinement\\queries $Q^{\rm ref}$};
  \node[bnode,draw=queryblue!78] (byref) at (4.12,1.25)
    {refinement\\bits $Y^{\rm ref}$};
  \node[dnode] (bdecoder) at (10.32,1.25)
    {decode stored refinement\\bits using $c$};
  \node[onode] (bout) at (14.30,1.25)
    {final estimate\\$\widehat\mu=c+\widehat\theta$};
  \draw[->,thick,coverorange!85!black] (bqloc)--(byloc);
  \draw[->,thick,coverorange!85!black] (byloc)--(bcenter);
  \draw[->,thick,queryblue] (bqref)--(byref);
  \draw[->,thick,queryblue] (byref)--(bdecoder);
  \draw[->,very thick,filterred] (bcenter.south east)
    to[out=-18,in=105] (bdecoder.north west);
  \node[font=\scriptsize,text=filterred,anchor=south] at (9.08,2.58)
    {use $c$ in decoding};
  \draw[->,thick,queryblue] (bdecoder)--(bout);
\end{tikzpicture}
\caption{Where the decoded center enters the protocol.
(a) In the adaptive protocol of \citet{lau2026order}, localization produces a
center $c$ that determines the subsequent refinement queries.
(b) In our protocol, both query families are fixed in advance and their responses are collected in parallel. After all responses have been collected, $c$ is used to decode the stored refinement bits.  The red arrows show where $c$ enters: query selection in (a) and decoding in (b).}
\label{fig:information-flow}
\end{figure}

\subsection{Problem Setup}
\label{sec:setup}

\paragraph{Distribution class.}
For known parameters $k > 1$ and $\lambda\ge\sigma>0$, define the nonparametric distribution class
\[
    \calP(k,\lambda,\sigma)
    =
    \left\{
        P:
        \mu(P)=\E_P[X]\in[-\lambda,\lambda],
        \quad
        \E_P[|X-\mu(P)|^k]\le\sigma^k
    \right\}.
\]
We do not impose any density, symmetry, or bounded-support assumptions.  

\paragraph{1-bit communication protocol.}
The learner is interested in estimating the population mean $\mu = \mu(P)$
from $n$ independent and identically distributed (i.i.d.) samples $X_1, \dots, X_n \sim P$ subject to a 1-bit communication constraint per sample.
The learner communicates separately with $n$ agents, each contacted once; the agents do not communicate with one another.  Agent $t$ observes $X_t\sim P$, receives a measurable query function~$Q_t:\mathbb R\to\{0,1\}$ from the learner (or pre-specified as part of the protocol), and returns the single bit $Y_t=Q_t(X_t)$.  We call $Q_t^{-1}(1)$ the query's \emph{one-set}, namely, the set on which it returns one.  After receiving all $n$ responses, the learner forms an estimate $\widehat{\mu}$ from the completed transcript $(Q_1, Y_1, \dots, Q_n, Y_n)$.

\paragraph{Adaptive versus non-adaptive queries.}
We say that a protocol is \emph{adaptive} (or \emph{sequential}) if $Q_t$ may depend on the earlier responses $Y_1, \dots, Y_{t-1}$ and possible public randomness. It is \emph{non-adaptive} if the entire query sequence
$Q_1,\ldots,Q_n$ is chosen before any response is observed, possibly using public randomness.
Hence, none of the responses affect the query selection, and all queries can be issued and answered in parallel.  
Throughout, the learner knows the realized queries and all public randomness.  We say that a non-adaptive protocol has \emph{i.i.d.~queries} if $Q_1,\ldots,Q_n$ are independent draws from a common distribution on binary query functions. Because the common distribution is chosen in advance, i.i.d.~querying is a special case of non-adaptive querying.

\paragraph{Learner's goal.}
For $\eps>0$ and $\delta\in(0,1)$, the learner seeks a mean estimate based on the 1-bit query results that is \emph{$(\eps,\delta)$-PAC} over
$\calP(k,\lambda,\sigma)$, i.e., its output $\widehat\mu$ satisfies
\[
    \sup_{P\in\calP(k,\lambda,\sigma)}
    \Pr_P(|\widehat\mu-\mu(P)|>\eps)
    \le\delta,
\]
where the probability is over the samples and all randomness in the estimator.
Unless stated otherwise, the learner may choose the query rule based on the known model parameters ($k$, $\lambda$, $\sigma$), the target $(\eps,\delta)$, and the prescribed sample budget $n$.  (Some discussion on partially unknown parameters is given in \citep{lau2026order}, but it remains open which dependencies can be removed entirely.)

\paragraph{Notation.}
We use standard asymptotic notation $O(\cdot)$, $\Omega(\cdot)$, and
$\Theta(\cdot)$ to hide absolute constants.  When these constants depend on
the moment parameter $k$, we make the dependence explicit using
$O_k(\cdot)$, $\Omega_k(\cdot)$, and $\Theta_k(\cdot)$.  Throughout, $\log$ denotes the natural logarithm.  In upper-bound rate
expressions, logarithmic factors are implicitly lower bounded by one.

\subsection{Summary of Contributions}
\label{sec:contributions}

We establish two main results: a non-adaptive protocol that matches the
adaptive 1-bit minimax sample complexity for every $k>1$, and matching upper
and lower bounds that characterize the minimax sample complexity of
non-adaptive estimators under every interval budget $s$.

\begin{enumerate}[label=\textbf{\arabic*.},leftmargin=2.5em]
\item \textbf{Minimax-optimal 1-bit mean estimation with i.i.d. queries.}
For every $k>1$, Theorem~\ref{thm:main} gives an $(\eps,\delta)$-PAC estimator whose sample complexity matches the adaptive 1-bit minimax sample complexity.
The construction retains the localization--refinement architecture of~\citet{lau2026order} but moves the dependence on $c$ from query selection to decoding. Its queries are i.i.d., and every realized one-set is a finite union of intervals.

\item \textbf{The minimax sample--interval tradeoff.}
Theorem~\ref{thm:interval-complexity} gives matching upper and lower bounds on
the minimax sample complexity under non-adaptive queries when the one-set of
every query is a union of at most $s$ intervals.  Relative to the unrestricted
1-bit minimax sample complexity, the interval
restriction contributes the additional term $\Theta_k\left( (\lambda\sigma)/(s\eps^2) \cdot \log(1/\delta)\right)$.
The upper bound can be attained with i.i.d. queries.  As a special case,
Corollary~\ref{cor:optimal-components} identifies, up to constants depending
on $k$, the smallest interval budget that preserves the unrestricted 1-bit
minimax sample complexity.
\end{enumerate}

\subsection{Prior and Concurrent Work}
\label{sec:related-work}

Here we focus on the work most directly related to nonparametric 1-bit mean estimation.  Appendix~\ref{app:earlier-work} surveys the broader literature on high-probability mean estimation and communication-constrained learning.

A substantial line of work studies 1-bit mean estimation in parametric settings.  \citet{kipnis2017mean} characterized the optimal asymptotic mean-squared error attainable by adaptive 1-bit protocols for a class of symmetric log-concave location families, including the Gaussian and Laplace families, and compared their performance with that of centralized and non-adaptive distributed protocols.
\citet{cai2020distributed} established minimax communication--risk tradeoffs for distributed Gaussian mean estimation using a localization--refinement decomposition. \citet{kumar2025unknown} studied adaptive and non-adaptive 1-bit estimation when both location and scale are unknown within a known one-dimensional location--scale family. For several such families, they showed that
adaptivity does not improve the asymptotic mean-squared error rate but strictly improves its leading constant.

Beyond parametric families, \citet{abdalla2026robust} constructed
non-adaptive 1-bit estimators for finite-variance distributions using randomized thresholds, with guarantees under adversarial corruption. Their 1-bit construction requires prior knowledge of an interval in which  $X$ lies with high probability.
For the nonparametric finite-moment classes studied here, \citet{lau2025sequential} gave sample-complexity guarantees under finite variance ($k=2$) that are optimal up to logarithmic factors, using adaptive interval queries. \citet{lau2026order} subsequently established the minimax rates for every fixed moment regime $k>1$ using adaptive threshold queries and also gave a two-stage construction using general 1-bit queries. \citet{lau2026open} asked whether the same rates can be attained when all queries are fixed in advance.

Independently and concurrently with our work, \citet{miao2026universal}
and~\citet{hu2026interaction} also answered the open problem posed by
\citet{lau2026open}.  Moreover, \citet{zhang2026valg} introduce VALG, an agentic system for research in machine learning theory, and present two theorem candidates for the same problem as part of its evaluation.  Together with ours, these approaches share the information flow illustrated in Figure~\ref{fig:information-flow}(b): localization and refinement bits are collected in parallel, and the decoded center is used only to interpret refinement bits that have already been collected.  They all use the localization strategy of \citet{lau2026order}.

The various refinement strategies are outlined here and discussed in more
detail in Appendix~\ref{app:concurrent-work}. \citet{miao2026universal}
gives grid-based constructions, while \citet{hu2026interaction} use a
successive-scale decomposition; the two VALG candidates are closely related
to these approaches.
Our decoder instead follows the centered refinement strategy of~\citet{lau2026order}: it uses~$c$ to select relevant cells, estimates each selected cell's contribution to the residual, and sums the resulting estimates; see Section~\ref{sec:stoquant-intuition}.  Overall, our approach can be viewed as taking the adaptive strategy in \citep{lau2026order} and ``making it non-adaptive'', whereas the other concurrent approaches are less directly connected to it.  Appendices~\ref{app:miao-comparison} and~\ref{app:hu-zhong-comparison} compare these refinement mechanisms in detail.  

As currently formulated, the concurrent refinement constructions include query families whose one-sets may have infinitely many interval components, whereas every realized one-set in our construction is a finite union of intervals.  We expect that these
refinement constructions could be modified to achieve the same finite-union property.  By contrast, attaining the order-optimal sample--interval tradeoff in Theorem~\ref{thm:interval-complexity} requires a separate adjustment to the localization strategy; see Section~\ref{sec:interval-complexity} for details.

\section{A Minimax-Optimal Non-Adaptive Estimator}
\label{sec:construction}

We now state the non-adaptive i.i.d.-query upper bound and describe the estimator (which we interchangeably call the \emph{decoder}) that
attains it.  The construction retains the localization--refinement
architecture of~\citet{lau2026order}, but moves the dependence on the decoded
center~$c$ from query selection to decoding (Figure~\ref{fig:information-flow}).  All localization and refinement
queries are fixed before any response is observed and can therefore be issued
and answered in parallel.  Only the decoder is two-stage: it first recovers
$c$ and then uses it to decode refinement bits that have already been
collected.
For clarity, this section analyzes the estimator under the prescribed numbers
of localization and scale-specific refinement queries specified below.  These
queries are independent but have type-dependent laws.
Appendices~\ref{app:construction} and~\ref{app:upper-bound} provide the omitted
verification and sample-count details, while
Appendix~\ref{app:localization-iid} converts these prescribed query counts into
the i.i.d. query law stated below.

\begin{theorem}[Minimax-Optimal 1-Bit Mean Estimation with i.i.d. Queries]
\label{thm:main}
Fix $k>1$, $\lambda\ge\sigma>\eps>0$, and
$\delta\in(0, 1)$.  There is a randomized non-adaptive 1-bit estimator that is 
$(\eps,\delta)$-PAC over $\calP(k,\lambda,\sigma)$ and uses $n$ i.i.d. queries, where
\begin{equation}
\label{eq:main-rate-compact}
    n
    =O\left(
      \log\frac{\lambda}{\sigma} \right)
      +
      \begin{cases}
      O_k \left( (\sigma/\eps)^2 \cdot \log(1/\delta) \right),&k>2,\\[0.2em]
      O \left( (\sigma/\eps)^2 \cdot \log(\sigma/\eps) \cdot \log(1/\delta) \right),&k=2,\\[0.2em]
      O_k \left( (\sigma/\eps)^{k/(k-1)} \cdot \log(1/\delta)  \right),&1<k<2.
      \end{cases}.
\end{equation}
Moreover, the one-set of each query is almost surely a union of $O(\lambda/\sigma)$ intervals.
\end{theorem}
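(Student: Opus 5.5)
The plan is to follow the localization--refinement architecture of \citet{lau2026order} with the single change that refinement queries are not centered at the decoded $c$. Instead they are periodized over a grid of cells covering $[-\lambda,\lambda]$, and $c$ is used only at decoding time to select which parts of the responses matter. The proof has three pieces: a non-adaptive localization stage, a non-adaptive refinement stage with its error analysis, and a conversion from prescribed type counts to i.i.d.\ queries.

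\textbf{Step 1 (localization).} I would use the localization queries of \citet{lau2026order}, which are already non-adaptive. They are threshold/parity-type queries on a grid of width $\Theta(\sigma)$ over $[-\lambda,\lambda]$. Taking the bits of the cell index of $X$ (a binary-search code), each queried $O(\log(1/\delta))$ times in parallel, yields a center $c$ with $|c-\mu|=O(\sigma)$ with probability $1-\delta/2$. By Chebyshev or the $k$-th moment bound, most mass lies within $O(\sigma)$ of $\mu$, so a majority vote per bit recovers the cell. I also need the total count to be $O(\log(\lambda/\sigma))$ plus a term absorbed into the second summand. For this I would use the noisy-binary-search / error-correcting-code version from \citet{lau2026order} rather than repeating every bit. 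Each such query's one-set is a union of $O(\lambda/\sigma)$ intervals, namely one per grid cell.

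\textbf{Step 2 (refinement, decoded with $c$).} For each dyadic scale $\ell$ in the truncation scheme of \citet{lau2026order}, I write the residual $\mu-c=\E[(X-c)\1\{|X-c|\le T\}]+\text{bias}$ as a sum over layers $2^{\ell-1}\sigma<|X-c|\le 2^\ell\sigma$. The centered randomized-threshold query used adaptively for layer $\ell$ is $\1\{X-c\in[a_\ell, a_\ell+U]\}$ with $U$ uniform. I would replace it by its periodized version: $\1\{(X \bmod p_\ell)\in A_\ell(U)\}$ with period $p_\ell=\Theta(2^\ell\sigma)$ plus a random shift, intersected with $[-\lambda-O(T),\lambda+O(T)]$. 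Once $c$ is known, the decoder knows which residue class corresponds to the window around $c$. Mass outside the window $|X-c|\le p_\ell/2$ aliases into the estimate, but its contribution is bounded by the tail mass $\Pr(|X-\mu|\gtrsim 2^\ell\sigma)$ times $2^\ell\sigma$. This is exactly the truncation error that the adaptive analysis already pays, so it can be charged to the same bias budget. The result is an unbiased-up-to-tail estimate of each layer's contribution with variance $O(2^{2\ell}\sigma^2\Pr(\text{layer}))$. Summing over layers, with sample allocation proportional to the layer scale, reproduces the three regimes $k>2$, $k=2$ (the extra $\log(\sigma/\eps)$ from the number of layers) and $1<k<2$ (truncation at $T\asymp\sigma(\sigma/\eps)^{1/(k-1)}$). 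To convert the variance bound to confidence $\delta$, I would use median-of-means over $O(\log(1/\delta))$ groups. Periodizing at scale $p_\ell$ over range $2\lambda$ gives $O(\lambda/p_\ell)=O(\lambda/\sigma)$ intervals.

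\textbf{Step 3 (i.i.d.\ conversion and union bound).} I would draw each query's type (localization bit $j$, or refinement scale $\ell$ together with its group) from a fixed mixture whose weights are the prescribed proportions. By a Chernoff bound, each type then receives at least half its target count with probability $1-\delta/4$, provided targets are $\Omega(\log(1/\delta))$; padding the counts by constants ensures this. Finally I union bound the localization, refinement and allocation failure events. The main obstacle is Step 2's aliasing control. It requires showing that with $c$ only $O(\sigma)$-accurate, the periodized query's decoded statistic has bias at most $\eps/2$ summed across all scales, uniformly over $\calP(k,\lambda,\sigma)$. My first attempt would be to choose $p_\ell$ as a large constant multiple of the layer's outer radius, so that the aliasing bias is dominated by $\E[|X-\mu|\1\{|X-\mu|>T\}]\le\sigma^k T^{1-k}$.
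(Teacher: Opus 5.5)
There is a genuine gap in Step~2, at the point you yourself flag as the main obstacle, and the proposed resolution does not work.

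\textbf{Why the aliasing fix fails.} A periodized query returns the same bit whether $X$ falls in the intended copy near $c$ or in any other copy. After stochastic quantization, the layer-$\ell$ statistic therefore treats a point at $c+mp_\ell+y$ as if it were at $c+y$. Far mass is counted twice, once at its own layer and once aliased, giving a bias of order $2^\ell\sigma\cdot\Pr(|X-c|\gtrsim p_\ell)$ at \emph{every} layer.

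This is not ``the truncation error the adaptive analysis already pays.'' That analysis pays bias only once, at the cutoff $T$. The per-layer tail factors $\Pr(|X-c|\gtrsim 2^\ell\sigma)$ appear only in its variance, where they are divided by $n_\ell$.

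Concretely, take $p_\ell=C2^\ell\sigma$ for a constant $C$. A distribution in $\calP(k,\lambda,\sigma)$ that puts mass $\asymp C^{-k}$ at distance $\asymp C\sigma$ from $\mu$ produces a finest-layer aliasing bias $\asymp C^{-k}\sigma$. This is not $O(\eps)$ when $\eps\ll\sigma$. Forcing it below $\eps$ requires $C\gtrsim(\sigma/\eps)^{1/k}$, or $p_\ell\gtrsim T$.

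But with one copy per period and queries fixed before $c$ is known, the copy can only be placed correctly relative to $c$ in one way. You would need a random shift, followed by keeping only the $\Theta(1/C)$ fraction of favorable phases; your sketch also leaves this filtering step unstated. That multiplies the fine-scale cost by $C$. For $k\ge2$ the finest scale alone then costs about $(\sigma/\eps)^{2+1/k}$, which breaks the rate.

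\textbf{The missing idea.} You need to make the cross-copy interference \emph{zero-mean}, so that it costs variance rather than bias.
\begin{itemize}
\item The paper puts every cell of a fixed dyadic grid bank into each scale-$i$ query. Each cell's candidate subinterval (with a shared uniform split) is included according to an independent Rademacher sign.
\item The decoder multiplies the bit by $\sum_{J}w_J^d(c)\,\eta_J$ over the decoder-selected cover cells, which cancels all other cells in expectation.
\item Such a union may contain a candidate interval near $c$, which would strip the tail factor from the variance. So the decoder also discards responses whose query included a cell within $\ell_i$ of $c$, and reweights the rest by $2^{|\calF_i(c)|}\le 8$.
\end{itemize}
The decoded statistic is then nonzero only on $\{|X-c|\ge\ell_i\}$, and the multiscale analysis of \citet{lau2026order} goes through unchanged.

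\textbf{A smaller problem in Step~3.} You treat each localization code coordinate as a type that must receive $\Omega(\log(1/\delta))$ responses. That makes the localization cost $\log(\lambda/\sigma)\cdot\log(1/\delta)$, which is not absorbed when $\log(\lambda/\sigma)\gg(\sigma/\eps)^2$. The paper avoids per-type count bounds in two ways. It samples the code coordinate uniformly for each query and decodes by minimum disagreement over all queries. It also reweights each refinement contribution by the inverse of its type probability.
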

The rate in each of the three cases in
\eqref{eq:main-rate-compact} matches, in the relevant parameter regimes and up to constants depending only on $k$, the adaptive upper and lower bounds~\citep[Theorems~5 and~9]{lau2026order}. Hence, non-adaptive queries attain the adaptive 1-bit minimax sample complexity.
The construction in this section uses $O(\lambda/\sigma)$ interval
components per query.  Theorem~\ref{thm:interval-complexity} in
Section~\ref{sec:interval-complexity} characterizes the minimax sample--interval
tradeoff, while Corollary~\ref{cor:optimal-components} therein identifies, up to constants depending
on $k$, the minimum interval budget that preserves the unrestricted 1-bit
minimax sample complexity.

\subsection{Localization}
\label{sec:localization-main}

Theorem~\ref{thm:main} uses a tightened form of the localization
strategy of~\citet[Theorem~16]{lau2026order}.\footnote{
Relative to the $50\sigma$ localization radius obtained by taking the midpoint of the interval constructed in the proof of
\citet[Theorem~16]{lau2026order}, the $8\sigma$
guarantee reduces the transferred scale~$\bar\sigma$ in \eqref{eq:moment-transfer} from $51\sigma$ to $9\sigma$.  Since the leading term of $n_i$ in \eqref{eq:main-ni} scales as $\bar\sigma^2$, this reduces its per-scale constant by $(51/9)^2$.
The same substitution tightens the implicit refinement constants in the concurrent constructions~\citep{miao2026universal,hu2026interaction}.} The following lemma
gives the localization guarantee needed for refinement and records the
interval complexity of its query one-sets.  
We note that the localization guarantee from \citet{lau2026order} would already suffice for Theorem~\ref{thm:main} (other than the i.i.d.~query claim), but we still state and prove the following lemma because we will need to adapt it to the $s$-interval restricted case in Section~\ref{sec:interval-complexity}.  Appendix~\ref{app:coding-localization}
gives the proof and specifies the hidden constants.

\begin{lemma}[Localization]
\label{lem:LS-localization}
For every $\lambda\ge\sigma>0$, $\delta_{\rm loc}\in(0,1/2)$, and every
distribution satisfying $\mu\in[-\lambda,\lambda]$ and
$\E[|X-\mu|]\le\sigma$, there is a randomized non-adaptive 1-bit protocol
that returns $c\in[-\lambda,\lambda]$ satisfying
$|c-\mu|\le 8\sigma$ with probability at least
$1-\delta_{\rm loc}$.  It uses
$O(\log(\lambda/\sigma)+\log(1/\delta_{\rm loc}))$ i.i.d.~queries, and the one-set of each query is a union of $O(\lambda/\sigma)$ intervals.
\end{lemma}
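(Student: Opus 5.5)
We would construct the localization queries as randomly shifted periodic ``square-wave'' queries at dyadic scales, and decode by a maximum-agreement rule over a grid of candidate centers. Fix a base width $w = a\sigma$ for an absolute constant $a$, and let $L=\lceil\log_2(\lambda/\sigma)\rceil+O(1)$ so that $2^L w\ge 4\lambda$. For scale $j\in\{0,\dots,L\}$ let $p_j=2^{j+1}w$, and define the query with offset $u$ by
\[
Q_{j,u}(x)=\1\{(x-u)\bmod p_j < p_j/2\},\qquad x\in[-2\lambda,2\lambda],
\]
with $Q_{j,u}(x)=0$ outside $[-2\lambda,2\lambda]$. On $[-2\lambda,2\lambda]$ this one-set is a union of $O(\lambda/p_j)+1=O(\lambda/\sigma)$ intervals, which gives the interval claim. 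Each query is drawn i.i.d.: first a scale $j$ from a fixed distribution $(\pi_j)$, then $u\sim\mathrm{Unif}[0,p_j)$. Both $(j,u)$ are public, so the learner knows every realized query.

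The first key step is a per-query ``signal'' lemma. By Markov's inequality applied to $\E|X-\mu|\le\sigma$, we have $\Pr(|X-\mu|\le 4\sigma)\ge 3/4$. For a candidate $c$, define the predicted bit $\hat Y_t(c)=Q_t(c)$ and the score $S(c)=\sum_t \1\{Y_t=\hat Y_t(c)\}$. We would show two facts.
\begin{itemize}[leftmargin=2em]
\item[(i)] $\Pr(Y_t\ne \hat Y_t(\mu))\le \Pr(|X-\mu|>4\sigma)+O(\sigma/p_j)$. The second term holds because, with a uniform offset, a fixed segment of length $4\sigma$ straddles a boundary with probability $O(\sigma/p_j)$.
\item[(ii)] For $c$ with $|c-\mu|=d\ge 8\sigma$ and $d\le p_j/4$, the points $c$ and $X$ lie on opposite sides of a boundary with probability $\gtrsim d/p_j$ more than $\mu$ and $X$ do.
\end{itemize}
Hence $\E[S(\mu)-S(c)]\gtrsim n\sum_{j:p_j\ge 4d}\pi_j\, d/p_j$. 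The increment $\1\{Y_t=\hat Y_t(\mu)\}-\1\{Y_t=\hat Y_t(c)\}$ is nonzero only on an event of probability $O(d/p_j+\text{const})$. For this reason we would use Bernstein's inequality rather than Hoeffding's, to get a concentration exponent linear in the mean gap. We take $c$ to be the maximizer of $S$ over the grid $\sigma\mathbb Z\cap[-\lambda,\lambda]$, and the claim $|c-\mu|\le 8\sigma$ is then reduced to a union bound over grid points at distance $>8\sigma$.

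The union bound is organized dyadically. There are $O(2^i)$ bad candidates with $d\asymp 2^i\sigma$, so we need a score gap exponent of order $i+\log(1/\delta_{\rm loc})$ at distance $2^i\sigma$. Candidates with $d$ comparable to $\lambda$ are handled by the coarsest scales. Here the constant-probability misreports from the Markov tail are harmless, provided $a$ is chosen so that the genuine gap beats the $1/4$ tail plus the boundary term. This requires the tail event to affect $S(\mu)$ and $S(c)$ symmetrically in expectation, which we would verify by conditioning on $X$.

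The main obstacle is the allocation of $(\pi_j)$ so that the total number of queries is $O(L+\log(1/\delta_{\rm loc}))$ rather than $O(L\log(1/\delta_{\rm loc}))$ or $O(L^2)$. With uniform $\pi_j=1/L$, the gap at distance $d$ is only $\Theta(n/L)$, which forces $n\gtrsim L(L+\log(1/\delta_{\rm loc}))$. Our first attempt would be to exploit that the gap at $d$ sums $d/p_j$ over all coarser scales, and to choose $\pi_j$ so that this sum scales like $(i+\log(1/\delta_{\rm loc}))/n$. If no single-stage score achieves this, we would fall back on a hierarchical decoder. It first fixes the coarse scale's cell using failure budget $\delta_{\rm loc}/2$. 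At finer scale $j$ it then only disambiguates among $O(1)$ neighboring cells, with per-scale failure probability $\delta_{\rm loc}2^{-(L-j)-1}$. The count is then charged as $\sum_j (L-j+\log(1/\delta_{\rm loc}))$ only at the coarsest $O(1)$ levels, while the finer levels reuse all queries from the coarser levels through the Bernstein-type linear gap. We expect this bookkeeping to be the delicate part of the argument.
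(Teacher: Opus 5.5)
\textbf{The gap is at the step you flag as the main obstacle, and it cannot be closed with this query family.}

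For a square wave of period $p_j$ with uniform shift, two points at distance $d$ inside the window are separated with probability $2\dist(d/p_j,\mathbb Z)$. In particular, $Q_{j,u}(x)=Q_{j,u}(x+d)$ for every $u$ whenever $d/p_j\in\mathbb Z$. Your periods are dyadic, so a candidate at distance $d=2^iw$ is invisible to every scale $j<i$. Its per-query separation probability is therefore $s_i\coloneqq\sum_{j\ge i}\pi_j2^{i-j}$, and
\[
\sum_{i=0}^{L}s_i=\sum_{j}\pi_j\sum_{i\le j}2^{i-j}\le 2
\]
for every allocation $(\pi_j)$. The score increment vanishes on non-separating queries. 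Hence both the expected gap and the Bernstein exponent at distance $2^iw$ are $O(ns_i)$.

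Your first attempt needs $ns_i\gtrsim i+\log(1/\delta_{\rm loc})$ for every $i$. Summing over $i$ gives $n\gtrsim L^2+L\log(1/\delta_{\rm loc})$, so no choice of $(\pi_j)$ works.

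The hierarchical fallback cannot rescue this, because the obstruction is information-theoretic. Take $d=2^iw\in(17\sigma,2\lambda]$, $q=\sigma/(2d)$, and
\[
P_0=(1-q)\delta_{-d/2}+q\delta_{d/2},\qquad P_1=q\delta_{-d/2}+(1-q)\delta_{d/2}.
\]
These laws have the following properties.
\begin{itemize}
\item Both satisfy $\E|X-\mu|\le\sigma$.
\item Their means are $d-\sigma>16\sigma$ apart, so an $8\sigma$-localizer yields a test between them.
\item Non-separating queries give identical response laws.
\item Separating queries give KL divergence at most $\log(1/q)=O(i)$.
\end{itemize}
Bretagnolle--Huber then forces $n\,s_i\log(2d/\sigma)\gtrsim\log(1/\delta_{\rm loc})$ for each such $i$. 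Summing against $\sum_i s_i\le 2$ gives $n=\Omega(\log(1/\delta_{\rm loc})\log\log(\lambda/\sigma))$ for every decoder. Taking $\delta_{\rm loc}=\sigma/\lambda$ contradicts the claimed $O(\log(\lambda/\sigma)+\log(1/\delta_{\rm loc}))$.

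\textbf{The missing idea} is that a single query should separate every pair of well-separated locations with constant probability. Then every wrong candidate has a constant expected score gap at once, and no multiscale allocation is needed. The paper achieves this with a code rather than a positional expansion.
\begin{itemize}
\item It partitions $[-\lambda,\lambda]$, enlarged by $10\sigma$ on each side, into $K=O(\lambda/\sigma)$ bins of width $5\sigma$, plus two exterior bins.
\item It assigns the bins codewords of length $O(\log K)$ whose pairwise relative Hamming distances all lie in $[0.49,0.51]$, obtained by random coding.
\item Each query asks whether $X$ lies in a bin whose codeword has a one in an i.i.d.\ uniformly sampled coordinate. The queries are therefore i.i.d., and each one-set is a union of bins, hence of $O(\lambda/\sigma)$ intervals.
\end{itemize}
By Markov's inequality, the true bin and its two neighbours carry mass at least $4/5$. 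The most likely of these three has population disagreement at most $0.374$, while every other bin has at least $0.392$. Hoeffding's inequality with a union bound over the $K$ bins gives $n=O(\log K+\log(1/\delta_{\rm loc}))$. Returning the projected midpoint of the decoded bin gives error at most $7.5\sigma$.
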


The moment assumption defining $\calP(k,\lambda,\sigma)$ implies
$\E[|X-\mu|]\le(\E[|X-\mu|^k])^{1/k}\le\sigma$  by Lyapunov's inequality, so the assumption $\E[|X-\mu|]\le\sigma$ is indeed satisfied in our setting.  On localization success, Minkowski's inequality gives the following moment bound required for the refinement analysis:
\begin{equation}
\label{eq:moment-transfer}
    \bigl(\E[|X-c|^k]\bigr)^{1/k}
    \le \bigl(\E[|X-\mu|^k]\bigr)^{1/k}+|\mu-c|
    \le\bar\sigma,
    \qquad\text{where }\bar\sigma=9\sigma.
\end{equation}
Because localization and refinement use disjoint samples and independent
randomness, conditional on localization success and the realized value of
$c$, the refinement analysis may treat $c\in[-\lambda,\lambda]$ and
$\E[|X-c|^k]\le\bar\sigma^k$ as fixed.

\subsection[Known-Center Refinement]
{Known-Center Refinement of \citet{lau2026order}}
\label{sec:stoquant-intuition}

To motivate our fixed-query construction, we first recall the center-dependent
refinement strategy of~\citet{lau2026order} for estimating the residual mean
$\E[X-c]=\mu-c$.  This subsection begins with a high-level overview.  The three
paragraphs thereafter respectively introduce the idea of cell decomposition, a first-moment identity supplied
by stochastic quantization, and a variance bound carrying a relevant tail factor.  Section~\ref{sec:fixed-query-difficulties} then identifies the
obstacles to preserving these properties non-adaptively, while
Sections~\ref{sec:queries-main} and~\ref{sec:decoder-after} formally define
our non-adaptive refinement queries and the corresponding decoder.

\paragraph{Overview.}
Once the localization center $c$ is known, the learner partitions a truncation window around
$c$ into cells whose widths grow with their distance from $c$, and estimates the cell contributions to the residual mean using randomized threshold queries (which we simplify below by using interval queries).  The performance guarantee relies on two properties:
each cell contribution is estimated without bias, and the variance of its
estimator decreases with the probability that $X$ reaches the distance from $c$
at which the cell lies.

\paragraph{Truncation and decomposition into cell contributions.}
Fix a localization output $c$ satisfying the conditions above.  For any finite
collection $\mathcal J$ of disjoint intervals, we have
\begin{equation}
\label{eq:cell-decomposition}
    \E\left[(X-c)\cdot\1\left\{X\in\bigcup_{J\in\mathcal J}J\right\}\right]
    =
    \sum_{J\in\mathcal J}
    \E\left[(X-c)\cdot\1\{X\in J\}\right].
\end{equation}
\citet{lau2026order} choose $\mathcal J$ to be a partition of
$[c-r,c+r]$, with $r$ large enough that the omitted tail contributes at most
$\eps/2$.  Their refinement problem therefore reduces to estimating the
\emph{cell contributions} $\theta_J \coloneqq \E\left[(X-c)\cdot\1\{X\in J\}\right]$
and summing them over the cells in the truncation window.
The partition is organized into dyadic scales.  Writing $\ell_i$ for the
scale-$i$ cell width, the widths double with $i$.  Cells near $c$ have the
finest width $\ell_1$, while, for $i\ge2$, cells of width $\ell_i$ lie at
distance $\Theta(\ell_i)$ from $c$.  The construction below specifically takes
$\ell_i=2^{i-1}\bar\sigma=\Theta(2^i\sigma)$.  Figure~\ref{fig:multiscale-cover}
later contrasts this idealized known-center geometry with a different cover selected from fixed $c$-independent grids.

\paragraph{Interval queries through stochastic quantization.}

We now consider how to estimate a single cell contribution $\theta_J$.
We describe randomized queries that yield a first-moment identity for this
purpose.  Write the cell as $J=[a,b)$, and let the \emph{left and right splits} $U^{\mathrm L}$ and $U^{\mathrm R}$ 
be uniform on $J$ and independent of $X$.  For every fixed $x\in\mathbb R$, it is a simple calculation to show that averaging over the uniform splits gives the following identity \citep[Appendix~A, Step~4]{lau2026order}; see also Lemma~\ref{lem:cell-stochastic} in Appendix~\ref{app:construction}, where this identity is stated and proved:
\begin{equation}
\label{eq:stoquant-query-main}
    (a-c) \cdot \E_{U^{\mathrm L}}
      \left[\1\{x\in[a,U^{\mathrm L})\}\right]
    +(b-c) \cdot \E_{U^{\mathrm R}}
      \left[\1\{x\in[U^{\mathrm R},b)\}\right]
    =(x-c) \cdot \1\{x\in J\},
\end{equation}
in analogy with the identity $\E[ \1\{U \le x\}] = x$ for $x \in [0,1]$ and $U \sim {\rm Unif}(0,1)$.  
Weighted empirical averages of these two indicators therefore produce an
unbiased cell estimator $\widehat\theta_J$ satisfying
$\E[\widehat\theta_J]=\theta_J$.
Summing these cell estimators over the partition estimates the truncated
residual mean in \eqref{eq:cell-decomposition}.

\paragraph{The tail factor in the variance and sample allocation.}
Unbiasedness alone does not determine how many samples are needed at each
scale.  The key additional property is that the variance of a cell estimator
decreases with the probability that $X$ reaches the corresponding cell.  
Suppose that a cell $J=[a_J,b_J)$ lies at a positive distance $L=\dist(c,J) \coloneqq
    \inf_{x\in J}|x-c| > 0$ from $c$ and
has width $\Theta(L)$.  The interval indicators
in \eqref{eq:stoquant-query-main} vanish unless $X\in J$, and hence unless
$|X-c|\ge L$.  Since the endpoint weights satisfy
$|a_J-c|,|b_J-c|=O(L)$ and $\Var(\operatorname{Ber}(p))\le p$,
independence of the samples gives
$\Var(\widehat\theta_J)
    =
    O\left(
        (L^2/n_J) \cdot\Pr(|X-c|\ge L)
    \right)
$
when $n_J$ samples are used for each query type.  Hence, the variance carries
the tail factor $\Pr(|X-c|\ge L)$, rather than merely the worst-case bound
$O(L^2/n_J)$.  We refer to this property as \emph{tail-local variance}.  Together with the
first-moment identity for $\widehat\theta_J$, it permits the order-optimal
sample allocation of~\citet{lau2026order}.

\subsection{Three Obstacles for Non-Adaptive Queries}
\label{sec:fixed-query-difficulties}
This subsection gives a conceptual roadmap for converting the known-center
refinement reviewed in Section~\ref{sec:stoquant-intuition} into a
non-adaptive construction.  That known-center method selects each cell $J$ only after $c$
has been decoded, whereas our queries must be fixed before any localization
response is observed.  Our goal is to preserve its two key properties:
(i) the cell-wise first-moment identity, which recovers each selected cell contribution
$\theta_J$ without bias, and
(ii) tail-local variance, which weights the variance by the probability that $X$ reaches the corresponding distance scale.  We identify three resulting obstacles and preview how our construction
resolves them.  Sections~\ref{sec:queries-main}
and~\ref{sec:decoder-after} formally define the queries and decoder respectively.

\begin{enumerate}[leftmargin=2.2em]
\item \emph{The relevant cells are not yet known.}  At each scale $i$, we
include in a finite bank every grid cell that the decoder might later select,
forming $\calD_i^{\rm bank}$ in
\eqref{eq:main-query-bank} below.  Once $c$ is decoded, the decoder selects finest-scale cells near $c$ and, farther away, selects cells whose widths are comparable to their distances
from $c$; see Figure~\ref{fig:multiscale-cover} and \eqref{eq:main-cover-map}.

\item \emph{A single response mixes contributions from many cells.}  For each query,
we assign independent Rademacher signs, taking values $\pm1$ with equal
probability, to the bank cells.  The sign of each cell
determines whether its candidate interval is included in the scale-$i$
query; see Figure~\ref{fig:query-filter}(a) and \eqref{eq:main-query}.
Multiplying the returned query bit by the sign of a decoder-selected cell
\emph{isolates that cell in expectation} (i.e., the ``interference'' from other cells vanishes on average); see \eqref{eq:simple-demodulation}. Combining the two weighted orientations through \eqref{eq:stoquant-query-main} then recovers its cell contribution.

\item \emph{Near-center intervals can remove the tail factor from the variance.}  Although the 
preceding step recovers each selected
cell contribution $\theta_J$ in expectation, the same random-union query may contain a candidate interval close
to $c$.  A near-center observation can then make the conditional variance of the decoded
statistic of order $\ell_i^2$, rather than the desired
$O(\ell_i^2\cdot\Pr(|X-c|\ge\ell_i))$.  The decoder restores the tail-local variance property by discarding responses whose queries include such a near-center interval and reweighting the retained
responses to preserve the first-moment identity; see Figure~\ref{fig:query-filter}(b) and
Section~\ref{sec:filter-main}.
\end{enumerate}

\subsection{The Fixed Refinement Queries}
\label{sec:queries-main}

We first give a high-level overview of the refinement query construction
and preview how the decoder will use the resulting queries after localization. 
The remaining paragraphs formally specify the fixed grids, query banks, and query law, and the final paragraph proves the basic sign-isolation identity. 
The decoder itself is formally defined in
Section~\ref{sec:decoder-after}.

\paragraph{Overview.}
At each queried scale $i$, we prepare a finite bank $\calD_i^{\rm bank}$ 
containing every cell that the decoder might later
select; see \eqref{eq:main-query-bank}.  A single scale-$i$ query draws one Rademacher sign 
for each bank cell and includes a randomly split subinterval (analogous to \eqref{eq:stoquant-query-main}) of every cell whose sign is~$+1$.  These multiscale banks replace the above-mentioned center-dependent
placement of refinement cells; each query is formed from the fixed bank
before $c$ is known, whereas the decoder later selects only the
scale-appropriate cells.
Figure~\ref{fig:query-filter}(a) illustrates one realized query, while
Figure~\ref{fig:query-filter}(b) previews the decoder overlay formalized in
Section~\ref{sec:filter-main}. Figure~\ref{fig:multiscale-cover}
previews how the decoder selects a cover from the fixed banks, and the
selection rule is formally defined in Section~\ref{sec:cover-main}.

\paragraph{Scales and fixed grids.}
As outlined in Section~\ref{sec:stoquant-intuition}, the existing known-center estimator
truncates the residual $X-c$ to a centered window and uses cells whose widths double
with their distance from $c$.  We use the same geometric sequence of widths,
but place a \emph{fixed grid} at each scale before $c$ is known.  We set
$\ell_i=2^{i-1}\bar\sigma$ as before, and again choose a truncation width so that the omitted
tail contributes at most $\eps/2$ to the mean.  Specifically, we claim that the choice
\begin{equation}
\label{eq:main-radii}
    \imax
    =
    \min\left\{
      i\ge1:
      \frac{\bar\sigma^k}{\ell_i^{k-1}}\le\frac{\eps}{2}
    \right\},
    \qquad\text{and}\qquad
    r=\ell_{\imax}. 
\end{equation}
suffices to ensure 
$\E[|X-c|\cdot\1\{|X-c|>r\}]\le
\bar\sigma^k/r^{k-1}\le\eps/2$.
Appendix~\ref{sec:conditioning} gives the derivation of this inequality (see \eqref{eq:moment-tail-bound-detail} therein).
It therefore suffices to estimate the residual $X-c$ on a suitable \emph{cover}, namely, a disjoint collection of cells whose union contains
$[c-r,c+r]$.  

For every $i\ge1$, let
$\calD_i=\{[m\ell_i,(m+1)\ell_i):m\in\mathbb Z\}$ (with $\ell_i=2^{i-1}\bar\sigma$).
These are fixed nested dyadic grids (see the gray regions in Figure~\ref{fig:multiscale-cover} below).  The scale-$(i+1)$ cell containing a
scale-$i$ cell $J$ is referred to as its \emph{parent}
$\operatorname{par}(J)$.  Only scales $i=1,\dotsc,\imax$ will be queried. 

Since $c$ is unknown in advance, we need to consider the entire set of \emph{potentially relevant cells} at each scale $i=1,\dotsc,\imax$, which we refer to as the \emph{bank} of cells.  Specifically, the bank includes all scale-$i$ cells that intersect the enlarged
range $[-\lambda-3\ell_i,\lambda+3\ell_i]$:
\begin{equation}
\label{eq:main-query-bank}
    \calD_i^{\rm bank}
    =
    \left\{
        J\in\calD_i:
        J\cap[-\lambda-3\ell_i,\lambda+3\ell_i]\ne\varnothing
    \right\}.
\end{equation}
The margin $3\ell_i$ turns out to be sufficient, since Lemma~\ref{lem:cover-main} will show that every
scale-$i$ cell $J$ later selected by the decoder intersects
$(c-3\ell_i,c+3\ell_i)$.  Since $c\in[-\lambda,\lambda]$, this implies
$J\in\calD_i^{\rm bank}$, so the bank contains every potentially selected
cell, regardless of the localization output.  We show in Appendix~\ref{sec:query-details} that the size of the bank scales as $|\calD_i^{\rm bank}|=O(\max\{1,\lambda/\ell_i\})=O(\lambda/\sigma)$. 

While using an infinite grid would preserve the same statistical identities, a realized query one-set
would then have infinitely many interval components.  In contrast, using the finite bank leads to
the $O(\lambda/\sigma)$ interval bound in Theorem~\ref{thm:main}.
Section~\ref{sec:interval-complexity} studies the more general tradeoff between the number of intervals and the sample complexity.

\paragraph{One query at a fixed scale.}
We now describe the mechanism for forming random-union queries across the entire bank, as we previewed in
Section~\ref{sec:fixed-query-difficulties}.  At scale $i$, let
$t=1,\dotsc,n_i$ index independent draws of the query law, where $n_i$ will be
specified in Section~\ref{sec:performance-main}.  For each draw $t$ and orientation
$d\in\{\mathrm L,\mathrm R\}$, we draw one \emph{common relative split}
$\zeta_{i,t}^{d}\sim\operatorname{Unif}[0,1]$ and i.i.d.~(across $J\in\calD_i^{\rm bank}$) \emph{Rademacher signs}
$\eta_{J,t}^{d}\sim\operatorname{Unif}\{-1,+1\}$.  
The relative splits are independent across $(i,t,d)$, the signs are
independent across $(J,i,t,d)$, and all signs are independent of all split
fractions.  For $J=[a_J,b_J)$, set
$U_{J,t}^{d}=a_J+\zeta_{i,t}^{d}\ell_i$ and define the following candidate subintervals in analogy with \eqref{eq:stoquant-query-main}:
\begin{equation}
\label{eq:candidate_subintervals}
    A_{J,t}^{\mathrm L}
    =
    [a_J,U_{J,t}^{\mathrm L}),
    \qquad\text{and}\qquad
    A_{J,t}^{\mathrm R}
    =
    [U_{J,t}^{\mathrm R},b_J).
\end{equation}
We will specify a collection of queries $Q_{i,t}^d$ indexed by $(i,t,d)$.  Within each such query, the same relative split
$\zeta_{i,t}^d$ is shared across all bank cells, implying that the split
points $U_{J,t}^d$ are dependent across cells, while each
$U_{J,t}^d$ remains marginally uniform on its cell $J$.\footnote{Sharing the split
fraction reduces the public randomness from one split variable per cell to
one per query.  Cross-cell independence is unnecessary (though would also suffice given the required public randomness) because stochastic
quantization is applied cell-by-cell and the resulting expectations are
combined by linearity; see Appendix~\ref{sec:stoquant-details}.}
The corresponding query is the union of candidate intervals carrying sign $+1$:
\begin{equation}
\label{eq:main-query}
    Q_{i,t}^{d}(x)
    =
    \1\left\{
        x\in
        \bigcup_{\substack{
            J\in\calD_i^{\rm bank}\\
            \eta_{J,t}^{d}=+1
        }}
        A_{J,t}^{d}
    \right\}.
\end{equation}
On an independent observation $X_{i,t}^{d}\sim P$, the 1-bit query returns
$Y_{i,t}^{d}=Q_{i,t}^{d}(X_{i,t}^{d})$.  Because there is at most one
candidate interval per bank cell, each one-set contains at most
$|\calD_i^{\rm bank}|=O(\lambda/\sigma)$ bounded intervals.

\begin{figure}[htbp]
\centering
\begin{tikzpicture}[x=0.78cm,y=0.68cm,>=Latex,font=\small]
  \def\split{0.35}

  % Panel (a): the query is fixed before the center is known.
  \node[anchor=east,align=right] at (-0.45,5.30)
    {\textbf{(a) Fixed query}\\before any response};

  \foreach \j in {0,2,4,7} {
    \fill[querylight] (\j,5.00) rectangle ({\j+\split},5.60);
    \draw[queryblue,very thick]
      (\j,5.00) rectangle ({\j+\split},5.60);
  }
  \foreach \j in {1,3,5,6,8} {
    \fill[softgray] (\j,5.00) rectangle ({\j+\split},5.60);
    \draw[gray!60,thick]
      (\j,5.00) rectangle ({\j+\split},5.60);
  }
  \foreach \j in {0,...,8} {
    \draw[gray!70] (\j,5.00) rectangle ({\j+1},5.60);
    \draw[gray!65,densely dashed]
      ({\j+\split},4.91)--({\j+\split},5.69);
  }
  \foreach \j in {0,2,4,7} {
    \node[circle,draw=queryblue,fill=querylight,inner sep=1.2pt]
      at ({\j+0.5},6.13) {$+$};
  }
  \foreach \j in {1,3,5,6,8} {
    \node[circle,draw=gray!65,fill=softgray,inner sep=1.2pt]
      at ({\j+0.5},6.13) {$-$};
  }

 \node[anchor=west,font=\scriptsize] at (9.25,5.92)
  {shared split: $\zeta_{i,t}^{\mathrm L}=0.35$};
\draw[->,gray!70] (9.14,5.82)--(8.38,5.56);

  % Inset: one magnified bank cell.
  \draw[gray!60,densely dashed,->]
    (7.35,4.91)
    .. controls (8.15,4.60) and (8.95,4.53)
    .. (9.58,4.36);

  \fill[querylight]
    (9.72,3.82) rectangle (11.40,4.42);
  \draw[gray!70,very thick]
    (9.72,3.82) rectangle (14.52,4.42);
  \draw[queryblue,very thick]
    (9.72,3.82) rectangle (11.40,4.42);
  \draw[queryblue,densely dashed,thick]
    (11.40,3.72)--(11.40,4.53);

  \node[queryblue,font=\scriptsize,anchor=south]
    at (11.40,4.57) {$U_{J_m,t}^{\mathrm L}$};
  \node[queryblue,font=\scriptsize]
    at (10.56,4.12) {$A_{J_m,t}^{\mathrm L}$};
  \node[font=\scriptsize,anchor=north]
    at (12.12,3.66)
    {$J_m=[m\ell_i,(m+1)\ell_i)$};

  % Panel (b): the same query is interpreted after localization.
  \node[anchor=east,align=right] at (-0.45,2.25)
    {\textbf{(b) Same fixed query}\\with decoder overlay};

  \foreach \j in {0,...,8} {
    \fill[softgray!60] (\j,1.95) rectangle ({\j+1},2.55);
  }
  \foreach \j in {1,2,6,7} {
    \fill[covergreen!30] (\j,1.95) rectangle ({\j+1},2.55);
    \draw[covergreen,very thick]
      (\j,1.95) rectangle ({\j+1},2.55);
  }
  \foreach \j in {3,4,5} {
    \fill[pattern=north east lines,pattern color=filterred!70]
      (\j,1.95) rectangle ({\j+1},2.55);
    \draw[filterred,thick]
      (\j,1.95) rectangle ({\j+1},2.55);
  }
  \foreach \j in {0,...,8} {
    \draw[gray!65] (\j,1.95) rectangle ({\j+1},2.55);
    \draw[gray!60,densely dashed]
      ({\j+\split},1.86)--({\j+\split},2.64);
  }
  \foreach \j in {0,2,4,7} {
    \fill[querylight] (\j,1.95) rectangle ({\j+\split},2.55);
    \draw[queryblue,very thick]
      (\j,1.95) rectangle ({\j+\split},2.55);
  }

  \node[covergreen!55!black,font=\scriptsize,anchor=south]
    at (2.00,2.72) {$\calW_i(c,r)$};
  \node[filterred!85!black,font=\scriptsize,anchor=south]
    at (4.50,2.72) {$\calF_i(c)$};
  \node[covergreen!55!black,font=\scriptsize,anchor=south]
    at (7.00,2.72) {$\calW_i(c,r)$};

  \draw[filterred,very thick] (4.50,1.80)--(4.50,2.61);
  \node[filterred,font=\scriptsize,anchor=north]
    at (4.50,1.76) {$c$};

  \draw[filterred,->,thick]
    ({4+\split/2},1.84)--({4+\split/2},1.43);
  \node[filterred,font=\scriptsize,anchor=north]
    at (6.20,1.31)
    {$J\in\calF_i(c),\ \eta_{J,t}^{\mathrm L}=+1
      \implies H_{i,t}^{\mathrm L}(c)=0
      \implies
      \text{stored response discarded}$};
\end{tikzpicture}

\caption{A fixed refinement query before and after localization.
(a) A shared split determines one candidate interval in each bank cell,
and the $+1$ signs select the intervals shown in blue for inclusion in the
query one-set.  The inset magnifies one included candidate interval.
(b) After $c$ is decoded, the decoder overlays the  query on the
green cover cells $\calW_i(c,r)$ and red-hatched filter cells
$\calF_i(c)$; see \eqref{eq:main-cover-map} and
\eqref{eq:main-filter}.  The blue candidate interval lying in a filter cell
is included in the query because $\eta_{J,t}^{\mathrm L}=+1$.
Consequently, the filter indicator $H_{i,t}^{\mathrm L}(c)$ given in
\eqref{eq:filter-indicator} equals zero, so the
decoder discards the entire stored response, regardless of the location
of $X$.}
\label{fig:query-filter}
\end{figure}

\paragraph{Isolating a cell in expectation with Rademacher signs.}
\label{sec:demodulation-intuition}

The Rademacher signs allow the decoder to isolate any specified cell from the
bank-wide response in expectation (i.e., remove ``interference'' from other cells on average).  Because the bank cells are pairwise disjoint, 
conditional on the observation and the common relative split $\zeta_{i,t}^d$, 
at most one candidate interval contains $X_{i,t}^d$.  If that
interval belongs to cell $K$, then
$Y_{i,t}^d=\1\{\eta_{K,t}^d=+1\}=(1+\eta_{K,t}^d)/2$.  This implies that the bank-wide
response can be written (in terms of the sets from \eqref{eq:candidate_subintervals} and the Rademacher variables) as
\[
    Y_{i,t}^d
    =
    \sum_{K\in\calD_i^{\rm bank}}
      \1\{X_{i,t}^d\in A_{K,t}^d\}
      \cdot\frac{1+\eta_{K,t}^d}{2}.
\]
Taking the expectation of $2Y_{i,t}^d\eta_{J,t}^d$ with respect to the signs
and using
$\E_\eta[(1+\eta_{K,t}^d) \cdot \eta_{J,t}^d]=\1\{K=J\}$ therefore gives
\begin{equation}
\label{eq:simple-demodulation}
    \E_\eta\left[
      2Y_{i,t}^{d}\cdot\eta_{J,t}^{d}
      \mid X_{i,t}^{d},\zeta_{i,t}^{d}
    \right]
    =
    \1\{X_{i,t}^{d}\in A_{J,t}^{d}\}
    \qquad\text{for every }J\in\calD_i^{\rm bank}.
\end{equation}

\subsection{Decoding After Localization}
\label{sec:decoder-after}

In this subsection, we formally define the refinement decoder.  We begin with a
high-level overview.  The subsequent three parts of this subsection develop, respectively, the cover-selection rule, the sign-based recovery identity, and the
filtering-and-reweighting rule.

\paragraph{Overview.}
All refinement bits in Section~\ref{sec:queries-main} are collected without
using $c$.  After all responses have been collected and the localization
decoder has recovered $c$, the refinement decoder performs three operations:
it selects from the prequeried banks a cover resembling the
known-center partition of Section~\ref{sec:stoquant-intuition}; 
it uses the stored Rademacher signs to isolate the selected cells in expectation; 
and it discards responses whose queries include a near-center interval.  
Together with the stochastic-quantization identity, the first two operations
yield unbiased estimates of the selected cell contributions $\theta_J$; the third restores tail-local variance.  All three operations are performed only at the
decoder and do not alter any query.

\paragraph{Selecting a geometric cover.}
\label{sec:cover-main}

The decoder approximates the center-dependent geometry of
Section~\ref{sec:stoquant-intuition} by selecting prequeried cells that form a
disjoint cover of the truncation window $[c-r,c+r]$ (recalling $r$ from \eqref{eq:main-radii}). We first describe the
selection rule informally and illustrate the resulting geometry in
Figure~\ref{fig:multiscale-cover}; equation~\eqref{eq:main-cover-map} below
then gives its formal definition.

Informally, first consider the scale-1 (i.e., narrowest) cells intersecting the truncation window.
Moving away from $c$, merge cells along the dyadic tree until one further
merge would make a cell wider than its distance from $c$.  Cells near $c$
therefore remain at the finest scale, while cells farther away form
progressively wider blocks.  This reproduces the qualitative geometry of the
centered partition from Section~\ref{sec:stoquant-intuition}, while still using only cells that were queried before $c$ was known.
Figure~\ref{fig:multiscale-cover} illustrates the fixed banks at three
representative scales, their decoder-selected overlay, and the idealized
known-center partition they mimic.  Note that since the available cells come from
grids that are fixed in advance, the selected cover is not necessarily symmetric with respect to $c$.

\begin{figure}[htbp]
\centering
\begin{tikzpicture}[x=0.265cm,y=0.61cm,>=Latex,font=\small]
  \def\cc{0.6}

  % Scale 1: the finest full grid, finite bank, and selected cells.
  \node[anchor=east,font=\scriptsize] at (-24.75,5.68)
    {$i=1$ \;($\ell_1$-cells)};
  \fill[softgray] (-24,5.50) rectangle (24,5.86);
  \fill[querylight!58] (-9,5.50) rectangle (9,5.86);
  \foreach \x in {-24,...,23} {
    \draw[gray!52,line width=0.28pt]
      (\x,5.50) rectangle ({\x+1},5.86);
  }
  \draw[queryblue,very thick] (-9,5.50) rectangle (9,5.86);
  \foreach \a in {-2,-1,0,1,2,3} {
    \fill[coverorange!32] (\a,5.50) rectangle ({\a+1},5.86);
    \draw[coverorange,very thick] (\a,5.50) rectangle ({\a+1},5.86);
  }
  \node[anchor=north,font=\scriptsize,queryblue!80!black] at (-9,6.56)
    {$-\lambda-3\ell_1$};
  \node[anchor=north,font=\scriptsize,queryblue!80!black] at (9,6.56)
    {$\lambda+3\ell_1$};

  % Scale 2: cells double in width and the bank margin grows accordingly.
  \node[anchor=east,font=\scriptsize] at (-24.75,4.58)
    {$i=2$ \;($\ell_2$-cells)};
  \fill[softgray] (-24,4.40) rectangle (24,4.76);
  \fill[querylight!58] (-12,4.40) rectangle (12,4.76);
  \foreach \x in {-24,-22,...,22} {
    \draw[gray!52,line width=0.28pt]
      (\x,4.40) rectangle ({\x+2},4.76);
  }
  \draw[queryblue,very thick] (-12,4.40) rectangle (12,4.76);
  \foreach \a/\b in {-4/-2,4/6,6/8} {
    \fill[covergreen!31] (\a,4.40) rectangle (\b,4.76);
    \draw[covergreen,very thick] (\a,4.40) rectangle (\b,4.76);
  }
  \node[anchor=north,font=\scriptsize,queryblue!80!black] at (-12,5.46)
    {$-\lambda-3\ell_2$};
  \node[anchor=north,font=\scriptsize,queryblue!80!black] at (12,5.46)
    {$\lambda+3\ell_2$};

  % Scale 3: the same fixed-grid construction at a coarser scale.
  \node[anchor=east,font=\scriptsize] at (-24.75,3.48)
    {$i=3$ \;($\ell_3$-cells)};
  \fill[softgray] (-24,3.30) rectangle (24,3.66);
  \fill[querylight!58] (-18,3.30) rectangle (18,3.66);
  \foreach \x in {-24,-20,...,20} {
    \draw[gray!52,line width=0.28pt]
      (\x,3.30) rectangle ({\x+4},3.66);
  }
  \draw[queryblue,very thick] (-18,3.30) rectangle (18,3.66);
  \foreach \a/\b in {-8/-4,8/12} {
    \fill[quantpurple!22] (\a,3.30) rectangle (\b,3.66);
    \draw[quantpurple,very thick] (\a,3.30) rectangle (\b,3.66);
  }
  \node[anchor=north,font=\scriptsize,queryblue!80!black] at (-18,4.36)
    {$-\lambda-3\ell_3$};
  \node[anchor=north,font=\scriptsize,queryblue!80!black] at (18,4.36)
    {$\lambda+3\ell_3$};

  % Overlay the selected cells to show the resulting disjoint cover.
  \node[anchor=east,align=right,font=\scriptsize] at (-24.75,2.12)
    {decoder-selected\\cover~$\calW(c,r)$};
  \foreach \a/\b in {-8/-4,8/12} {
    \fill[quantpurple!22] (\a,1.94) rectangle (\b,2.30);
    \draw[quantpurple,very thick] (\a,1.94) rectangle (\b,2.30);
  }
  \foreach \a/\b in {-4/-2,4/6,6/8} {
    \fill[covergreen!31] (\a,1.94) rectangle (\b,2.30);
    \draw[covergreen,very thick] (\a,1.94) rectangle (\b,2.30);
  }
  \foreach \a in {-2,-1,0,1,2,3} {
    \fill[coverorange!32] (\a,1.94) rectangle ({\a+1},2.30);
    \draw[coverorange,very thick] (\a,1.94) rectangle ({\a+1},2.30);
  }

  % Idealized centered geometry available when the center determines the cells.
  \node[anchor=east,align=right,font=\scriptsize] at (-24.75,1.02)
    {known-center partition};
  \fill[quantpurple!22] ({\cc-8},0.84) rectangle ({\cc-4},1.20);
  \draw[quantpurple,very thick] ({\cc-8},0.84) rectangle ({\cc-4},1.20);
  \fill[quantpurple!22] ({\cc+4},0.84) rectangle ({\cc+8},1.20);
  \draw[quantpurple,very thick] ({\cc+4},0.84) rectangle ({\cc+8},1.20);
  \fill[covergreen!31] ({\cc-4},0.84) rectangle ({\cc-2},1.20);
  \draw[covergreen,very thick] ({\cc-4},0.84) rectangle ({\cc-2},1.20);
  \fill[covergreen!31] ({\cc+2},0.84) rectangle ({\cc+4},1.20);
  \draw[covergreen,very thick] ({\cc+2},0.84) rectangle ({\cc+4},1.20);
  \foreach \a in {-2,-1,0,1} {
    \fill[coverorange!32] ({\cc+\a},0.84)
      rectangle ({\cc+\a+1},1.20);
    \draw[coverorange,very thick] ({\cc+\a},0.84)
      rectangle ({\cc+\a+1},1.20);
  }

  % Align the decoded center and truncation window across all rows.
  \draw[filterred,densely dashed,thick] (\cc,0.73)--(\cc,6.00);

  \draw[<->,gray!75,thick] ({\cc-8},0.44)--({\cc+8},0.44);
  \node[anchor=north,font=\scriptsize] at ({\cc-8},0.38) {$c-r$};
  \node[anchor=north,font=\scriptsize,filterred] at (\cc,0.38) {$c$};
  \node[anchor=north,font=\scriptsize] at ({\cc+8},0.38) {$c+r$};
\end{tikzpicture}
\caption{Fixed grids, finite query banks, and the decoder-selected cover.  
At scale $i$, the gray cells form the fixed grid $\calD_i$, and the cells
intersecting $[-\lambda-3\ell_i,\lambda+3\ell_i]$ form the finite query bank
$\calD_i^{\rm bank}$ (blue). 
After decoding $c$, the decoder selects the cover cells $\calW_i(c,r)$ 
(orange, green, and purple for $i=1,2,3$).  
The combined cover $\calW(c,r)$ consists of pairwise disjoint cells whose
union contains $[c-r,c+r]$, with fine cells near $c$ and wider cells farther
away.  Lemma~\ref{lem:cover-main} states the precise guarantees.
The bottom row illustrates the centered partition used by
\citet{lau2026order} when $c$ is known; see Section~\ref{sec:stoquant-intuition}.  The decoder-selected cover mimics this multiscale structure but need not be symmetric with respect to $c$.}
\label{fig:multiscale-cover}
\end{figure}

We now formalize the cover selection rule.  Writing $\len(J)$ for the length of an interval $J$, we call a dyadic cell
$J$ \emph{admissible} if $\len(J)\le\dist(c,J)$, and let
\begin{equation}
\label{eq:cover-intersecting-cells}
    \mathcal L(c,r)
    =
    \{B\in\calD_1:B\cap[c-r,c+r]\ne\varnothing\}
\end{equation}
be the scale-1 cells intersecting the truncation window.  The full grid
$\calD_1$ is used here only to ensure coverage; scale-1 cells far from $c$
will be replaced by coarser ancestors.  For each
$B\in\mathcal L(c,r)$, define
\begin{equation}
\label{eq:cover-selection-rule}
    J_c^\star(B)
    =
    \begin{cases}
        B,
        & \text{if $B$ is not admissible},\\
        \text{the coarsest admissible dyadic ancestor of $B$},
        & \text{if $B$ is admissible}.
    \end{cases}
\end{equation}
To see that the ancestor in the second case is well defined, note that $B$ itself is admissible.  Moreover, by~\eqref{eq:cover-intersecting-cells}, $B$ intersects $[c-r,c+r]$, and hence every ancestor $J\supseteq B$ does as well.  Hence, every admissible ancestor satisfies
\[
    \len(J)\le\dist(c,J)\le r=\ell_{\imax}.
\]
This implies that every admissible ancestor has scale at most $\imax$, and hence the
coarsest admissible ancestor exists.  For an admissible $B$, the decoder
therefore ascends the dyadic tree to its coarsest admissible ancestor, whereas a
non-admissible $B$ remains unchanged at scale~$1$.  

The distinct selected cells form the cover.  Grouping them by scale (equivalently, by width), we define
\begin{equation}
\label{eq:main-cover-map}
    \calW(c,r)
    =
    \{J_c^\star(B):B\in\mathcal L(c,r)\}
    \quad \text{and} \quad
    \calW_i(c,r)
    =
    \{J\in\calW(c,r):\len(J)=\ell_i\}.
\end{equation}
We call the members of $\calW(c,r)$ \emph{cover cells}.  Every cover cell has scale at most $\imax$ and therefore belongs to exactly one of the collections $\calW_i(c,r)$.  The geometric properties used in the analysis are summarized in the following lemma and proved in Appendix~\ref{sec:cover-details}.

\begin{lemma}[Geometry of the cover]
\label{lem:cover-main}
For every $c\in[-\lambda,\lambda]$, the cells in $\calW(c,r)$ are pairwise
disjoint and cover $[c-r,c+r]$.  For every
$i\in\{1,\ldots,\imax\}$ and every
$J=[a_J,b_J)\in\calW_i(c,r)$, we have
\[
    J\cap(c-3\ell_i,c+3\ell_i)\ne\varnothing,
    \qquad
    \max\{|a_J-c|,|b_J-c|\}<4\ell_i.
\]
Moreover,
\[
    |\calW_i(c,r)| \le 9,
    \qquad
    \calW_i(c,r)\subseteq\calD_i^{\rm bank}.
\]
 Finally, for $i\ge2$, each cell
$J\in\calW_i(c,r)$ satisfies
\[
    \ell_i\le\dist(c,J)<3\ell_i.
\]
\end{lemma}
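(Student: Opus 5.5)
The plan is to prove the cover properties directly from the selection rule \eqref{eq:cover-selection-rule} and the nested structure of the dyadic grids, in four steps.

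\textbf{Disjointness and coverage.} Every selected cell $J_c^\star(B)$ is a dyadic ancestor of $B$ (possibly $B$ itself). Since the grids are nested, two such cells are either disjoint or one contains the other. Suppose $J=J_c^\star(B)\subsetneq J'=J_c^\star(B')$. Then $J'$ is an ancestor of $B$. Also $J'$ is admissible: either $J'=B'$ is a scale-1 cell that is not admissible, which is impossible because then $J'$ could not strictly contain $J\supseteq B$, or $J'$ is an admissible ancestor. So $J'$ is an admissible ancestor of $B$ that is strictly coarser than $J$. If $B$ is admissible, this contradicts the choice of the coarsest admissible ancestor. If $B$ is not admissible, I use monotonicity: when $B\subseteq J$, we have $\dist(c,J)\le\dist(c,B)$ and $\len(J)\ge\len(B)$, so admissibility of an ancestor implies admissibility of $B$, again a contradiction. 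Coverage is then immediate, since each $B\in\mathcal L(c,r)$ lies in $J_c^\star(B)$ and the cells of $\mathcal L(c,r)$ cover $[c-r,c+r]$.

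\textbf{Distance bounds for $i\ge2$.} Take $J\in\calW_i(c,r)$ with $i\ge2$. Then $J$ is admissible, so $\dist(c,J)\ge\len(J)=\ell_i$. The upper bound comes from maximality. If $i<\imax$, the parent $\operatorname{par}(J)$ is not admissible, so $\dist(c,\operatorname{par}(J))<2\ell_i$. The parent is $J$ together with a sibling of width $\ell_i$, hence $\dist(c,J)\le\dist(c,\operatorname{par}(J))+\ell_i<3\ell_i$.

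The step I expect to be the main obstacle is the boundary case $i=\imax$, where no parent exists. There I will use that $J$ meets $[c-r,c+r]$, so $\dist(c,J)\le r=\ell_{\imax}<3\ell_i$. I should double-check whether "ancestor" is restricted to scales at most $\imax$. The admissibility argument already shows that no admissible ancestor has scale above $\imax$, so the parent argument also works for $i=\imax$ provided the parent is non-admissible. If it is not, I fall back on the $r$ bound just given.

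\textbf{Scale-1 cells.} A cell $J\in\calW_1(c,r)$ is either non-admissible, giving $\dist<\ell_1$, or admissible with a non-admissible parent, giving $\dist<3\ell_1$ as above. In all cases $\dist(c,J)<3\ell_i$, which yields $J\cap(c-3\ell_i,c+3\ell_i)\neq\varnothing$ and $\max\{|a_J-c|,|b_J-c|\}<\dist(c,J)+\ell_i<4\ell_i$. Bank membership follows from $c\in[-\lambda,\lambda]$ and the definition \eqref{eq:main-query-bank}.

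\textbf{Counting.} Every cell in $\calW_i(c,r)$ lies inside $(c-4\ell_i,c+4\ell_i)$, an interval of length $8\ell_i$. Disjoint grid cells of width $\ell_i$ fully contained in an open interval of length $8\ell_i$ number at most $8\le 9$, which gives the bound $|\calW_i(c,r)|\le 9$.
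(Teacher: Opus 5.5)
Your proof is correct and follows essentially the same route as the paper: disjointness via nestedness plus the coarsest-admissible-ancestor contradiction (with monotonicity of admissibility under supersets handling a non-admissible $B$), the bounds $\ell_i\le\dist(c,J)<3\ell_i$ from admissibility of $J$ and non-admissibility of its parent, and the endpoint and bank bounds from $\dist(c,J)<3\ell_i$. The differences are cosmetic. Your single-interval count inside $(c-4\ell_i,c+4\ell_i)$ gives the slightly sharper bound $8$ instead of the paper's $6+3$ split. Your $i=\imax$ fallback is harmless but unnecessary, since $\calD_i$ is defined for all $i\ge1$ and the parent of a scale-$\imax$ cell meeting $[c-r,c+r]$ is automatically non-admissible. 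Finally, the first inequality in $\max\{|a_J-c|,|b_J-c|\}<\dist(c,J)+\ell_i$ should be $\le$ (equality holds whenever $c\notin J$), which does not affect your strict conclusion.
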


\paragraph{Recovering cover-cell contributions with Rademacher signs.}
\label{sec:decoder-main}
For each cell $J=[a_J,b_J)\in\calW_i(c,r)$ in the cover, define the \emph{endpoint weights}
$w_J^{\mathrm L}(c)=a_J-c$ and
$w_J^{\mathrm R}(c)=b_J-c$.  To establish the first-moment identity, for each scale
$i=1,\ldots,\imax$, repetition index $t=1,\ldots,n_i$, and orientation
$d\in\{\mathrm L,\mathrm R\}$, define the following provisional decoded
variable:\footnote{This provisional variable satisfies the desired
first-moment identity but not yet the required tail-local variance bound. 
Equation~\eqref{eq:main-decoded-variable} gives the final choice of decoded variable.}
\begin{equation}
\label{eq:weighted-demodulation}
    \widetilde V_{i,t}^{d}(c)
    =
    2Y_{i,t}^{d}
    \sum_{J\in\calW_i(c,r)}
      w_J^{d}(c)\cdot\eta_{J,t}^{d}.
\end{equation}
By linearity of expectation, we may apply
\eqref{eq:simple-demodulation} to each summand in
\eqref{eq:weighted-demodulation} to obtain 
\[
    \E_\eta\left[
      \widetilde V_{i,t}^{d}(c)
      \mid X_{i,t}^{d},\zeta_{i,t}^{d}
    \right]
    =
    \sum_{J\in\calW_i(c,r)}  w_J^{d}(c)\cdot \1\{X_{i,t}^{d}\in A_{J,t}^{d}\}.
\]
Averaging next over the random split
$\zeta_{i,t}^d$ from Section~\ref{sec:queries-main} and applying the
cell-wise stochastic quantization identity
\eqref{eq:stoquant-query-main} gives
\begin{equation}
\label{eq:provisional-first-moment}
\begin{aligned}
    \E\left[
      \widetilde V_{i,t}^{\mathrm L}(c)
      +\widetilde V_{i,t}^{\mathrm R}(c)
    \right]
    =
    \sum_{J\in\calW_i(c,r)}
      \E\left[(X-c)\cdot\1\{X\in J\}\right]
    =
    \sum_{J\in\calW_i(c,r)}\theta_J,
\end{aligned}
\end{equation}
thus giving the desired sum of cell-specific means.

\paragraph{Discarding responses influenced by near-center intervals.}
\label{sec:filter-main}

For $i\ge2$, a scale-$i$ query in the centered refinement construction~\citep{lau2026order} is supported only at distance
$\Theta(\ell_i)$ from~$c$.  This property ensures that the
second moment of its decoded statistic is of order
$\ell_i^2\cdot\Pr(|X-c|\ge\ell_i)$.  Our decoder aims to preserve this scaling.
A query from the fixed bank, however, is a random union over many cells and may
include a candidate interval arbitrarily close to $c$.  The obstruction is visible by conditioning on the relative split ($\zeta_{i,t}^{d}$ from Section~\ref{sec:queries-main}).  Suppose that an observation $x$ falls in such a near-center interval.  Whether the returned bit equals one is then determined by the Rademacher sign of that cell, which is independent of all other signs.  Since the cover-cell weights are of order $\ell_i$, the conditional sign mean and variance are
\[
    \E_\eta[\widetilde V_{i,t}^{d}(c)\mid X=x,\zeta_{i,t}^d]=0,
    \qquad
    \Var_\eta(\widetilde V_{i,t}^{d}(c)\mid X=x,\zeta_{i,t}^d)
    =
    2\sum_{K\in\calW_i(c,r)}\bigl(w_K^d(c)\bigr)^2
    =
    O(\ell_i^2).
\]
Hence, although the first-moment identity is preserved, the conditional variance may lack the tail-probability factor discussed in Section~\ref{sec:stoquant-intuition}. This missing factor would make the sample allocation in Section~\ref{sec:performance-main} insufficient to obtain an order-optimal PAC guarantee.  To restore this factor, we discard a
response whenever its query includes a candidate interval from a scale-$i$ bank cell within distance $\ell_i$ of $c$; Figure~\ref{fig:query-filter}(b) illustrates this rule. At the finest scale $\ell_1$, a direct $O(\ell_1^2)$ variance bound suffices, so no filter is needed. We therefore
set
\begin{equation}
\label{eq:main-filter}
    \calF_1(c)=\varnothing,
    \qquad\text{and}\qquad
    \calF_i(c)
    =
    \left\{
        J\in\calD_i^{\rm bank}:
        \dist(c,J)<\ell_i
    \right\}
    \quad\text{for }i\ge2.
\end{equation}
For each index $t$ and orientation $d$, define
\begin{equation}
\label{eq:filter-indicator}
    H_{i,t}^{d}(c)
    =
    \1\left\{
        \eta_{J,t}^{d}=-1
        \text{ for every }J\in\calF_i(c)
    \right\},
    \qquad
    p_i(c)
    =
    \E_\eta[H_{i,t}^{d}(c)]
    =
    2^{-|\calF_i(c)|}.
\end{equation}
Lemma~\ref{lem:filter-geometry} in Appendix~\ref{sec:filter-details} shows that
$|\calF_i(c)|\le3$, and hence $p_i(c)\ge1/8$.  The value
$H_{i,t}^d(c)$ is computed from $c$ and the stored Rademacher signs, whereas
$p_i(c)$ is deterministic once $i$ and $c$ are fixed.  Hence, filtering is performed entirely by the decoder, thereby maintaining the non-adaptivity of the queries.  
The final decoded variable is obtained by filtering and reweighting the
provisional decoded variable:
\begin{equation}
\label{eq:main-decoded-variable}
    V_{i,t}^{d}(c)
    = \frac{H_{i,t}^{d}(c)}{p_i(c)}
    \cdot \widetilde V_{i,t}^{d}(c)
    = \frac{2H_{i,t}^{d}(c) \cdot Y_{i,t}^{d}}{p_i(c)}
    \cdot\sum_{J\in\calW_i(c,r)}
      w_J^d(c)\cdot\eta_{J,t}^d,
\end{equation}
where $\widetilde V_{i,t}^{d}(c)$ is the provisional decoded variable defined
in \eqref{eq:weighted-demodulation}.

The factor $H_{i,t}^d(c)/p_i(c)$ in \eqref{eq:main-decoded-variable} serves two purposes: Inverse-probability reweighting preserves the first moment, while filtering restores tail locality.  Indeed, for $i\ge2$,
\begin{equation}
\label{eq:filter-certificate-main}
    V_{i,t}^d(c)\ne0
    \implies H_{i,t}^d(c) \cdot Y_{i,t}^d=1
    \implies X_{i,t}^d\in
    \bigcup_{J\in\calD_i^{\rm bank}\setminus\calF_i(c)}J
    \implies |X_{i,t}^d-c|\ge\ell_i.
\end{equation}
Hence, the decoded variable can be nonzero only on the required tail event.
The next subsection formalizes these first-moment and variance guarantees
and sketches how they yield Theorem~\ref{thm:main}.

\subsection{Performance Guarantee and Sample Complexity}
\label{sec:performance-main}

We now state the main construction-specific guarantee and sketch how it
yields Theorem~\ref{thm:main}.
Lemma~\ref{lem:per-scale-main} below, proved in
Appendix~\ref{sec:demodulation-details}, shows that the decoded refinement
variables~$V_{i,t}^{d}(c)$ defined in
\eqref{eq:main-decoded-variable} satisfy the per-scale first-moment identity
and tail-local variance bound used by~\citet{lau2026order} in their
multiscale analysis.
Given this lemma, the truncation-bias, variance-aggregation, confidence-amplification, and sample-count arguments follow, up to constant factors and differences in notation, the proof of~\citep[Theorem~5]{lau2026order}.  Appendix~\ref{app:upper-bound} gives the complete calculations.

\begin{lemma}[Per-Scale Refinement Properties]
\label{lem:per-scale-main}
For every fixed $c\in[-\lambda,\lambda]$, scale
$i\in\{1,\ldots,\imax\}$, repetition index $t\in\{1,\ldots,n_i\}$, and
orientation $d\in\{\mathrm L,\mathrm R\}$, the decoded variables in
\eqref{eq:main-decoded-variable} satisfy the following per-scale first-moment identity and tail-local variance bound:
\begin{equation}
\label{eq:per-scale-properties}
    \E\left[V_{i,t}^{\mathrm L}(c)+V_{i,t}^{\mathrm R}(c)\right]
    =
    \sum_{J\in\calW_i(c,r)}\theta_J
    \quad \text{and} \quad
    \Var\left(V_{i,t}^{d}(c)\right)
    =O\left(\ell_i^2\cdot\tau_i(c)\right),
\end{equation}
where $\tau_1(c)=1$ and $\tau_i(c)=\Pr(|X-c|\ge\ell_i)$ for $i\ge2$.
\end{lemma}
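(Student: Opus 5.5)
Fix $c$, $i$, $t$, $d$, and drop the subscripts $i,t$ where harmless. The plan is to condition on the observation $X=X_{i,t}^d$ and the shared split $\zeta=\zeta_{i,t}^d$, and to first average over the Rademacher signs. The key structural fact is that $\calW_i(c,r)$ and $\calF_i(c)$ are disjoint subsets of $\calD_i^{\rm bank}$ for $i\ge2$. Every cover cell of scale $i\ge2$ satisfies $\dist(c,J)\ge\ell_i$ by Lemma~\ref{lem:cover-main}, whereas filter cells have $\dist(c,J)<\ell_i$. For $i=1$ the filter is empty, so $H=1$ and $p_1=1$.

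Write $Y=\sum_K \1\{X\in A_K\}(1+\eta_K)/2$ as in the derivation of \eqref{eq:simple-demodulation}. Then
\[
V=\frac{H}{p_i}\sum_{K}\1\{X\in A_K\}(1+\eta_K)\sum_{J\in\calW_i}w_J\eta_J .
\]
For the first moment, I will compute $\E_\eta[H(1+\eta_K)\eta_J]$ for $J\in\calW_i$, using independence of the signs across cells. If $K\in\calF_i$, then $H(1+\eta_K)=0$ identically, since $H=1$ forces $\eta_K=-1$. If $K\notin\calF_i$, then $H$ is independent of $(\eta_K,\eta_J)$ because $J\notin\calF_i$. Hence the expectation equals $p_i\,\E[(1+\eta_K)\eta_J]=p_i\1\{K=J\}$. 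This gives
\[
\E_\eta[V\mid X,\zeta]=\sum_{J\in\calW_i}w_J\1\{X\in A_J\},
\]
which matches the provisional identity. Averaging over $\zeta$ for each orientation, using that $U_J^d$ is marginally uniform on $J$, and summing $d=\mathrm L,\mathrm R$ via \eqref{eq:stoquant-query-main} (Lemma~\ref{lem:cell-stochastic}) yields $\sum_{J\in\calW_i}\theta_J$ by linearity.

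For the variance, I will bound the second moment $\E[V^2]$. Two facts are needed.
\begin{itemize}
\item \emph{Support.} By \eqref{eq:filter-certificate-main}, $V\ne0$ only if $X$ lies in a bank cell outside $\calF_i(c)$. For $i\ge2$ this implies $|X-c|\ge\ell_i$.
\item \emph{Conditional size.} Conditionally on $(X,\zeta)$, $|V|\le (2/p_i)\bigl|\sum_{J\in\calW_i}w_J\eta_J\bigr|$. Since the $\eta_J$ are independent and centered, $\E_\eta\bigl[(\sum_J w_J\eta_J)^2\bigr]=\sum_J w_J^2$. Lemma~\ref{lem:cover-main} gives $|w_J|<4\ell_i$ and $|\calW_i|\le 9$, so this is at most $9\cdot16\ell_i^2$.
\end{itemize}
Combining these,
\[
\E[V^2]\le \frac{4}{p_i^2}\cdot144\,\ell_i^2\cdot\Pr(V\ne0)\le 64\cdot 576\,\ell_i^2\,\tau_i(c),
\]
using $p_i\ge1/8$ from Lemma~\ref{lem:filter-geometry}, $\tau_i(c)=\Pr(|X-c|\ge\ell_i)$ for $i\ge2$, and $\tau_1=1$.

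The main obstacle is making the support step rigorous while keeping the tail indicator inside the expectation. The bound $\E_\eta[\cdot]\le144\ell_i^2$ must be taken on the event $\{|X-c|\ge\ell_i\}$, with $V=0$ off this event. Since that event depends only on $X$, I will write
\[
V^2=V^2\,\1\{|X-c|\ge\ell_i\}\le \frac{4}{p_i^2}\Bigl(\sum_J w_J\eta_J\Bigr)^2\1\{|X-c|\ge\ell_i\}.
\]
Taking expectations then factorizes by independence of $\eta$ from $X$. This avoids any difficulty from $H$ being correlated with the $\eta_J$. Here $H\le1$ can simply be dropped: $H$ depends only on filter signs, which are disjoint from the cover signs, but the bound does not even need this.
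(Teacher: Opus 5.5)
Your proposal is correct and follows the paper's proof. The first moment is the filtered sign-isolation computation of Lemma~\ref{lem:filtered-demodulation}: disjointness $\calF_i(c)\cap\calW_i(c,r)=\varnothing$ makes $H$ independent of the relevant signs, and Lemma~\ref{lem:cell-stochastic} then finishes it. For the second moment, you use the cruder pointwise bound $V^2\le (4/p_i^2)S^2\,\1\{|X-c|\ge\ell_i\}$, as in the paper's proof sketch, rather than the exact identity \eqref{eq:filtered-second-identity-abstract}; this only worsens the constant from $2/p_i$ to $4/p_i^2$. Your intermediate display with $\Pr(V\ne0)$ is not justified by factorization as written, but the final version with the $X$-measurable indicator is.
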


We sketch the proof here; Appendix~\ref{sec:demodulation-details} gives
the complete argument.

\begin{proof}[Proof sketch of Lemma~\ref{lem:per-scale-main}]
Using the disjointness of the filter and cover cells, the calculation in Appendix~\ref{sec:demodulation-details} shows that filtering and inverse-probability reweighting preserve the conditional first moment.  Together with~\eqref{eq:provisional-first-moment}, this gives the first identity in~\eqref{eq:per-scale-properties}.
For the variance bound, $p_i(c)\ge1/8$ and Lemma~\ref{lem:cover-main} give $|V_{i,t}^d(c)|=O(\ell_i)$.
Equation~\eqref{eq:filter-certificate-main} restricts every non-zero value
to the event $\{|X_{i,t}^d-c|\ge\ell_i\}$ for $i\ge2$, while $\tau_1(c)=1$ handles the
base scale.  Hence,
$\Var\big(V_{i,t}^d(c)\big)
    \le \E\big[\big(V_{i,t}^d(c)\big)^2\big]
    = O\big(\ell_i^2 \cdot \tau_i(c)\big).
$
\end{proof}

We next sketch how these per-scale guarantees yield
Theorem~\ref{thm:main}; Appendix~\ref{app:upper-bound} gives the complete
calculations.

\paragraph{Aggregation across scales.}
With Lemma~\ref{lem:per-scale-main} in hand, we use, up to constant factors,
the same scale-wise geometric allocation as \citep{lau2026order}:
\begin{equation}
\label{eq:main-ni}
    n_i
    =
    \left\lceil
      C_1\frac{\bar\sigma^2}{\eps^2}
      \cdot 2^{(i-1)(2-k)}
    \right\rceil,
\end{equation}
where $C_1$ is a sufficiently large universal constant.  This allocation balances the
increasing $O(\ell_i^2)$ squared endpoint weights against the decreasing tail probability $O((\bar\sigma/\ell_i)^k)$ of reaching scale $i$.
An initial ``constant-probability confidence'' estimate of the residual mean can be formed as
\begin{equation}
\label{eq:base-estimator-main}
    \widehat\theta_{\rm base}(c)
    =
    \sum_{i=1}^{\imax}
    \frac{1}{n_i}
    \sum_{t=1}^{n_i}
    \left(
        V_{i,t}^{\mathrm L}(c)
        +
        V_{i,t}^{\mathrm R}(c)
    \right).
\end{equation}

Condition on successful localization and on the realized center $c$, and write
$\calU(c,r)=\bigcup_{J\in\calW(c,r)}J$.  The per-scale first-moment identity in
\eqref{eq:per-scale-properties} and disjointness of the cover give
\begin{equation}
\label{eq:base-first-moment}
    \E[\widehat\theta_{\rm base}(c)]
    =
    \E[(X-c) \cdot \1\{X\in\calU(c,r)\}].
\end{equation}
Since $[c-r,c+r]\subseteq\calU(c,r)$, the omitted contribution satisfies
\begin{equation}
\label{eq:base-bias}
    \left|
      \E[\widehat\theta_{\rm base}(c)]-(\mu-c)
    \right|
    \le
    \E\left[|X-c| \cdot \1\{|X-c|>r\}\right]  
    \le
    \frac{\E[|X-c|^k]}{r^{k-1}}
    \le
    \frac{\bar\sigma^k}{r^{k-1}}
    \le\frac{\eps}{2},
\end{equation}
where the second-last inequality uses the transferred moment bound
\eqref{eq:moment-transfer}, and the final inequality follows from the choice
of $r$ in \eqref{eq:main-radii}.
Independence of samples across $(i,t)$ and the tail-local variance bound in
\eqref{eq:per-scale-properties}, together with~\eqref{eq:main-ni}, give
\begin{equation}
\label{eq:base-variance-main}
    \Var(\widehat\theta_{\rm base}(c))
    =
    O\left(
      \sum_{i=1}^{\imax}
      \frac{\ell_i^2}{n_i}\tau_i(c)
    \right)=
    O\left(
      \frac{\eps^2}{C_1}
      \sum_{i=1}^{\imax}
      2^{k(i-1)}\tau_i(c)
    \right)
    =
    O\left(\frac{\eps^2}{C_1}\right).
\end{equation}
The last step uses the same dyadic moment-tail bound proved by
\citet{lau2026order}; see~\eqref{eq:dyadic-tail-sum} in Appendix~\ref{sec:tail-variance} for more details.  Choosing $C_1$ sufficiently large and applying
Chebyshev's inequality, together with the bias bound~\eqref{eq:base-bias}, establishes that
$
    \Pr\big(
      |\widehat\theta_{\rm base}(c)-(\mu-c)|\le\eps
      \,\big|\,
      |c-\mu|\le8\sigma
    \big)
    \ge 2/3
$ (or similarly with $2/3$ replaced by any fixed constant in $(1/2,1)$).

\paragraph{High-probability final guarantee.}
Set $\delta_{\rm loc}=\delta_{\rm ref}=\delta/2$.  Repeat the base estimate
independently an odd $K=O(\log(1/\delta_{\rm ref}))$ times, and let $\widehat\theta$ be their
median.  Standard median amplification gives
$\Pr\big(
      |\widehat\theta-(\mu-c)|>\eps
      \,\big|\,
      |c-\mu|\le8\sigma
    \big)
    \le\delta_{\rm ref}.
$
Consequently, the final estimate
$\widehat\mu=c+\widehat\theta$ is $(\eps,\delta)$-PAC after combining the
localization and refinement failure probabilities.

\paragraph{Sample complexity.}
One base estimate uses $2\sum_{i=1}^{\imax}n_i$ refinement queries, where the
factor two corresponds to the two orientations $d\in\{\mathrm L,\mathrm R\}$. 
Since $\bar\sigma=9\sigma$, the cutoff
$\imax$ in \eqref{eq:main-radii} and the allocation in
\eqref{eq:main-ni} have the same orders as their counterparts in
\citep{lau2026order}.  The geometric sum therefore has the same three
regimes as theirs: it is dominated by $i=1$ when $k>2$, receives an equal-order
contribution from each of $O(\log(\sigma/\eps))$ scales when $k=2$, and is
dominated by $i=\imax$ when $1<k<2$.  Multiplying by the
$O(\log(1/\delta))$ median-amplification factor gives the three refinement
terms in \eqref{eq:main-rate-compact}.  Localization contributes
$O\left(\log(\lambda/\sigma) +\log (1/\delta) \right)$
additional queries.  Its confidence term is absorbed by every refinement
term because $\eps<\sigma$.  This proves the stated sample bound.

Appendices~\ref{sec:conditioning}--\ref{sec:sample-complexity} provide the
complete bias, variance, amplification, and geometric-series calculations.
Moreover, Appendix~\ref{app:localization-iid} converts the fixed query counts into one
i.i.d.~query law without changing the sample order or interval bound.  This
completes the proof of Theorem~\ref{thm:main}.

\section{The Sample--Interval Tradeoff} 
\label{sec:interval-complexity}

The construction in Section~\ref{sec:construction} uses
$O(\lambda/\sigma)$ intervals per query, whereas a fully sequential protocol from
\citep{lau2026order} (but not the two-stage one) uses only threshold queries.  On the other hand, \citet[Theorem~11]{lau2026order} showed that restricting a \emph{non-adaptive strategy} to one interval per query must incur an additional sample-complexity cost of order $(\lambda \sigma)/ \eps^2 \cdot \log(1/\delta)$, notably having linear dependence on $\lambda$ rather than logarithmic. 
This motivates us to study how this penalty decreases when each non-adaptive query may use up to~$s$ intervals, for general $s$. 
As we will see in the proof sketch of Theorem~\ref{thm:interval-complexity}, extending the lower bound and the refinement construction to general $s$ is comparatively direct, whereas localization is more delicate and requires a different construction.

A query $Q$ is \emph{$s$-interval} if its one-set $Q^{-1}(1)$ is a union of
at most $s$ intervals.\footnote{Defining interval complexity using
$Q^{-1}(0)=\R\setminus Q^{-1}(1)$ changes the number of components by at
most one, so the choice of output label does not affect our order-wise results.}
The intervals may be open, closed, half-open, or unbounded.  We call the
number of connected components of $Q^{-1}(1)$ the query's
\emph{interval complexity}.  In particular, threshold queries and interval
queries are both 1-interval.
Let $n_s^\star(k,\lambda,\sigma,\eps,\delta)$ denote the minimax sample
complexity among non-adaptive 1-bit estimators over
$\calP(k,\lambda,\sigma)$ whose queries are almost surely $s$-interval.
Subsequently, we use the terminology \emph{unrestricted} to remove (only) the interval-budget
constraint; the 1-bit and non-adaptive restrictions still apply.  
For $u\ge1$, define the refinement-rate function
\begin{equation}
\label{eq:rate-functional}
    \mathfrak R_k(u,\delta)
    =
    \begin{cases}
    u^2\cdot\log(1/\delta),& k>2,\\[0.2em]
    u^2\cdot\log u\cdot\log(1/\delta),& k=2,\\[0.2em]
    u^{k/(k-1)}\cdot\log(1/\delta),&1<k<2,
    \end{cases}
\end{equation}
and observe that $\mathfrak R_k(\sigma/\eps,\delta)$ is the refinement term in
\eqref{eq:main-rate-compact}.

\begin{theorem}[Minimax Sample--Interval Tradeoff for Non-Adaptive
1-Bit Mean Estimation]
\label{thm:interval-complexity}
Fix $k>1$. For sufficiently small constants $c_k,\delta_k>0$, suppose
that $\lambda\ge\sigma>0$, $\eps\in(0,c_k\sigma)$,
$\delta\in(0,\delta_k)$, and $s\ge1$ is an integer. Then
\begin{equation}
\label{eq:full-s-tradeoff}
n_s^\star(k,\lambda,\sigma,\eps,\delta)
=
\Theta_k\left(
\log\frac{\lambda}{\sigma}
+\mathfrak R_k(\sigma/\eps,\delta)
+\left(\frac{1}{s}\right) \cdot \left(\frac{\lambda}{\sigma} \right) \cdot
 \left(\frac{\sigma}{\eps}\right)^2
 \log\frac1\delta
\right).
\end{equation}
The upper bound is attained by an estimator whose query functions are
i.i.d.~and whose one-sets almost surely have at most $s$ interval
components. 
\end{theorem}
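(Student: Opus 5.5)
The plan is to prove the two directions of \eqref{eq:full-s-tradeoff} separately. The lower bound combines existing adaptive lower bounds with a counting argument over interval endpoints. The upper bound is a block-restricted version of the construction in Section~\ref{sec:construction}, plus a new localization scheme.

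\emph{Lower bound.} Every non-adaptive $s$-interval protocol is in particular an adaptive 1-bit protocol. Hence the terms $\log(\lambda/\sigma)$ and $\mathfrak R_k(\sigma/\eps,\delta)$ follow directly from the adaptive lower bounds \citep[Theorem~9]{lau2026order}, valid for $\eps<c_k\sigma$ and $\delta<\delta_k$. For the $s$-dependent term I will generalize the argument of \citet[Theorem~11]{lau2026order}.
\begin{enumerate}
\item Fix a base distribution localized at scale $\sigma$, and form $M=\Theta(\lambda/\sigma)$ pairs of hypotheses $(P_m^+,P_m^-)$. These are translates to centers $x_m\in[-\lambda,\lambda]$ spaced $\Theta(\sigma)$ apart.
\item Within each pair, the two distributions have means differing by $2\eps$ and both lie in $\calP(k,\lambda,\sigma)$. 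Crucially, they differ only by a mass transfer confined to a window $I_m$ of width $O(\sigma)$, with equal total mass.
\item Any query's bit then has identical law under $P_m^\pm$ unless some endpoint of its one-set lies in $I_m$. When an endpoint does lie in $I_m$, the per-sample KL divergence is $O(\eps^2/\sigma^2)$.
\item A query that is $s$-interval has at most $2s$ endpoints, which meet at most $O(s)$ of the disjoint windows $I_m$. Averaging over $m$ uniformly, some $m$ has expected number of informative queries at most $O(ns/M)$.
\item The total KL divergence between $P_m^+$ and $P_m^-$ is then $O((ns\sigma/\lambda)(\eps/\sigma)^2)$. The standard two-point bound requires this to be $\Omega(\log(1/\delta))$, giving $n=\Omega((\lambda/(s\sigma))(\sigma/\eps)^2\log(1/\delta))$.
\end{enumerate}
Public randomness is handled by averaging the same bound over the query law. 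The three lower bounds are then summed up to constants.

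\emph{Upper bound, refinement.} I will keep the scale-$i$ banks, signs, cover and filter of Section~\ref{sec:construction}, but restrict each query to a block of consecutive bank cells.
\begin{enumerate}
\item Partition each $\calD_i^{\rm bank}$ into contiguous blocks of $\lfloor s/2\rfloor$ cells, using two interleaved partitions offset by half a block. Each query picks one block uniformly from a uniformly chosen partition, and includes only the signed candidate intervals of cells in that block. Its one-set then has at most $s$ components.
\item By Lemma~\ref{lem:cover-main}, the cover cells $\calW_i(c,r)$ together with the filter cells $\calF_i(c)$ lie among $O(1)$ consecutive cells. Hence, once $s$ exceeds a constant, they fit inside a single block of one of the two partitions, which is hit with probability $q_i=\Omega(\min\{1,s\ell_i/\lambda\})$. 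Small $s$ is covered by taking blocks of $O(1)$ cells, which is allowed since the constant is absorbed.
\item The decoder keeps only queries whose block contains all these cells and reweights by $1/q_i$. The first-moment identity of Lemma~\ref{lem:per-scale-main} is preserved, and the variance becomes $O(\ell_i^2\tau_i(c)/q_i)$.
\item Replacing $n_i$ by $n_i/q_i$ in \eqref{eq:main-ni} restores \eqref{eq:base-variance-main}. The additional cost is $\sum_i(\lambda/(s\ell_i))(\bar\sigma^2/\eps^2)2^{(i-1)(2-k)}$, which is geometric in $i$ for every $k>1$ and dominated by $i=1$, giving $O(\lambda\sigma/(s\eps^2))$.
\item Median amplification then contributes the extra factor $\log(1/\delta)$.
\end{enumerate}

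\emph{Upper bound, localization (main obstacle).} Lemma~\ref{lem:LS-localization} uses $O(\lambda/\sigma)$ intervals, so a different scheme is needed. The target cost is $O(\log(\lambda/\sigma)+(\lambda/(s\sigma))\log(1/\delta))$, which is absorbed by the stated rate because $\eps<\sigma$. The plan splits the resolution levels into two regimes.
\begin{enumerate}
\item \emph{Coarse levels.} Run the bisection-type localization of \citet{lau2026order} only down to cell width $w\approx\lambda/s$. Each query then needs at most $O(s)$ intervals, since a digit-indicator over $O(s)$ cells is a union of $O(s)$ intervals. This costs $O(\log s+\log(1/\delta))$ queries.
\item \emph{Fine levels.} Refine from width $w$ down to $8\sigma$ with the same digit queries, each restricted to a randomly chosen block of $s$ consecutive cells, as in the refinement step. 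A query is informative only when its block contains the coarse estimate's neighbourhood, which happens with probability $\Omega(s\sigma/\lambda)$ per level.
\end{enumerate}
The delicate part is to avoid paying this $\lambda/(s\sigma)$ factor at each of the $O(\log(\lambda/\sigma))$ fine levels, and to handle errors of the coarse stage. I will try to do both by noting that the fine stage only needs $O(\log(1/\delta))$ effective samples in total. Specifically, I plan to use a majority-vote at the finest level over a block chosen to contain the coarse window, rather than per-level repetition, with confidence split across levels geometrically. Finally, mixing all query types with fixed probabilities yields an i.i.d.\ law, as in Appendix~\ref{app:localization-iid}.
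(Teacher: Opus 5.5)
Your lower bound follows the paper's argument: each $s$-interval query has at most $2s$ boundary points, so it is informative for at most $2s$ of the disjoint hard segments, and KL tensorization with Bretagnolle--Huber, averaged over the public randomness, finishes it. Your randomized-block refinement is a valid variant of the paper's grouping once $s$ exceeds a constant.

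\textbf{The genuine gap is the fine stage of localization.} You leave it as something you ``will try'' to do, and the mechanism you name cannot be implemented within the budget. A block of at most $s$ cells of width $\Theta(\sigma)$ has width $O(s\sigma)$, so it cannot contain the coarse window of width $\Theta(\lambda/s)$ when $s^2\sigma\ll\lambda$. A digit pattern resolving that window at scale $\sigma$ would instead need $\Theta(\lambda/(s\sigma))$ intervals. For the same reason, in that regime the event that the block contains the coarse estimate's neighbourhood has probability zero at fine levels, not $\Omega(s\sigma/\lambda)$.

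The paper instead uses a block--offset decomposition.
\begin{itemize}
\item Take at most $s$ coarse blocks $[b_q,b_q+L)$ with $L=\Theta(\max\{\sigma,\lambda/s\})$. The offset query $\1\{x\in\bigcup_q[b_q,b_q+T]\}$, with one common $T\sim\operatorname{Unif}[0,L]$, spends one interval per block. It acts as an ordinary random-threshold query inside whichever block contains $\mu$, without that block being known.
\item Lemma~\ref{lem:random-threshold-localization} then recovers the offset to $O(\sigma)$ from $O((L/\sigma)\log(1/\delta))=O(\max\{1,\lambda/(s\sigma)\}\log(1/\delta))$ queries.
\item The block index is decoded by coding from $O(\log s+\log(1/\delta))$ queries.
\item Three partitions shifted by $L/3$, combined by a median, remove the instability when $\mu$ is near a block boundary. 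This plays the role of the coarse-stage errors you flag but do not resolve.
\end{itemize}

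Your own multilevel scheme can also be repaired by re-centering. At level $j$ (cells of width $w_j$), require the random block to contain only the $O(1)$ level-$j$ cells around the previous estimate $\widehat c_{j-1}$, not the coarse window. The hit probability is then $\Omega(sw_j/\lambda)$, so the level-$j$ cost $O((\lambda/(sw_j))\log(1/\delta_j))$ decays geometrically toward coarse levels. With $\delta_j=\delta 2^{-(J-j)-1}$ the total is $O((\lambda/(s\sigma))\log(1/\delta))$, so you do not pay $\lambda/(s\sigma)$ at every level.

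\textbf{Second, your small-$s$ patch in the refinement is not admissible.} ``Blocks of $O(1)$ cells, since the constant is absorbed'' ignores that the interval budget is a hard per-query constraint. For $s=1$, each query may contain a single candidate interval, and the roughly ten cover and filter cells can never share one block. The fix is the paper's: drop the containment requirement, decode every group with the cover and filter restricted to it, $\calW_i(c,r)\cap\calG_{i,g}$ and $\calF_i(c)\cap\calG_{i,g}$, and sum over groups. This works for every $s\ge1$ because:
\begin{itemize}
\item Lemma~\ref{lem:filtered-demodulation} applies groupwise, and the groupwise first moments sum to $\sum_{J\in\calW_i(c,r)}\theta_J$.
\item Every retained positive response still comes from a cell outside $\calF_i(c)$, hence certifies $|X-c|\ge\ell_i$.
\item The group second moments add up to $O(\ell_i^2\tau_i(c))$.
\end{itemize}
Your localization likewise needs separate treatment for small $s$. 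The paper uses plain random thresholds for $s\le 8$, at cost $O((\lambda/\sigma)\log(1/\delta))$.
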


The first two terms in \eqref{eq:full-s-tradeoff} are the localization and refinement costs without an interval restriction.  The final term is the
additional cost of the $s$-interval constraint.  For $s=1$, this recovers the interval-dependent cost from~\citep[Theorem~11]{lau2026order}, which can be viewed as the cost of treating
$\Theta(\lambda/\sigma)$ possible $\sigma$-scale locations separately,
with $O((\sigma/\eps)^2\log(1/\delta))$ samples per location.  Allowing $s$
intervals reduces this cost by a factor of $s$, until it is absorbed by the
unrestricted rate.  Balancing this interval-dependent cost against the unrestricted rate yields
the following corollary.

\begin{corollary}[Minimum Interval Budget at the Unrestricted 1-Bit Sample Rate]
\label{cor:optimal-components}
Fix $k>1$.  Let $c_k,\delta_k>0$ be as in Theorem~\ref{thm:interval-complexity}, and suppose that $\lambda\ge\sigma>0$, $\eps\in(0,c_k\sigma)$, and
$\delta\in(0,\delta_k)$. 
Let $N_k^{\rm unres}=\log(\lambda/\sigma)
+ \mathfrak R_k(\sigma/\eps,\delta)$ denote the unrestricted 1-bit minimax sample complexity
expression, and define
\begin{equation}
\label{eq:global-sopt-main}
    s_k^{\rm opt}
    =
    \left\lceil
      \max\left\{
        1,
        \frac{
          (\lambda\sigma/\eps^2)\cdot\log(1/\delta)
        }{N_k^{\rm unres}}
      \right\}
    \right\rceil.
\end{equation}
There is a non-adaptive estimator that is $(\eps,\delta)$-PAC over
$\calP(k,\lambda,\sigma)$, uses $O_k(N_k^{\rm unres})$ samples, and has
i.i.d.~queries that are almost surely $s_k^{\rm opt}$-interval. Moreover, any
non-adaptive estimator using only $s$-interval queries that is
$(\eps,\delta)$-PAC over $\calP(k,\lambda,\sigma)$ and uses
$O_k(N_k^{\rm unres})$ samples must have
$s=\Omega_k(s_k^{\rm opt})$.
\end{corollary}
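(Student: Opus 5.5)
The corollary follows from Theorem~\ref{thm:interval-complexity} by balancing its interval-dependent term against $N_k^{\rm unres}$. I will assume the theorem and handle the two directions separately. Throughout, write
\[
T=(\lambda\sigma/\eps^2)\log(1/\delta)=(\lambda/\sigma)(\sigma/\eps)^2\log(1/\delta),
\]
so that the right-hand side of \eqref{eq:full-s-tradeoff} is $\Theta_k(N_k^{\rm unres}+T/s)$.

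\textbf{Upper bound.} Apply the achievability part of Theorem~\ref{thm:interval-complexity} with $s=s_k^{\rm opt}$. This gives an $(\eps,\delta)$-PAC estimator whose i.i.d.\ queries are almost surely $s_k^{\rm opt}$-interval, using $O_k(N_k^{\rm unres}+T/s_k^{\rm opt})$ samples. By \eqref{eq:global-sopt-main}, $s_k^{\rm opt}\ge T/N_k^{\rm unres}$, so $T/s_k^{\rm opt}\le N_k^{\rm unres}$. The sample count is therefore $O_k(N_k^{\rm unres})$.

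\textbf{Lower bound.} Suppose an $(\eps,\delta)$-PAC non-adaptive estimator uses $s$-interval queries and $n\le C N_k^{\rm unres}$ samples, for some constant $C$ depending on $k$. Then $n\ge n_s^\star$. By the lower-bound half of Theorem~\ref{thm:interval-complexity},
\[
n_s^\star\ge c'_k\,(N_k^{\rm unres}+T/s)\ge c'_k\,T/s .
\]
Hence $s\ge (c'_k/C)\,T/N_k^{\rm unres}$. Since also $s\ge1$, we get
\[
s\ge \min\{1,c'_k/C\}\cdot\max\{1,T/N_k^{\rm unres}\}.
\]
The ceiling in \eqref{eq:global-sopt-main} changes $\max\{1,T/N_k^{\rm unres}\}$ by at most a factor of $2$, since that quantity is at least $1$. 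This gives $s=\Omega_k(s_k^{\rm opt})$.

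\textbf{Main point of care.} The only subtlety is making sure the $\Theta_k$ in Theorem~\ref{thm:interval-complexity} holds uniformly in $s$, i.e.\ its constants do not depend on $s$. The statement fixes only $k$ and lets $s$ range over all integers, so I read the constants as depending on $k$ alone. The lower bound also needs to apply with the same parameter restrictions ($\eps<c_k\sigma$, $\delta<\delta_k$), and these are exactly the hypotheses of the corollary. Beyond that, the argument is direct algebra, and I expect no real obstacle.
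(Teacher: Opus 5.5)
Your proposal is correct and follows the same route as the paper: apply the achievability half of Theorem~\ref{thm:interval-complexity} with $s=s_k^{\rm opt}$, and for the converse combine its lower-bound half, which forces $(\lambda\sigma/(s\eps^2))\log(1/\delta)=O_k(N_k^{\rm unres})$, with $s\ge1$. You also spell out two details the paper leaves implicit: the ceiling costs at most a factor of $2$, and the $\Theta_k$ constants must be uniform in $s$.
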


% \begin{proof}
% By the definition of $s_k^{\rm opt}$, the interval-dependent term in
% \eqref{eq:full-s-tradeoff} is $O(N_k^{\rm unres})$ when
% $s=s_k^{\rm opt}$.  The upper bound and i.i.d. assertion in
% Theorem~\ref{thm:interval-complexity} therefore give the stated estimator. 
% For the converse, the lower bound in Theorem~\ref{thm:interval-complexity} requires $(\lambda\sigma)/(s\eps^2) \cdot \log(1/\delta) =O_{k}(N_k^{\rm unres})$.
% Combining this inequality with $s\ge1$ yields $s=\Omega_k(s_k^{\rm opt})$.
% \end{proof}

We next give a proof sketch of Theorem~\ref{thm:interval-complexity}.  
The formal details are given in Appendix~\ref{app:interval-complexity}, and the conversion to a single i.i.d.~query law is given in Appendix~\ref{app:localization-iid}.

\begin{proof}[Proof sketch of Theorem~\ref{thm:interval-complexity}]
The proof has three parts.  For the lower bound, the only new step compared 
to~\citep[Theorem~11]{lau2026order} is to count how many hard locations an~$s$-interval 
query can distinguish.  For the upper bound, we group the refinement queries to 
respect the interval budget, whereas localization requires a separate $s$-interval construction.

\emph{Lower bound.}
The unrestricted localization and refinement terms follow from the
unrestricted lower bound \citep[Theorem~9]{lau2026order}.  For the interval-dependent term, we use the hard family from \citep[Theorem~11]{lau2026order}, which places two point masses at each of
$N=\Theta_k(\lambda/\sigma)$ locations with pairwise disjoint support
segments.  Conditioning on the protocol randomness, let $n_j$ be the number
of queries that separate the two support points at location $j$. An $s$-interval one-set has at most $2s$ boundary points.  Separating a pair
requires a boundary point in its support segment, so the disjointness of the
segments implies that each query separates at most $2s$ pairs.  Hence,
$\sum_{j=1}^N n_j\le 2sn$.  Each informative response
contributes at most $O_k(\eps^2/\sigma^2)$ KL divergence, while every other
response contributes zero.  Repeating the testing and averaging argument of
\citep[Theorem~11]{lau2026order} with $2sn$ in place of $2n$ gives
$n = \Omega_k\left( (\lambda\sigma)/(s\eps^2) \cdot \log(1/\delta) \right)$.

\emph{Refinement.}
At each scale $i$, we partition the bank $\calD_i^{\rm bank}$ in \eqref{eq:main-query-bank} into
$G_i=O(\max\{1,\lambda/(s\ell_i)\})$ groups of at most $s$ cells and apply
the refinement construction of Section~\ref{sec:construction} separately to
each group using independent samples.  The resulting queries are $s$-interval.  
The groupwise expectations sum to the original first-moment
target, while independence makes their variances add and preserve the same
tail-local bound.  Hence, the analysis of
Section~\ref{sec:performance-main} applies with $G_i n_i$ queries at scale
$i$.  Summing over the scales and amplifying yields $O_k(\mathfrak R_k(\sigma/\eps,\delta)
+(\lambda\sigma/(s\eps^2))\log(1/\delta))$ refinement samples.  

\emph{Localization.}
Grouping the localization queries of Section~\ref{sec:localization-main} would introduce
an extraneous term of order~$(\lambda/(s\sigma)) \cdot \log(\lambda/\sigma)$, so we instead use a different localization construction.
For $s\le8$, ordinary random-threshold queries suffice; this case is
analyzed using Lemma~\ref{lem:random-threshold-localization} in
Appendix~\ref{app:sharp-localization}.
For $s>8$, we first describe the construction for a single block partition
and then explain how to avoid a boundary instability issue.
Fix in advance a partition of $[-\lambda,\lambda]$ into at most $s$ blocks of width
$L=\Theta(\max\{\sigma,\lambda/s\})$.  For this partition, precommit to two query families.
The first identifies the block index using the coding mechanism of
Lemma~\ref{lem:LS-localization}.  The second places the same random threshold
relative to the left endpoint of every block, allowing the decoder to
estimate the within-block offset without knowing the relevant block in
advance.  The decoder recovers the block index and offset separately and
combines them.
Both query families are $s$-interval: a block-index query selects a union of
whole blocks, while an offset query selects at most one prefix interval from
each block.  

While the preceding decomposition conveys the key idea, it can be unstable when the mean lies near
a block boundary.  To remove this instability, we precommit to both query
families for each of three suitably shifted partitions.  Every mean lies safely inside a
block for at least two shifts, so taking the median of the three
reconstructions removes the boundary instability.
Appendix~\ref{app:sharp-localization} shows that the resulting localizer uses $O(\log (\lambda/\sigma) +\max\{1, \lambda/(\sigma s)\} \cdot \log(1/\delta))$ samples. Since $\eps<\sigma$, the second term is absorbed by the refinement terms above, completing the upper bound.
\end{proof}

\section{Conclusion}
\label{sec:conclusion}

Theorem~\ref{thm:main} resolves the main motivating question posed in
\citep{lau2026open}: for every $k>1$, a non-adaptive estimator with
i.i.d.~queries attains the adaptive 1-bit minimax optimal sample rate in
the relevant parameter regimes.  All queries are fixed before any response; only the decoder waits
for the localization output before interpreting the refinement bits.

Theorem~\ref{thm:interval-complexity} then establishes the minimax-optimal tradeoff between the number of samples $n$ and the maximum number of intervals per query $s$.  Relative to unrestricted non-adaptive querying, the
constraint adds $(\lambda\sigma/(s\eps^2))\log(1/\delta)$ samples, up to
constants depending on $k$ and the fixed failure-probability cap.
Corollary~\ref{cor:optimal-components} identifies, up to constants depending only on $k$, the minimum interval budget that preserves the unrestricted 1-bit sample rate.
By contrast, a fully sequential protocol attains this rate using only threshold queries \citep{lau2026order}, so
the penalty comes from combining non-adaptivity with a limited interval budget.

As noted in \citep{lau2026order}, several open problems still remain including settings where $(\sigma,\eps)$ is unknown to the learner and multivariate settings.  Our results in Section~\ref{sec:interval-complexity} also raise the possibility of studying a more general tradeoff between samples, intervals, \emph{and rounds of adaptivity}.  Finally, a broader direction is to understand when adaptivity helps under quantization in other statistical problems.

\section*{Declaration of AI Usage}

The authors determined the overall research direction and led the development of the non-adaptive 1-bit query design, estimator, and the proof strategy. AI models (initially ChatGPT 5.5 Pro and later ChatGPT-5.6 Sol Pro) were used to explore mathematical arguments, draft proof sketches for several intermediate results, and assist with planning and drafting parts of the exposition. Some technical ideas used in the final analysis arose from this exploration, including Rademacher-sign averaging. All AI-generated output was carefully checked and heavily revised or rewritten by the authors. The authors take full responsibility for the final manuscript and the validity of its results.

\section*{Acknowledgment}

This work was supported by the Singapore National Research Foundation (NRF) under its AI Visiting Professorship programme.

\bibliography{bibliography}

\clearpage
\appendix
{\bf \Huge \centering Appendix \par}

\section{Finite-Union Localization
  (Lemma~\ref{lem:LS-localization})}
\label{app:coding-localization}

This appendix proves Lemma~\ref{lem:LS-localization}, stated in
Section~\ref{sec:localization-main}.  We start from the coding-theoretic localizer used by~\citet{lau2026order} to prove Theorem~16 and modify its coordinate schedule by sampling code coordinates independently with replacement, leading to i.i.d. localization queries.  Writing each coordinate query as the
union of the bins carrying code bit one will make explicit that every
one-set is a union of $O(\lambda/\sigma)$ intervals.
We additionally tighten the constant factors. 
\citet{lau2026order} use bins of width between $10\sigma$ and $20\sigma$ and
return an interval comprising at most five consecutive bins; taking its
midpoint gives a radius of at most $50\sigma$.  Here we use bins of width
$5\sigma$ and return the projected midpoint of the decoded bin.  With
probability at least $1-\delta_{\rm loc}$, the decoded bin is the true bin or
one of its two neighbors, giving error at most $7.5\sigma<8\sigma$.

\subsection{Nearly Equidistant Codebook}

We use the following standard random-coding lemma, which records the codebook construction of \citet{lau2026order} with a general distance tolerance.  We apply it with $\xi=0.01$ below, as in their localization proof, and with
$\xi=1/4$ for the $s$-interval localizer in
Appendix~\ref{app:sharp-localization}.

\begin{lemma}[Nearly equidistant binary codebook]
\label{lem:nearly-equidistant-code}
For every integer $K\ge2$ and every fixed $\xi\in(0,1/2)$, there are
codewords $z_1,\ldots,z_K\in\{0,1\}^{d_{\rm code}}$, with
$d_{\rm code}=O_\xi(\log K)$, such that
\[
    \left(\frac{1}{2}-\xi\right)
    \le \frac{d_H(z_j,z_{j'})}{d_{\rm code}}
    \le\left(\frac{1}{2}+\xi\right)
    \qquad(j\ne j').
\]
\end{lemma}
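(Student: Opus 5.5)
The plan is the probabilistic method: draw a uniformly random codebook and show that, with positive probability, every pairwise distance lies in the required window. Fix $K\ge2$ and $\xi\in(0,1/2)$, and draw $z_1,\ldots,z_K$ independently and uniformly from $\{0,1\}^{d_{\rm code}}$. The length $d_{\rm code}$ will be chosen at the end.

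\emph{Step 1 (distribution of one distance).} Fix a pair $j\ne j'$. The coordinates of $z_j\oplus z_{j'}$ are i.i.d.\ $\operatorname{Ber}(1/2)$, because for each coordinate the XOR of two independent uniform bits is uniform. Hence $d_H(z_j,z_{j'})\sim\operatorname{Bin}(d_{\rm code},1/2)$, with mean $d_{\rm code}/2$.

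\emph{Step 2 (concentration).} Hoeffding's inequality for a sum of $d_{\rm code}$ independent $[0,1]$-valued variables gives
\[
    \Pr\left(\left|\frac{d_H(z_j,z_{j'})}{d_{\rm code}}-\frac12\right|>\xi\right)\le 2\exp(-2\xi^2 d_{\rm code}).
\]

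\emph{Step 3 (union bound and choice of length).} There are $\binom K2<K^2/2$ pairs, so the probability that some pair violates the window is less than $K^2\exp(-2\xi^2 d_{\rm code})$. Set
\[
    d_{\rm code}=\left\lceil \frac{\log(K^2)}{2\xi^2}\right\rceil+1 .
\]
Then $\exp(-2\xi^2 d_{\rm code})<K^{-2}$, so the failure probability is strictly less than one. Therefore some realization of the codebook satisfies all $\binom K2$ constraints simultaneously. This length satisfies $d_{\rm code}\le (\log K)/\xi^2+2=O_\xi(\log K)$, using $K\ge2$ to absorb the additive constant into $\log K$.

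No step is a real obstacle, since this is a standard random-coding argument. The only points needing care are the pairwise-independence fact in Step 1 and making the inequality strict, so that the existence conclusion follows. A deterministic construction could also be given, for example concatenated Reed--Solomon codes with Hadamard inner codes, but the statement asks only for existence.
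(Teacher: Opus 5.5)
Your proposal is correct and is essentially the paper's proof: a uniformly random codebook, $\operatorname{Bin}(d_{\rm code},1/2)$ pairwise distances, Hoeffding's inequality with a union bound over at most $K^2$ pairs, and a length of order $\xi^{-2}\log K$. Your explicit choice of $d_{\rm code}$ simply makes concrete the unspecified constant $C_\xi$ in the paper's $d_{\rm code}=\lceil C_\xi\log K\rceil$.
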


\begin{proof}
Draw the $K$ codewords independently and uniformly from
$\{0,1\}^{d_{\rm code}}$.
For each pair, the Hamming distance has distribution
$\operatorname{Bin}(d_{\rm code},1/2)$.  Hoeffding's inequality and a union
bound give
\[
    \Pr\left(
      \max_{j\ne j'}
      \left|\frac{d_H(z_j,z_{j'})}{d_{\rm code}}-\frac{1}{2}\right|>\xi
    \right)
    \le K^2\cdot2e^{-2\xi^2d_{\rm code}}.
\]
For a sufficiently large constant $C_\xi$, taking
$d_{\rm code}=\lceil C_\xi\log K\rceil$ makes the right-hand side less than
one.  Hence, a deterministic realization with the claimed property exists.
\end{proof}

\subsection{Finite-Union Localization with an \texorpdfstring{$8\sigma$}{8 sigma}
Guarantee}
\label{app:tight-coding-localization}

We now prove Lemma~\ref{lem:LS-localization} in three steps.  We first partition $[-\lambda,\lambda]$ into bins and construct the i.i.d.\ finite-union queries.  For each candidate bin, we next consider the fraction of responses that disagree with its assigned code bits.  We show that the true bin or one of its two neighbors has an expected disagreement fraction strictly smaller than that of every bin outside this three-bin set.  Finally, we use
concentration to show that the decoder selects one of these three bins.

\paragraph{Bin construction and i.i.d. queries.}
Set $h=5\sigma$ and $a=-\lambda-2h$.  Let $N = \lceil 2 \lambda/h \rceil +4$ so that $b\coloneqq a+Nh\ge\lambda+2h$.  Use the interior bins
\[
    B_j=[a+(j-1)h,a+jh),
    \qquad j=1,\ldots,N,
\]
together with the exterior bins
$B_0=(-\infty,a)$ and $B_{N+1}=[b,\infty)$.  There are
$K=N+2=O(\lambda/\sigma)$ bins.  The grid extends $2h$ past $-\lambda$ and at least $2h$ past $\lambda$.
This boundary guard ensures that the interior bin containing any
$\mu\in[-\lambda,\lambda]$ has an interior neighbor on both sides.  Thus, the
true bin and its two neighbors are interior bins even when $\mu$ is at an endpoint,
so the three-bin comparison below needs no separate boundary case.  The guard adds only $O(1)$ bins and has only a mild effect on the constants; the $8\sigma$ guarantee primarily comes from the choice $h=5\sigma$ and decoding at most one bin away.

Choose deterministic codewords
$z_0,\ldots,z_{N+1}\in\{0,1\}^{d_{\rm code}}$ such that
\begin{equation}
\label{eq:localization-code-distance}
    0.49
    \le \frac{d_H(z_i,z_j)}{d_{\rm code}}
    \le 0.51
    \qquad \text{for each }i\ne j,
\end{equation}
with $d_{\rm code}=O(\log K)$; their existence follows from
Lemma~\ref{lem:nearly-equidistant-code} with $\xi=0.01$.
Independently of the samples, the learner draws
$I_1,\ldots,I_n\stackrel{\mathrm{i.i.d.}}{\sim}
\operatorname{Unif}\{1,\ldots,d_{\rm code}\}$
and issues the queries
\[
    Q_t(x)
    =
    \1\left\{
      x\in\bigcup_{u:z_{u,I_t}=1}B_u
    \right\}.
\]
Agent $t$ returns $Y_t=Q_t(X_t)$.  Thus, the query functions are i.i.d.,
and every one-set is a union of at most
$K=O(\lambda/\sigma)$ intervals.  Because the bins partition $\mathbb R$,
on the event $\{X_t\in B_u\}$ we have $Y_t=z_{u,I_t}$.
\paragraph{Expected disagreement comparison.}
Let $B_i$ be the bin containing $\mu$, and set
$S=\{i-1,i,i+1\}$.  If $X$ is not in these three bins, then we must have
$|X-\mu|\ge h$.  Therefore, Markov's inequality gives
\[
    q
    \coloneqq
    \Pr\left(X\notin\bigcup_{u\in S}B_u\right)
    \le
    \Pr(|X-\mu|\ge h)
    \le
    \frac{\E[|X-\mu|]}{h}
    \le
    \frac{1}{5}.
\]
Writing $p_u=\Pr(X\in B_u)$, we have
\begin{equation}
\label{eq:S-sum-bound}
    \sum_{u\in S}p_u=1-q \ge \frac{4}{5}.
\end{equation}
Since $|S|=3$, a most probable bin among these three,
$i^\star\in\arg\max_{u\in S}p_u$, satisfies
\begin{equation}
\label{eq:p_i_star_bound}
    p_{i^\star}
    \ge
    \frac{1}{3}\sum_{u\in S}p_u
    =
    \frac{1-q}{3}
    \ge
    \frac{4}{15}.
\end{equation}
Let $(X,I,Y)$ denote a generic sample, sampled code coordinate, and response, sharing the common distribution of $(X_t,I_t,Y_t)$.
For every candidate bin $j\in\{0,\ldots,N+1\}$, define its
\emph{empirical disagreement score} as the fraction of observed responses
that disagree with its assigned code bits:
\begin{equation}
\label{eq:coding-empirical-score}
    \widehat H_j
    =
    \frac{1}{n}\sum_{t=1}^n\1\{Y_t\ne z_{j,I_t}\}.
\end{equation}
We call its expectation the corresponding \emph{population score}:
\begin{equation}
\label{eq:coding-population-score}
    H_j
    \coloneqq
    \E[\widehat H_j]
    = \E\left[\1\{Y\ne z_{j,I}\}\right] 
    =
    \sum_{u=0}^{N+1}p_u\cdot
      \E_I\left[\1\{z_{u,I}\ne z_{j,I}\}\right] 
    =
    \sum_{u \ne j}
      p_u\cdot\frac{d_H(z_j,z_u)}{d_{\rm code}},
\end{equation}
where we conditioned on the bin containing $X$, used the uniformity of $I$, the definition of Hamming distance, and the fact that $d_H(z_j, z_j) = 0$.
Applying~\eqref{eq:coding-population-score} to reference bin $i^\star$, and using the Hamming distance upper bound in~\eqref{eq:localization-code-distance} and bound~\eqref{eq:p_i_star_bound} on $p_{i^\star}$ yields
\[
    H_{i^\star}
    \le
    0.51\sum_{u\ne i^\star}p_u
    =
    0.51(1-p_{i^\star})
    \le
    0.51\left(1-\frac{4}{15}\right)
    =0.374.
\]
Conversely, for any candidate bin $j \notin S$, we have $j \ne u$ for all $u \in S$. Restricting the summation in~\eqref{eq:coding-population-score} to $S$, and using the Hamming distance lower bound from~\eqref{eq:localization-code-distance} together with the lower bound on $\sum_{u \in S} p_u$ from~\eqref{eq:S-sum-bound} yields
\[
    H_j
    \ge
    0.49 \sum_{u \in S} p_u
    =
    0.49(1 - q)
    \ge
    0.49 \cdot \frac{4}{5}
    = 0.392.
\]
Combining the preceding two bounds shows that every candidate bin $j\notin S$ has a larger population score than the reference bin $i^\star$, with gap at least
\begin{equation}
\label{eq:localization-population-gap}
    H_j-H_{i^\star}\ge g,
    \quad \text{where} \quad
    g\coloneqq0.392-0.374=0.018.
\end{equation}
\paragraph{Concentration and decoding.}
For each fixed $j$, the summands defining $\widehat H_j$~in~\eqref{eq:coding-empirical-score} are independent
$[0,1]$-valued variables with mean $H_j$.  Hoeffding's inequality and a union
bound over the $K$ candidates therefore give
\begin{equation}
\label{eq:localization-uniform-concentration}
    \Pr\left(
      \max_{0\le j\le N+1}|\widehat H_j-H_j|>\frac{g}{4}
    \right)
    \le 2K\exp\left(-2n\left(\frac{g}{4}\right)^2\right).
\end{equation}
It follows that the right-hand side of
\eqref{eq:localization-uniform-concentration} is at most
$\delta_{\rm loc}$ whenever the number of samples $n$ meets the condition
\begin{equation}
\label{eq:localization-sample-requirement}
    n
    \ge
    \frac{8}{g^2}\log\frac{2K}{\delta_{\rm loc}}.
\end{equation}
When this condition holds,
\eqref{eq:localization-uniform-concentration} shows that, with probability
at least $1-\delta_{\rm loc}$, all empirical scores differ from their
population values by at most $g/4$.  On this event,
\eqref{eq:localization-population-gap} gives, for every $j\notin S$,
\[
    \widehat H_j-\widehat H_{i^\star}
    \ge
    \left(H_j-\frac{g}{4}\right)
    -
    \left(H_{i^\star}+\frac{g}{4}\right)
    \ge
    \frac{g}{2}>0.
\]
Thus, no candidate outside $S$ can minimize the empirical score.  Let
$\hat\imath\in\arg\min_j\widehat H_j$, with ties broken arbitrarily.  It
follows that $\hat\imath\in S$ with probability at least
$1-\delta_{\rm loc}$.  Since
$K=O(\lambda/\sigma)$, condition~\eqref{eq:localization-sample-requirement} gives
\[
    n = O\left(
      \log\frac{\lambda}{\sigma}
      +\log\frac{1}{\delta_{\rm loc}}
    \right).
\]
For an interior bin $B_j$, let $m_j$ be its midpoint, and set
$c_j=\Proj_{[-\lambda,\lambda]}(m_j)$.  For the two exterior bins, set $c_0=-\lambda$ and
$c_{N+1}=\lambda$.  This defines the decoder on every transcript: it returns
$c=c_{\hat\imath}$. On the success event
$\hat\imath\in S=\{i-1,i,i+1\}$, we have
$|\hat\imath-i|\le1$.  The boundary guard also ensures that
$B_{\hat\imath}$ is an interior bin, so its midpoint
$m_{\hat\imath}$ is defined as above.  Therefore,
$
    |m_{\hat\imath}-\mu|
    \le
    3h/2
    =7.5\sigma.
$
Projection onto $[-\lambda,\lambda]$ cannot increase the distance to
$\mu\in[-\lambda,\lambda]$.  Thus $|c-\mu|\le7.5\sigma<8\sigma$, which proves
the lemma.

\section{Geometry and Moment Identities for the Refinement Construction (Section~\ref{sec:construction})}
\label{app:construction}

This appendix verifies the construction-specific claims used in
Section~\ref{sec:construction}.  The argument proceeds in three steps.
Appendix~\ref{sec:query-details} counts the cells in each refinement query
bank, proving that every refinement query uses
$O(\lambda/\sigma)$ intervals.  Appendices~\ref{sec:cover-details}
and~\ref{sec:filter-details} establish the required properties of
the cover $\calW_i(c,r)$ and filter $\calF_i(c)$.  Finally,
Appendix~\ref{sec:demodulation-details} combines these properties with
the cell-wise stochastic-quantization and filtered random-sign identities to
prove the first-moment identity and tail-local variance bound in
Lemma~\ref{lem:per-scale-main}.

\subsection{Interval Bound for Refinement Queries}
\label{sec:query-details}

Recall that at each scale $i=1,\dots,\imax$, the queried cells come from the finite bank defined in
\eqref{eq:main-query-bank}:
\[
    \calD_i^{\rm bank}
    =
    \{J\in\calD_i:J\cap[-\lambda-3\ell_i,\lambda+3\ell_i]\ne\varnothing\}.
\]
An interval of length $L$ intersects at most $L/\ell_i+2$ cells of an
$\ell_i$-grid.  Applying this observation with
$L=2\lambda+6\ell_i$ gives
\begin{equation}
\label{eq:bank-cardinality-detail}
    |\calD_i^{\rm bank}|
    \le \frac{2\lambda+6\ell_i}{\ell_i}+2
    =\frac{2\lambda}{\ell_i}+8
    =O\left(\max\left\{1,\frac{\lambda}{\ell_i}\right\}\right).
\end{equation}
Since each bank cell contributes at most one candidate interval, every
realized query is therefore a finite union of at most
$|\calD_i^{\rm bank}|$ intervals.  Combining~\eqref{eq:bank-cardinality-detail} with $\ell_i\ge\bar\sigma=9\sigma$ and $\lambda\ge\sigma$, this bound simplifies to $O(\lambda/\sigma)$.

\subsection{Geometry of the Decoder-Chosen Cover (Lemma~\ref{lem:cover-main})}
\label{sec:cover-details}

In this subsection, we prove Lemma~\ref{lem:cover-main}.  
We first establish coverage and disjointness.  We then derive the location and endpoint bounds, 
and finish with bank containment and the per-scale cardinality bound.

\paragraph{Coverage.}
Recall that $\mathcal L(c,r)$ is defined in
\eqref{eq:cover-intersecting-cells} and that $J_c^\star(B)$ is selected
according to \eqref{eq:cover-selection-rule}.
Every $x\in[c-r,c+r]$ belongs to a scale-$1$ cell
$B\in\mathcal L(c,r)$, and the selected cell $J_c^\star(B)$ contains $B$.
Hence,
\[
    [c-r,c+r]
    \subseteq
    \bigcup_{B\in\mathcal L(c,r)}J_c^\star(B)
    =
    \bigcup_{J\in\calW(c,r)}J.
\]
\paragraph{Pairwise disjointness.}
Two cells in nested dyadic grids are either disjoint or one contains the
other.  Suppose for contradiction that two distinct selected cells satisfy
$J\subsetneq K$.  The larger cell $K$ has scale at least 2.  Since the decoder
keeps a non-admissible cell only at scale 1, $K$ must be admissible.

If $J$ is admissible, choose a finest cell $B\subseteq J$ that selected
$J$.  Then $K$ is a strictly coarser admissible ancestor of the same $B$,
contradicting the definition of $J=J_c^\star(B)$ as the coarsest admissible
ancestor.  If $J$ is not admissible, then $J$ is a scale-1 cell.  For every
strict ancestor $A\supsetneq J$,
\[
    \dist(c,A)\le\dist(c,J)<\len(J)<\len(A),
\]
so $A$ is also non-admissible.  In particular, $K$ cannot be admissible,
again giving a contradiction.  Hence, distinct cover cells are disjoint.
\paragraph{Location and endpoint bounds.}
Fix an admissible $J\in\calW_i(c,r)$ and let
$P=\operatorname{par}(J)$, so $\len(J)=\ell_i$ and $\len(P)=2\ell_i$.
Admissibility of $J$ gives
\begin{equation}
\label{eq:cover-admissible-lower}
    \dist(c,J)\ge\ell_i.
\end{equation}
The parent $P$ is not admissible; if it were, it would be a coarser admissible
ancestor of every finest cell that selected $J$.  Consequently,
\begin{equation}
\label{eq:cover-parent-upper}
    \dist(c,P)<\len(P)=2\ell_i.
\end{equation}
Since $J$ is one half of $P$, its nearest point can be at most one child
width farther from $c$ than the nearest point of $P$.  Therefore,
\begin{equation}
\label{eq:cover-distance-upper}
    \dist(c,J)
    \le\dist(c,P)+\ell_i
    <3\ell_i.
\end{equation}
For $i\ge2$, every selected cell is admissible, so
\eqref{eq:cover-admissible-lower} and
\eqref{eq:cover-distance-upper} give the stated distance bounds.
For either endpoint $x\in\{a_J,b_J\}$, the distance to $c$ is at most the
distance to the cell plus one cell width.  Combining this observation with
\eqref{eq:cover-distance-upper} gives
\[
    |x-c|\le\dist(c,J)+\ell_i<4\ell_i.
\]
Equation~\eqref{eq:cover-distance-upper} also shows that $J$ intersects
$(c-3\ell_i,c+3\ell_i)$.
The only selected cells not handled by this argument are non-admissible
scale-1 cells.  For each such cell $J$,
$\dist(c,J)<\ell_1$ by definition, and hence its farther endpoint obeys
\[
    \max\{|a_J-c|,|b_J-c|\}
    \le\dist(c,J)+\ell_1<2\ell_1.
\]
Moreover, $J$ intersects $(c-\ell_1,c+\ell_1)$.  Thus, the stated endpoint and
intersection bounds hold in both cases.  Finally, since
$c\in[-\lambda,\lambda]$, any cell intersecting
$(c-3\ell_i,c+3\ell_i)$ also intersects
$[-\lambda-3\ell_i,\lambda+3\ell_i]$ and hence belongs to
$\calD_i^{\rm bank}$.  
\paragraph{Number of cells at one scale.}
By \eqref{eq:cover-admissible-lower} and the endpoint bound, every admissible
scale-$i$ cover cell is contained entirely in
\[
    [c-4\ell_i,c-\ell_i]
    \ \cup\
    [c+\ell_i,c+4\ell_i].
\]
Each component has length $3\ell_i$ and can contain at most three disjoint
scale-$i$ grid cells of width $\ell_i$.  There are therefore at most six admissible cover 
cells at scale $i$.  At scale $1$, the non-admissible cover cells all intersect
$(c-\ell_1,c+\ell_1)$.  An interval of length $2\ell_1$ intersects at most
three cells of an $\ell_1$-grid.  Hence, $|\calW_i(c,r)|\le 9$ as desired.

\subsection{Properties of the Near-Center Filter}
\label{sec:filter-details}

This subsection verifies three properties of the near-center filter $\calF_i(c)$ defined in \eqref{eq:main-filter}, which are used in the second moment analysis (see Lemma~\ref{lem:per-scale-main} in Appendix~\ref{sec:demodulation-details}).  First, the filter contains at most three cells, so its
retention probability $p_i(c)$ defined in~\eqref{eq:filter-indicator} is at least $1/8$.  Second, it is disjoint from the decoder-selected cover $\calW_i(c, r)$, so the retention event $H_{i,t}^d(c) = 1$ defined in~\eqref{eq:filter-indicator} is independent of the signs
used to decode the cover cells.  Third, at every scale $i\ge 2$, a retained
positive response certifies that the corresponding observation lies in the
required tail event.

\begin{lemma}[Filter size, disjointness, and tail implication]
\label{lem:filter-geometry}
For every $c\in[-\lambda,\lambda]$ and
$i\in\{1,\ldots,\imax\}$,
\begin{equation}
\label{eq:filter-size-disjointness}
    |\calF_i(c)|\le 3, \quad
    p_i(c)\ge\frac{1}{8}, \quad
    \text{and} \quad
    \calF_i(c)\cap\calW_i(c,r)=\varnothing.
\end{equation}
Moreover, if $i\ge2$, then for every repetition index $t\in\{1,\ldots,n_i\}$ and orientation $d\in\{\mathrm L,\mathrm R\}$, we have
\begin{equation}
\label{eq:filter-tail-implication}
    H_{i,t}^d(c) \cdot Y_{i,t}^d = 1
   \implies
    |X_{i,t}^d-c|\ge\ell_i.
\end{equation}
\end{lemma}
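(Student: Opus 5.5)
The three claims follow from elementary geometry of the grid $\calD_i$ together with the definitions \eqref{eq:main-filter}, \eqref{eq:filter-indicator} and the cover bounds in Lemma~\ref{lem:cover-main}. We handle them in order.

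\emph{Size bound.} For $i=1$ we have $\calF_1(c)=\varnothing$, so $|\calF_1(c)|=0$ and $p_1(c)=1$. For $i\ge2$, every $J\in\calF_i(c)$ satisfies $\dist(c,J)<\ell_i$. Hence $J$ intersects the open interval $(c-\ell_i,c+\ell_i)$, which has length $2\ell_i$. An open interval of length $2\ell_i$ meets at most three cells of an $\ell_i$-grid: any cell it meets must start in $(c-2\ell_i,c+\ell_i)$, and that interval contains at most three grid points $m\ell_i$. Thus $|\calF_i(c)|\le3$. Since $p_i(c)=2^{-|\calF_i(c)|}$, this gives $p_i(c)\ge1/8$.

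\emph{Disjointness.} For $i=1$ the claim is trivial. For $i\ge2$, Lemma~\ref{lem:cover-main} gives $\dist(c,J)\ge\ell_i$ for every $J\in\calW_i(c,r)$. By definition, every member of $\calF_i(c)$ has $\dist(c,J)<\ell_i$. No cell can satisfy both, so the two families share no cell and $\calF_i(c)\cap\calW_i(c,r)=\varnothing$.

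\emph{Tail implication.} Fix $i\ge2$ and suppose $H_{i,t}^d(c)\,Y_{i,t}^d=1$.
\begin{enumerate}
\item From $Y_{i,t}^d=1$ and \eqref{eq:main-query}, the observation $X_{i,t}^d$ lies in $A_{K,t}^d\subseteq K$ for some bank cell $K$ with $\eta_{K,t}^d=+1$.
\item From $H_{i,t}^d(c)=1$, every cell of $\calF_i(c)$ carries sign $-1$. Hence $K\notin\calF_i(c)$.
\item Since $K$ is a bank cell outside $\calF_i(c)$, the definition of $\calF_i(c)$ gives $\dist(c,K)\ge\ell_i$.
\item As $X_{i,t}^d\in K$, it follows that $|X_{i,t}^d-c|\ge\dist(c,K)\ge\ell_i$.
\end{enumerate}

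There is no real obstacle here. The only point needing care is the grid-counting step in the size bound, where the open interval must be used to get the count of three, together with applying the case $i\ge2$ of Lemma~\ref{lem:cover-main} correctly in the disjointness step.
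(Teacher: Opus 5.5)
Your proposal is correct and follows essentially the same route as the paper's proof. Both intersect each filter cell with $(c-\ell_i,c+\ell_i)$ to count at most three cells, use the $i\ge2$ distance bound of Lemma~\ref{lem:cover-main} for disjointness, and argue the tail implication through the positively signed bank cell outside $\calF_i(c)$; your explicit count of left endpoints $m\ell_i$ in $(c-2\ell_i,c+\ell_i)$ just spells out the counting step the paper states directly.
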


\begin{proof}
For $i=1$, \eqref{eq:main-filter} and~\eqref{eq:filter-indicator} give
$\calF_1(c)=\varnothing$ and $p_1(c)=1$, respectively, so the claims in~\eqref{eq:filter-size-disjointness} follow.  We therefore assume that $i\ge2$.

Every $J\in\calF_i(c)$ is a scale-$i$ bank cell satisfying
$\dist(c,J)<\ell_i$ and hence intersects
$(c-\ell_i,c+\ell_i)$.  This interval has length $2\ell_i$ and intersects at
most three cells of the scale-$i$ grid.  Therefore,
\[
    |\calF_i(c)|\le 3 
    \quad \text{and} \quad
    p_i(c)=2^{-|\calF_i(c)|} \ge \frac{1}{8}.
\]
Lemma~\ref{lem:cover-main} gives
$\dist(c,J)\ge\ell_i$ for every $J\in\calW_i(c,r)$.  This is incompatible
with the strict inequality $\dist(c,J)<\ell_i$ defining the filter cells in~\eqref{eq:main-filter}, implying that
\[
    \calF_i(c)\cap\calW_i(c,r)=\varnothing.
\]
It remains to prove the tail implication~\eqref{eq:filter-tail-implication}.  Suppose that $H_{i,t}^d(c)\cdot Y_{i,t}^d=1$.
Since both factors are binary, we have $H_{i,t}^d(c)=Y_{i,t}^d=1$.
The definition of the query then gives a cell $J\in\calD_i^{\rm bank}$ such that
\[
    \eta_{J,t}^d=+1
    \quad\text{and}\quad
    X_{i,t}^d\in A_{J,t}^d\subseteq J.
\]
On the other hand, $H_{i,t}^d(c)=1$ forces
$\eta_{K,t}^d=-1$ for every $K\in\calF_i(c)$.  Thus
$J\notin\calF_i(c)$, and the definition of the filter in~\eqref{eq:main-filter} gives
$\dist(c,J)\ge\ell_i$.  Since $X_{i,t}^d\in J$, we have
\[
    |X_{i,t}^d-c|
    \ge
    \dist(c,J)
    \ge
    \ell_i,
\]
which establishes the tail implication~\eqref{eq:filter-tail-implication}.
\end{proof}

\subsection{Proof of the Per-Scale Refinement Properties}
\label{sec:demodulation-details}

This subsection proves Lemma~\ref{lem:per-scale-main}.  We first record the
cell-wise stochastic-quantization identity used in the known-center refinement
of~\citet{lau2026order} in Lemma~\ref{lem:cell-stochastic} below.  We then
analyze how filtering and reweighting transform the provisional decoder
$\widetilde V_{i,t}^d(c)$ in \eqref{eq:weighted-demodulation} into the final
decoder $V_{i,t}^d(c)$ in \eqref{eq:main-decoded-variable}.
Lemma~\ref{lem:filtered-demodulation} below shows that, conditional on the
observation and relative split, the final decoder has the same expectation
over the Rademacher signs as the provisional decoder.  It also shows that the
final decoder is zero whenever the observation lies in a filter cell.
Finally, we combine these probabilistic identities with the cover and filter
geometry in Lemmas~\ref{lem:cover-main} and~\ref{lem:filter-geometry} to obtain
the required first-moment identity and tail-local variance bound.

\paragraph{Cell-wise stochastic quantization.}
\label{sec:stoquant-details}

Although all cells in one query share the same relative split
$\zeta_{i,t}^d$, only its uniform marginal in each fixed cell is needed.  For
$J=[a_J,b_J)$, the split point
$U_{J,t}^d=a_J+\zeta_{i,t}^d\ell_i$ satisfies, for every $u\in J$,
\[
    \Pr(U_{J,t}^d\le u)
    =
    \Pr\left(\zeta_{i,t}^d\le\frac{u-a_J}{\ell_i}\right)
    =
    \frac{u-a_J}{\ell_i}.
\]
Thus $U_{J,t}^d\sim\operatorname{Unif}(J)$.  Dependence between split points
in different cells is immaterial; the identity below is applied cell-by-cell,
and the resulting expectations are summed by linearity. This identity is given in~\citep[Appendix A, Step 4]{lau2026order}; we include a short proof for completeness.

\begin{lemma}[Cell-wise stochastic quantization]
\label{lem:cell-stochastic}
Fix $c\in\mathbb R$ and an interval $J=[a,b)$ with $a<b$.  Let
$U^{\mathrm L}$ and $U^{\mathrm R}$ each be uniform on $J$.  Then, for every
fixed $x\in\mathbb R$,
\begin{equation}
\label{eq:cell-stochastic-pointwise}
     (a-c) \cdot \Pr(a\le x<U^{\mathrm L}) 
    +(b-c) \cdot \Pr(U^{\mathrm R}\le x<b) 
    =(x-c) \cdot \1\{x\in J\}.
\end{equation}
Consequently, if $X$ is independent of both split points, then
\[
    \E[(X-c) \cdot \1\{X\in J\}]
    =
    (a-c) \cdot \Pr(a\le X<U^{\mathrm L})
    +(b-c) \cdot \Pr(U^{\mathrm R}\le X<b).
\]
\end{lemma}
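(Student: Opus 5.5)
The identity \eqref{eq:cell-stochastic-pointwise} is pointwise in $x$, so I will prove it by direct computation with the uniform laws and then obtain the consequence by Fubini. I split into two cases according to whether $x$ lies in $J$.

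\emph{Case $x\notin J$.} If $x<a$ or $x\ge b$, then both events $\{a\le x<U^{\mathrm L}\}$ and $\{U^{\mathrm R}\le x<b\}$ are empty. This uses $U^{\mathrm L},U^{\mathrm R}\in[a,b)$ almost surely, since the event $\{U^{\mathrm R}\le x<b\}$ requires $x<b$ and $U^{\mathrm R}\ge a$ forces $x\ge a$. Hence the left-hand side is $0$, which matches the right-hand side.

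\emph{Case $x\in J=[a,b)$.} Uniformity of the split points on $J$ gives
\[
\Pr(U^{\mathrm L}>x)=\frac{b-x}{b-a}
\quad\text{and}\quad
\Pr(U^{\mathrm R}\le x)=\frac{x-a}{b-a}.
\]
The left-hand side then becomes
\[
\frac{(a-c)(b-x)+(b-c)(x-a)}{b-a}.
\]
In the numerator, the terms $ab$ cancel and the $c$-terms combine as $-c\,(b-x+x-a)=-c\,(b-a)$. What remains is $-ax+bx=x(b-a)$. Dividing by $b-a$ gives $x-c$, which is the claimed identity. I should note that ties such as $U=x$ are measure-zero events, so they do not affect either probability.

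\emph{Consequence.} If $X$ is independent of $(U^{\mathrm L},U^{\mathrm R})$, condition on $X=x$ and apply the pointwise identity. Then take expectation over $X$; Fubini/Tonelli applies because the integrands are bounded indicators multiplied by constants. Each probability $\Pr(a\le X<U^{\mathrm L})$ then equals $\E_X[\Pr_{U}(a\le x<U^{\mathrm L})|_{x=X}]$, and linearity yields $\E[(X-c)\1\{X\in J\}]$.

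There is no real obstacle here; the only points needing care are the boundary bookkeeping in the half-open intervals and the measure-zero ties.
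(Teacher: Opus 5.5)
Your proposal is correct and follows essentially the same route as the paper: split on whether $x\in J$, compute $\Pr(U^{\mathrm L}>x)=(b-x)/(b-a)$ and $\Pr(U^{\mathrm R}\le x)=(x-a)/(b-a)$, simplify to $x-c$, and then condition on $X$ and integrate. Your extra remarks on the half-open boundaries, measure-zero ties, and boundedness of the integrand for Fubini are fine additions that the paper leaves implicit.
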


\begin{proof}
If $x\notin J$, both probabilities on the left-hand side of
\eqref{eq:cell-stochastic-pointwise} vanish, so the identity holds.  If
$x\in J=[a,b)$, uniformity gives
\[
    \Pr(a\le x<U^{\mathrm L})
    =
    \frac{b-x}{b-a}
    \quad\text{and}\quad
    \Pr(U^{\mathrm R}\le x<b)
    =
    \frac{x-a}{b-a}.
\]
The left-hand side of \eqref{eq:cell-stochastic-pointwise} is therefore
\[
    (a-c) \cdot \frac{b-x}{b-a}
    +(b-c) \cdot \frac{x-a}{b-a}
    =x-c.
\]
This proves the pointwise identity.  For an independent random variable $X$,
conditioning on $X$ and applying the law of total expectation gives the
integrated identity.
\end{proof}

We next isolate the random-sign calculation underlying the transformation from
the provisional decoder in \eqref{eq:weighted-demodulation} to the final
decoder in \eqref{eq:main-decoded-variable}.  After conditioning on the split
fraction, the candidate intervals are fixed, and the calculation no longer
depends on the dyadic geometry.  It uses only the pairwise disjointness of the
candidate intervals, the independence of the Rademacher signs, and the
disjointness of the selected cover and filter cells.
Lemma~\ref{lem:filtered-demodulation} below states this calculation in a general
form that applies uniformly across all scales, repetition indices, and
orientations.  In the proof of Lemma~\ref{lem:per-scale-main} below, we apply
it with $\mathcal I=\calD_i^{\rm bank}$,
$\mathcal T=\calW_i(c,r)$, and $\mathcal F=\calF_i(c)$.  The same lemma is
applied separately within each group of bank cells in Appendix~\ref{sec:grouped-refinement-proof}.

\begin{lemma}[Filtered random-sign decoding]
\label{lem:filtered-demodulation}
Let $\{A_m:m\in\calI\}$ be a finite family of pairwise disjoint measurable
sets, and let $(\eta_m)_{m\in\calI}$ be independent Rademacher signs that are
independent of $X$.  Fix disjoint sets $\calF,\calT\subseteq\calI$ and
deterministic weights $w_m\in\mathbb R$, $m\in\calT$.  Define
\[
    Q(x)=\1\left\{x\in\bigcup_{m:\eta_m=+1}A_m\right\},
\]
and set
\[
    H=\1\{\eta_m=-1\text{ for all }m\in\calF\},
    \quad
    p=2^{-|\calF|},
    \quad
    S=\sum_{m\in\calT}w_m \cdot \eta_m,
    \quad \text{and} \quad
    V=\frac{2H}{p} \cdot Q(X) \cdot S.
\]
Then, for every fixed $x$, we have
\begin{equation}
\label{eq:filtered-first-identity-abstract}
    \E_\eta[V\mid X=x]
    =
    \sum_{m\in\calT}w_m \cdot \1\{x\in A_m\},
\end{equation}
and
\begin{equation}
\label{eq:filtered-second-identity-abstract}
    \E_\eta[V^2\mid X=x]
    =
    \frac{2}{p}
    \left(\sum_{m\in\calT}w_m^2\right) \cdot 
    \1\left\{x\in\bigcup_{m\in\calI\setminus\calF}A_m\right\}.
\end{equation}
Consequently, we have
\begin{equation}
\label{eq:filtered-integrated-second-identity-abstract}
    \E[V^2]
    =
    \frac{2}{p}
    \left(\sum_{m\in\calT}w_m^2\right)
    \Pr\left(X\in\bigcup_{m\in\calI\setminus\calF}A_m\right).
\end{equation}
\end{lemma}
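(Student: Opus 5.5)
Fix $x$ and condition on $X=x$; all randomness is then in the signs $\eta$. Because the sets $A_m$ are pairwise disjoint, at most one index $K\in\calI$ satisfies $x\in A_K$. If no such $K$ exists, then $Q(x)=0$, so $V=0$. Both right-hand sides also vanish: \eqref{eq:filtered-first-identity-abstract} because every indicator is zero, and \eqref{eq:filtered-second-identity-abstract} because $x$ lies in no $A_m$. It therefore suffices to treat the case where a unique $K$ exists. In that case $Q(x)=\1\{\eta_K=+1\}=(1+\eta_K)/2$, so
\[
V=\frac{H}{p}\,(1+\eta_K)\,S .
\]
We split into three cases according to whether $K\in\calF$, $K\in\calT$, or $K\in\calI\setminus(\calF\cup\calT)$. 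These cases are exhaustive and mutually exclusive because $\calF\cap\calT=\varnothing$.

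\emph{Case $K\in\calF$.} On the event $H=1$ we have $\eta_K=-1$, so $H(1+\eta_K)=0$ identically. Hence $V\equiv 0$, and both conditional moments are zero. This agrees with the right-hand sides: no $m\in\calT$ has $x\in A_m$ (disjointness and $K\notin\calT$), and $x\notin\bigcup_{m\in\calI\setminus\calF}A_m$.

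\emph{Cases $K\notin\calF$.} Here $H$ depends only on $(\eta_m)_{m\in\calF}$, while $\eta_K$ and $S$ depend only on signs indexed by $\calI\setminus\calF$. So $H$ is independent of $(1+\eta_K)S$, and since $\E[H]=p$,
\[
\E_\eta[V\mid X=x]=\E[(1+\eta_K)S]=\sum_{m\in\calT}w_m\,\E[\eta_m+\eta_K\eta_m]=\sum_{m\in\calT}w_m\1\{m=K\}.
\]
This equals $\sum_{m\in\calT} w_m\1\{x\in A_m\}$, which is \eqref{eq:filtered-first-identity-abstract}.

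For the second moment, use $H^2=H$ and $(1+\eta_K)^2=2(1+\eta_K)$ to get
\[
\E_\eta[V^2\mid X=x]=\frac{1}{p}\cdot 2\,\E[(1+\eta_K)S^2].
\]
Expand $S^2=\sum_{m,m'\in\calT}w_mw_{m'}\eta_m\eta_{m'}$. Then $\E[S^2]=\sum_{m\in\calT}w_m^2$, since the off-diagonal terms vanish. The remaining term is $\E[\eta_K S^2]=\sum_{m,m'}w_mw_{m'}\E[\eta_K\eta_m\eta_{m'}]$. Each summand is zero: if $m\ne m'$, then either an index appears exactly once or the product reduces to $\E[\eta_m]$ or $\E[\eta_{m'}]$; if $m=m'$, the product reduces to $\E[\eta_K]=0$. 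Therefore $\E_\eta[V^2\mid X=x]=(2/p)\sum_{m\in\calT}w_m^2$. This matches \eqref{eq:filtered-second-identity-abstract}, because here $x\in A_K$ with $K\in\calI\setminus\calF$.

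Finally, \eqref{eq:filtered-integrated-second-identity-abstract} follows by integrating the pointwise identity over the law of $X$, which is legitimate because $\eta$ is independent of $X$ and $V$ is bounded since $\calI$ is finite.

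The main point requiring care is the independence of $H$ from $(1+\eta_K)S$. This rests on the disjointness hypothesis $\calF\cap\calT=\varnothing$ together with the case $K\notin\calF$. Once that is in place, the cubic sign moments $\E[\eta_K\eta_m\eta_{m'}]$ vanishing is routine.
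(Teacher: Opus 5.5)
Your proof is correct and follows essentially the same route as the paper: the same case split on whether $x$ lies in no $A_m$, in some $A_K$ with $K\in\calF$, or in some $A_K$ with $K\notin\calF$, and the same key step that $H$ is independent of $(\eta_K,S)$ because $\calF\cap\calT=\varnothing$. The only cosmetic difference is that you verify $\E[\eta_K S^2]=0$ by expanding the cubic sign moments, whereas the paper obtains it (and $\E[S]=0$) from the invariance of the sign law under the global flip $\eta\mapsto-\eta$.
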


\begin{proof}
Extend the weights by setting $w_m=0$ for $m\notin\calT$, and write
\[
    S=\sum_{m\in\calI} w_m \cdot \eta_m.
\]
We claim that for every $j\in\calI$, independence and symmetry of the Rademacher signs give
\begin{align}
    \E_\eta[S]
    &=0,
    \label{eq:sign-id-mean}\\
    \E_\eta[\eta_j \cdot S]
    &=w_j,
    \label{eq:sign-id-linear}\\
    \E_\eta[S^2]
    &=\sum_{m\in\calT}w_m^2,
    \label{eq:sign-id-square}\\
    \E_\eta[\eta_j \cdot S^2]
    &=0.
    \label{eq:sign-id-cubic}
\end{align}
To verify these identities, observe first that the joint law of the signs is
invariant under the global sign flip $\eta\mapsto-\eta$.  Under this
transformation, both $S$ and $\eta_jS^2$ change sign.  Their expectations
therefore equal their own negatives and hence are zero.  For the remaining
identities, independence and linearity give
\[
    \E_\eta[\eta_jS]
    =
    \sum_{m\in\calI} w_m \cdot \E_\eta[\eta_j \cdot \eta_m]
    =
    w_j
\]
and
\[
    \E_\eta[S^2]
    =
    \E_\eta\left[
        S\sum_{m\in\calI} w_m \cdot \eta_m
    \right]
    =
    \sum_{m\in\calI} w_m \cdot \E_\eta[\eta_m \cdot S]
    =
    \sum_{m\in\calI}w_m^2
    =
    \sum_{m\in\calT}w_m^2.
\]
Next, for fixed $x$,  pairwise disjointness of the sets $A_m$ gives the pointwise
expansion
\[
    Q(x)
    = \sum_{j\in\calI} \1\{x\in A_j\} \cdot \1\{\eta_j=+1\}.
\]
Pairwise disjointness also leaves three mutually exclusive and exhaustive
cases: (i) $x$ lies outside all the sets $A_m$, (ii) $x$ lies in $A_j$ for some
$j\in\calF$, or (iii) $x$ lies in $A_j$ for some
$j\in\calI\setminus\calF$.  In the latter two cases, the index $j$ is unique.

\emph{Case (i).}
Since $x\notin\bigcup_{m\in\calI}A_m$, we have $Q(x)=0$ and hence $V=0$.  The right-hand sides of
\eqref{eq:filtered-first-identity-abstract} and
\eqref{eq:filtered-second-identity-abstract} also vanish, so both identities
hold.

\emph{Case (ii).}
Suppose that $x\in A_j$ for some $j\in\calF$.
If $H=1$, then
$\eta_j=-1$, whereas $Q(x)=1$ would require $\eta_j=+1$.  Hence
$H\cdot Q(x)=0$ and again $V=0$.  By pairwise disjointness and
$\calF\cap\calT=\varnothing$,
\[
    \sum_{m\in\calT}w_m\cdot\1\{x\in A_m\}=0.
\]
Pairwise disjointness also gives
\[
    x\notin\bigcup_{m\in\calI\setminus\calF}A_m.
\]
Hence, both pointwise identities \eqref{eq:filtered-first-identity-abstract}--\eqref{eq:filtered-second-identity-abstract} hold in this case as well.

\emph{Case (iii).}
Suppose that $x\in A_j$ for some $j\in\calI\setminus\calF$.  Then
\[
    Q(x)
    =
    \1\{\eta_j=+1\}
    =
    \frac{1+\eta_j}{2}.
\]
The variable $H$ depends only on signs indexed by $\calF$, whereas
$(\eta_j,S)$ depends only on signs indexed by $\{j\}\cup\calT$.  Since
$j\notin\calF$ and $\calF\cap\calT=\varnothing$, it follows that $H$ is independent of
$(\eta_j,S)$.  Using this independence, together with $\E_\eta[H]=p$ and
\eqref{eq:sign-id-mean}--\eqref{eq:sign-id-linear}, we obtain
\[
    \E_\eta[V\mid X=x]
    =
    \frac{2}{p}\E_\eta[H] \cdot
    \E_\eta\left[\frac{1+\eta_j}{2}S\right]
    =
    \E_\eta[S]+\E_\eta[\eta_jS]
    =
    w_j.
\]
By pairwise disjointness and the convention $w_j=0$ for
$j\notin\calT$, we have
\[
    w_j
    =
    \sum_{m\in\calT}w_m \cdot \1\{x\in A_m\}.
\]
This proves \eqref{eq:filtered-first-identity-abstract} in case (iii).
For~\eqref{eq:filtered-second-identity-abstract}, using $H^2=H$, $Q(x)^2=Q(x)$, and
\eqref{eq:sign-id-square}--\eqref{eq:sign-id-cubic}, we obtain
\[
    \E_\eta[V^2\mid X=x]
    =
    \frac{4}{p^2}\E_\eta[H] \cdot
    \E_\eta\left[\frac{1+\eta_j}{2}S^2\right] =
    \frac{2}{p}
    \left(\E_\eta[S^2]+\E_\eta[\eta_jS^2]\right) =
    \frac{2}{p}\sum_{m\in\calT}w_m^2.
\]
Since $x\in A_j\subseteq\bigcup_{m\in\calI\setminus\calF}A_m$,
this is exactly the right-hand side of \eqref{eq:filtered-second-identity-abstract}.

Thus, both pointwise identities~\eqref{eq:filtered-first-identity-abstract}--\eqref{eq:filtered-second-identity-abstract} hold in all three cases. Taking expectation with
respect to $X$ in \eqref{eq:filtered-second-identity-abstract} gives \eqref{eq:filtered-integrated-second-identity-abstract} and completes the proof.
\end{proof}

We now combine Lemmas~\ref{lem:cell-stochastic}
and~\ref{lem:filtered-demodulation} with the cover and filter geometry in
Lemmas~\ref{lem:cover-main} and~\ref{lem:filter-geometry} to prove
Lemma~\ref{lem:per-scale-main}.

\begin{proof}[Proof of Lemma~\ref{lem:per-scale-main}]
Fix $c\in[-\lambda,\lambda]$, a scale
$i\in\{1,\ldots,\imax\}$, and a repetition index
$t\in\{1,\ldots,n_i\}$.  For each orientation
$d\in\{\mathrm L,\mathrm R\}$, condition on the relative split
$\zeta_{i,t}^d$.  The candidate intervals
$\{A_{J,t}^d:J\in\calD_i^{\rm bank}\}$ are then fixed and pairwise
disjoint because each is contained in a distinct scale-$i$ grid cell, 
while the Rademacher signs $\eta_{J,t}^d$ remain mutually independent and
independent of $X_{i,t}^d$.  Moreover,
$\calF_i(c)\subseteq\calD_i^{\rm bank}$ by definition, and
Lemmas~\ref{lem:cover-main} and~\ref{lem:filter-geometry} give
\[
    \calW_i(c,r)\subseteq\calD_i^{\rm bank}
    \qquad\text{and}\qquad
    \calF_i(c)\cap\calW_i(c,r)=\varnothing.
\]
Consequently, Lemma~\ref{lem:filtered-demodulation} applies with
\[
    \calI=\calD_i^{\rm bank},
    \qquad
    A_J=A_{J,t}^d,
    \qquad
    \calT=\calW_i(c,r),
    \qquad
    \calF=\calF_i(c),
    \qquad
    w_J=w_J^d(c),
\]
and $X=X_{i,t}^d$.
Under this correspondence, we have
\[
    Q(X)=Y_{i,t}^d, \quad  
    H=H_{i,t}^d(c), \quad 
    p=p_i(c), 
   \quad \text{and} \quad
   V=V_{i,t}^d(c).
\]
The first pointwise identity~\eqref{eq:filtered-first-identity-abstract} in the lemma gives
\[
    \E_\eta\left[
        V_{i,t}^d(c)
        \,\middle|\,
        X_{i,t}^d,\zeta_{i,t}^d
    \right]
    =
    \sum_{J\in\calW_i(c,r)}
        w_J^d(c) \cdot \1\{X_{i,t}^d\in A_{J,t}^d\}.
\]
Taking expectations over the observation and relative split, and then summing
over the two orientations, yields
\begin{align*}
    \E\left[
        V_{i,t}^{\mathrm L}(c)+V_{i,t}^{\mathrm R}(c)
    \right]
    &=
    \sum_{J\in\calW_i(c,r)}
    \left(
        w_J^{\mathrm L}(c)\cdot
        \Pr(X_{i,t}^{\mathrm L}\in A_{J,t}^{\mathrm L})
        +
        w_J^{\mathrm R}(c)\cdot
        \Pr(X_{i,t}^{\mathrm R}\in A_{J,t}^{\mathrm R})
    \right) \\
    &=
    \sum_{J\in\calW_i(c,r)}
        \E\left[(X-c)\cdot\1\{X\in J\}\right] \\
    &=
    \sum_{J\in\calW_i(c,r)}\theta_J,
\end{align*}
where the second equality follows from
Lemma~\ref{lem:cell-stochastic}, since
$X_{i,t}^{\mathrm L} \stackrel{d}{=}  X_{i,t}^{\mathrm R} \stackrel{d}{=}  X$ and are independent of their respective relative splits.

We next bound the second moment.  After conditioning on
$\zeta_{i,t}^d$, the integrated second-moment identity
\eqref{eq:filtered-integrated-second-identity-abstract} in Lemma~\ref{lem:filtered-demodulation} gives
\begin{equation}
\label{eq:filtered-second-per-scale}
    \E\left[
        \left(V_{i,t}^d(c)\right)^2
        \,\middle|\,
        \zeta_{i,t}^d
    \right]
    =
    \frac{2}{p_i(c)}
    \left(
        \sum_{J\in\calW_i(c,r)}
            \left(w_J^d(c)\right)^2
    \right)  \cdot
    \Pr\left(
        X_{i,t}^d\in
        \bigcup_{K\in\calD_i^{\rm bank}\setminus\calF_i(c)}
            A_{K,t}^d
        \,\middle|\,
        \zeta_{i,t}^d
    \right).
\end{equation}
We bound the three factors on the right-hand side separately.  First,
Lemma~\ref{lem:filter-geometry} gives
\begin{equation}
\label{eq:filtered-second-per-scale-first-factor}
    \frac{2}{p_i(c)}\le16.
\end{equation}
Second, Lemma~\ref{lem:cover-main} gives
$|w_J^d(c)|<4\ell_i$ for every $J\in\calW_i(c,r)$ and
$|\calW_i(c,r)|=O(1)$.  Hence,
\begin{equation}
 \label{eq:filtered-second-per-scale-second-factor}
    \sum_{J\in\calW_i(c,r)}
        \bigl(w_J^d(c)\bigr)^2
    \le
    |\calW_i(c,r)|\cdot(4\ell_i)^2
    =
    O(\ell_i^2).
\end{equation}
It remains to bound the conditional probability. 
For $i\ge2$, the definition of $\calF_i(c)$ and the containment
$A_{K,t}^d\subseteq K$ give
\[
    \bigcup_{K\in\calD_i^{\rm bank}\setminus\calF_i(c)}
        A_{K,t}^d
    \subseteq
    \bigcup_{K\in\calD_i^{\rm bank}\setminus\calF_i(c)}K
    \subseteq
    \{x:|x-c|\ge\ell_i\}.
\]
Since $X_{i,t}^d\stackrel{\mathrm d}{=}X$ is independent of
$\zeta_{i,t}^d$, the conditional probability is at most
$\Pr(|X-c|\ge\ell_i)$ for $i\ge2$.  At scale $i=1$, it is trivially at
most one.  Thus, for every scale $i$,
\begin{equation}
 \label{eq:filtered-second-per-scale-third-factor}
    \Pr\left(
        X_{i,t}^d\in
        \bigcup_{K\in\calD_i^{\rm bank}\setminus\calF_i(c)}
            A_{K,t}^d
        \,\middle|\,
        \zeta_{i,t}^d
    \right)
    \le
    \tau_i(c)=
    \begin{cases}
        1, & i=1,\\
        \Pr(|X-c|\ge\ell_i), & i\ge2.
    \end{cases}
\end{equation}
All three upper bounds~\eqref{eq:filtered-second-per-scale-first-factor}--\eqref{eq:filtered-second-per-scale-third-factor} are independent of the realized relative split. Consequently, there is a universal constant $C_0$ such that
\[
    \E\left[
        \bigl(V_{i,t}^d(c)\bigr)^2
        \,\middle|\,
        \zeta_{i,t}^d=z
    \right]
    \le
    C_0 \ell_i^2\cdot\tau_i(c)
    \quad \text{for every } z\in[0,1].
\]
Taking expectation over $\zeta_{i,t}^d$ gives
\begin{equation}
\label{eq:single-query-second-moment}
    \E\left[\bigl(V_{i,t}^d(c)\bigr)^2\right]
    \le
    C_0 \ell_i^2\cdot\tau_i(c).
\end{equation}
Therefore,
\[
    \Var\left(V_{i,t}^d(c)\right)
    \le
    \E\left[\bigl(V_{i,t}^d(c)\bigr)^2\right]
    =
    O\left(\ell_i^2\cdot\tau_i(c)\right).
\]
Together with the first-moment identity, this completes the proof.
\end{proof}

\section{Proof of the Minimax Upper Bound (Theorem~\ref{thm:main})}
\label{app:upper-bound}

This appendix analyzes the estimator underlying
Theorem~\ref{thm:main} under the prescribed query counts from
Section~\ref{sec:construction}.  The localization stage uses $n_{\rm loc}$
queries.  We call the collection consisting of $n_i$ refinement queries of
each orientation at every scale $i$ a \emph{base refinement block}.  Each
such block produces one base refinement estimate
$\widehat\theta_{\rm base}(c)$, defined in
\eqref{eq:base-estimator-main}.  All localization and refinement queries
are mutually independent but not identically distributed.
Appendix~\ref{app:localization-iid} converts the resulting protocol into one
with an i.i.d. query law without changing the sample or interval bounds.

We begin by separating the construction-specific input from the multiscale analysis inherited from \citet{lau2026order} and then give a brief roadmap.  Appendix~\ref{sec:demodulation-details} establishes  Lemma~\ref{lem:per-scale-main}, the main construction-specific input to the analysis below.  This lemma supplies the per-scale first-moment identity and tail-local variance bound for the decoded refinement variables.  With these properties in hand, the remaining upper-bound calculations follow, up to notation and constants, the multiscale argument used to prove the corresponding upper bound \citep[Theorem~5]{lau2026order}.  Specifically, Appendix~\ref{sec:conditioning} controls the truncation bias,
Appendix~\ref{sec:tail-variance} aggregates the variances across scales, Appendix~\ref{sec:amplification} amplifies the resulting
constant-confidence estimate, and Appendix~\ref{sec:sample-complexity} sums the scale budgets.  We give these calculations in full to make the multiscale part of the proof
self-contained and to verify its application to our decoder-selected cover.
The interval-bound verification at the end of
Appendix~\ref{sec:sample-complexity} is additional to the inherited
multiscale analysis.  

Set $\delta_{\rm loc}=\delta_{\rm ref}=\delta/2$, and fix any
$c\in[-\lambda,\lambda]$ satisfying $|c-\mu|\le8\sigma$.  Throughout this
appendix, expectations, variances, and probabilities refer to the refinement
samples and randomness with $c$ held fixed.  All bounds are uniform over such
$c$.  Because localization and refinement use disjoint samples and independent
randomness, conditioning on the localization transcript leaves the refinement
law unchanged.  The bounds below therefore apply whenever localization
succeeds.

\subsection{Conditional Mean and Truncation Bias}
\label{sec:conditioning}

We first identify the conditional mean of the base refinement estimate and
then bound its difference from the residual $\mu-c$.  This establishes~\eqref{eq:base-first-moment} and~\eqref{eq:base-bias} in Section~\ref{sec:performance-main}.
Write
\[
    \calU(c,r)
    =
    \bigcup_{J\in\calW(c,r)}J
    =
    \bigcup_{i=1}^{\imax}
    \bigcup_{J\in\calW_i(c,r)}J.
\]
By the definition of $\widehat\theta_{\rm base}(c)$ in \eqref{eq:base-estimator-main}, 
the first-moment identity in \eqref{eq:per-scale-properties}, and the
disjointness of the cover in Lemma~\ref{lem:cover-main},
\begin{equation}
\label{eq:base-first-moment-appendix}
\begin{aligned}
    \E[\widehat\theta_{\rm base}(c)]
    &=
    \sum_{i=1}^{\imax}
    \frac{1}{n_i}
    \sum_{t=1}^{n_i}
    \E\left[
        V_{i,t}^{\mathrm L}(c)+V_{i,t}^{\mathrm R}(c)
    \right] \\
    &=
    \sum_{i=1}^{\imax}
    \sum_{J\in\calW_i(c,r)}
    \E\left[(X-c)\cdot\1\{X\in J\}\right] \\
    &=
    \E\left[
        (X-c)\cdot\1\{X\in\calU(c,r)\}
    \right].
\end{aligned}
\end{equation}
Because $\calU(c,r)$ contains $[c-r,c+r]$,
\begin{equation}
\label{eq:cover-complement-tail}
    \mathbb R\setminus\calU(c,r)
    \subseteq
    \{x:|x-c|>r\}.
\end{equation}
On the event $\{|X-c|>r\}$,
\[
    |X-c| \le \frac{|X-c|^k}{r^{k-1}}.
\]
Taking expectations, using the transferred moment bound~\eqref{eq:moment-transfer}, and recalling the choice of $r$ in~\eqref{eq:main-radii}, we obtain
\begin{equation}
\label{eq:moment-tail-bound-detail}
    \E\left[
        |X-c|\cdot\1\{|X-c|>r\}
    \right]
    \le
    \frac{\E[|X-c|^k]}{r^{k-1}}
    \le
    \frac{\bar\sigma^k}{r^{k-1}}
    \le
    \frac{\eps}{2}.
\end{equation}
Since $\mu-c=\E[X-c]$, it follows from
\eqref{eq:base-first-moment-appendix},~\eqref{eq:cover-complement-tail} and
\eqref{eq:moment-tail-bound-detail} that
\begin{equation}
\label{eq:base-bias-appendix}
\begin{aligned}
    \left|
        \E[\widehat\theta_{\rm base}(c)]-(\mu-c)
    \right|
    &=
    \left|
        \E\left[
            (X-c)\cdot\1\{X\notin\calU(c,r)\}
        \right]
    \right|
    \\
    &\le
    \E\left[
        |X-c|\cdot\1\{X\notin\calU(c,r)\}
    \right]
    \\
    &\le
    \E\left[
        |X-c|\cdot\1\{|X-c|>r\}
    \right] 
    \\
    &\le
    \frac{\eps}{2}.
\end{aligned}
\end{equation}

\subsection{Variance Aggregation Across Scales}
\label{sec:tail-variance}

We next aggregate the tail-local variance bounds from Lemma~\ref{lem:per-scale-main} to obtain~\eqref{eq:base-variance-main} in Section~\ref{sec:performance-main}.  The calculation has two steps: we first use
the scale allocation to reduce the variance to a dyadic tail sum, and then
control that sum using the transferred $k$-th moment bound~\eqref{eq:moment-transfer}.

Recall that
\[
    \tau_1(c)=1
    \quad\text{and}\quad
    \tau_i(c)=\Pr(|X-c|\ge\ell_i)
    \quad\text{for }i\ge2.
\]
By the second-moment bound
\eqref{eq:single-query-second-moment}, established in the proof of
Lemma~\ref{lem:per-scale-main}, there is a universal constant $C_0$ such
that, for every scale $i\in\{1,\dots,\imax\}$, repetition index
$t\in\{1,\dots,n_i\}$, and orientation
$d\in\{\mathrm L,\mathrm R\}$,
\begin{equation}
\label{eq:single-query-variance}
    \Var(V_{i,t}^d(c))
    \le
    C_0 \ell_i^2 \cdot \tau_i(c).
\end{equation}
Independence across $(i, t, d)$ therefore
gives
\begin{equation}
\label{eq:base-variance-sum-detail}
    \Var(\widehat\theta_{\rm base}(c))
    =
    \sum_{i=1}^{\imax}
    \frac{1}{n_i^2}
    \sum_{t=1}^{n_i}
    \sum_{d\in\{\mathrm L,\mathrm R\}}
    \Var(V_{i,t}^d(c))
    \le
    2C_0
    \sum_{i=1}^{\imax}
    \frac{\ell_i^2}{n_i}\tau_i(c).
\end{equation}
By the allocation in \eqref{eq:main-ni},
\[
    \frac{\ell_i^2}{n_i}
    \le
    \frac{\eps^2}{C_1}\,2^{k(i-1)},
\]
and substituting this into \eqref{eq:base-variance-sum-detail} yields
\begin{equation}
\label{eq:base-variance-before-tail}
    \Var(\widehat\theta_{\rm base}(c))
    \le
    \frac{2C_0\eps^2}{C_1}
    \sum_{i=1}^{\imax}
    2^{k(i-1)} \cdot \tau_i(c).
\end{equation}
It remains to bound the dyadic tail sum.  Let $Z= |X-c|/ \bar\sigma$.
The transferred moment bound \eqref{eq:moment-transfer} gives
$\E[Z^k]\le1$.  For every fixed $z\ge0$,
\[
    \sum_{j\ge1}2^{kj} \cdot \1\{z\ge2^j\}
    \le
    \frac{2^k \cdot z^k}{2^k-1},
\]
since the sum is zero when $z<2$ and when $z\ge2$, it is a geometric sum over
$j\le\lfloor\log_2z\rfloor$.  Consequently,
\begin{equation}
\label{eq:dyadic-tail-sum}
\begin{aligned}
    \sum_{i=1}^{\imax}2^{k(i-1)}\tau_i(c)
    &=
    1+
    \sum_{j=1}^{\imax-1}
    2^{kj} \cdot\Pr(Z\ge2^j)
    \\
    &\le
    1+
    \E\left[
        \sum_{j\ge1}2^{kj} \cdot \1\{Z\ge2^j\}
    \right]
    \\
    &\le
    1+
    \frac{2^k}{2^k-1}\E[Z^k]
    \\
    &\le
    3.
\end{aligned}
\end{equation}
Combining \eqref{eq:base-variance-before-tail} and
\eqref{eq:dyadic-tail-sum} gives 
$\Var(\widehat\theta_{\rm base}(c)) \le 6C_0/C_1 \cdot  \eps^2$.
Taking $C_1\ge384C_0$ therefore ensures that
\begin{equation}
\label{eq:base-variance}
    \Var(\widehat\theta_{\rm base}(c))
    \le
    \frac{\eps^2}{64}.
\end{equation}

\subsection{Accuracy and Confidence Amplification}
\label{sec:amplification}

We now convert the bias and variance bounds
\eqref{eq:base-bias-appendix} and~\eqref{eq:base-variance} into the required
high-probability guarantee.  We first prove a constant-success guarantee for $\widehat\theta_{\rm base}(c)$ in \eqref{eq:base-estimator-main}, the estimate produced by one base refinement block.  We then amplify this guarantee by taking the median of estimates from independent blocks and finally combine the refinement and localization failure probabilities.

By Chebyshev's inequality and the variance bound~\eqref{eq:base-variance},
\[
    \Pr\left(
        \left|
            \widehat\theta_{\rm base}(c)
            -\E[\widehat\theta_{\rm base}(c)]
        \right|
        >\frac{\eps}{2}
    \right)
    \le
    \frac{\eps^2/64}{(\eps/2)^2}
    =
    \frac{1}{16}.
\]
On the complementary event, the triangle inequality and
bias bound~\eqref{eq:base-bias-appendix} give
\[
    |\widehat\theta_{\rm base}(c)-(\mu-c)|
    \le
    \left|
        \widehat\theta_{\rm base}(c)
        -\E[\widehat\theta_{\rm base}(c)]
    \right|
    +
    \left|
        \E[\widehat\theta_{\rm base}(c)]-(\mu-c)
    \right|
    \le
    \eps.
\]
Hence, the estimate produced by one base refinement block satisfies
\begin{equation}
\label{eq:base-failure}
    \Pr\left(
        |\widehat\theta_{\rm base}(c)-(\mu-c)|>\eps
    \right)
    \le
    \frac{1}{16}.
\end{equation}
Let $K$ be the smallest odd integer satisfying
\[
    K
    \ge
    \frac{128}{49}\log\frac{1}{\delta_{\rm ref}},
\]
so that $K=O(\log(1/\delta_{\rm ref}))$.  Form $K$ independent base
refinement estimates
$\widehat\theta_{\rm base}^{(1)}(c),\ldots,
\widehat\theta_{\rm base}^{(K)}(c)$, and let $\widehat\theta$ be their median.
For each block $b$, define its failure indicator
\[
    B_b
    =
    \1\left\{
        |\widehat\theta_{\rm base}^{(b)}(c)-(\mu-c)|>\eps
    \right\}.
\]
The variables $B_1,\ldots,B_K$ are independent and satisfy
$\E[B_b]\le1/16$ by \eqref{eq:base-failure}.  The median can fail only if at
least $(K+1)/2$ blocks fail.  Hoeffding's inequality therefore gives
\begin{equation}
\label{eq:median-failure}
\begin{aligned}
    \Pr\left(
        |\widehat\theta-(\mu-c)|>\eps
    \right)
    &\le
    \Pr\left(
        \sum_{b=1}^K B_b\ge\frac{K+1}{2}
    \right) \\
    &\le
    \Pr\left(
        \sum_{b=1}^K B_b
        -\E\left[\sum_{b=1}^K B_b\right]
        \ge\frac{7K}{16}
    \right) \\
    &\le
    \exp\left(-\frac{49K}{128}\right) \\
    &\le
    \delta_{\rm ref}.
\end{aligned}
\end{equation}
To combine the localization and refinement guarantees, we now take
probability over both sources of randomness.  Let $\mathcal E_{\rm loc}$
denote the localization-success event
$\{|c-\mu|\le8\sigma\}$.  The preceding bound is uniform over every
localization output on $\mathcal E_{\rm loc}$.  Since the final estimate is
$\widehat\mu=c+\widehat\theta$,
\[
    \Pr(|\widehat\mu-\mu|>\eps)
    \le
    \Pr(\mathcal E_{\rm loc}^{\mathsf c})
    +
    \Pr\left(
        |\widehat\theta-(\mu-c)|>\eps,\,
        \mathcal E_{\rm loc}
    \right) 
    \le
    \delta_{\rm loc}+\delta_{\rm ref}
    =
    \delta.
\]

\subsection{Sample Complexity}
\label{sec:sample-complexity}

It remains to count the samples, accounting for the
number of queried scales and the geometric allocation of repetitions across
those scales.

One base refinement block uses $2\sum_{i=1}^{\imax}n_i$
samples, where the factor two corresponds to the two orientations
$d\in\{\mathrm L,\mathrm R\}$.  Since
$\ell_i=2^{i-1}\bar\sigma$, the definition of $\imax$ in
\eqref{eq:main-radii} is equivalent to
\[
    \imax
    =
    \min\left\{
      i\ge1:
      \frac{2\bar\sigma}{\eps} \le 2^{(i-1)(k-1)}
    \right\}.
\]
Writing $u=\bar\sigma/\eps$, we therefore have
\begin{equation}
\label{eq:imax-order-detail}
    \imax-1
    =
    \left\lceil
        \frac{\log_2(2u)}{k-1}
    \right\rceil
    \quad\text{and} \quad
    2^{\imax-1}
    =
    \Theta_k\left(u^{1/(k-1)}\right).
\end{equation}
Using $\lceil x\rceil\le x+1$ in the sample allocation $n_i$ in~\eqref{eq:main-ni} gives
\begin{equation}
\label{eq:sample-geometric-preliminary}
    \sum_{i=1}^{\imax} n_i
    \le
    \sum_{i=1}^{\imax} \left(C_1 u^2
      \cdot 2^{(i-1)(2-k)}  +1\right)
    \le
    C_1u^2
    \sum_{j=0}^{\imax-1}2^{(2-k)j}
    +\imax.
\end{equation}
The ratio of the geometric sum is $2^{2-k}$.  Hence, the first scale dominates
when $k>2$, all scales contribute equally when $k=2$, and the final scale
dominates when $1<k<2$.  Using \eqref{eq:imax-order-detail} in the last case
gives
\[
    u^2 \cdot 2^{(2-k)(\imax-1)}
    =
    \Theta_k\left(
        u^{\,2+(2-k)/(k-1)}
    \right)
    =
    \Theta_k\left(
        u^{k/(k-1)}
    \right).
\]
Combining all three cases gives
\begin{equation}
\label{eq:base-sample-cases}
    \sum_{i=1}^{\imax}n_i
    =
    \begin{cases}
    O_k\left((\bar\sigma/\eps)^2\right),
        & k>2,\\[0.3em]
    O\left(
        (\bar\sigma/\eps)^2
        \log(\bar\sigma/\eps)
    \right),
        & k=2,\\[0.3em]
    O_k\left(
        (\bar\sigma/\eps)^{k/(k-1)}
    \right),
        & 1<k<2.
    \end{cases}
\end{equation}
Here the additive term $\imax=O_k(\log u)$ in
\eqref{eq:sample-geometric-preliminary} is absorbed in all three cases.

Multiplying \eqref{eq:base-sample-cases} by the two orientations and the
$K=O(\log(1/\delta_{\rm ref}))$ independent blocks, and using
$\bar\sigma=9\sigma$ and $\delta_{\rm ref}=\delta/2$,\footnote{Under the
standing convention that logarithmic factors in upper-bound rate expressions
are at least one,
$\log(\bar\sigma/\eps)=O(\log(\sigma/\eps))$ and
$\log(1/\delta_{\rm ref})=O(\log(1/\delta))$.}
gives the total refinement sample complexity

\[
    n_{\rm ref}
    =
    \begin{cases}
    O_k\left(
        (\sigma/\eps)^2 \cdot \log(1/\delta)
    \right),
        & k>2,\\[0.3em]
    O\left(
        (\sigma/\eps)^2
        \cdot \log(\sigma/\eps)
        \cdot \log(1/\delta)
    \right),
        & k=2,\\[0.3em]
    O_k\left(
        (\sigma/\eps)^{k/(k-1)}
        \cdot \log(1/\delta)
    \right),
        & 1<k<2.
    \end{cases}
\]
Lemma~\ref{lem:LS-localization} uses
\[
    n_{\rm loc}
    =
    O\left(
        \log\frac{\lambda}{\sigma}
        +
        \log\frac{1}{\delta_{\rm loc}}
    \right)
\]
additional samples for localization.  Recalling that~$\delta_{\rm loc}=\delta_{\rm ref}=\delta/2$, the localization-confidence term has the
same order as the amplification factor.  It is absorbed by every refinement
term since $\sigma/\eps>1$.  Hence,
$n_{\rm loc}+n_{\rm ref}$ has the order stated in
\eqref{eq:main-rate-compact}.

It remains only to verify the asserted query structure.
Appendix~\ref{sec:query-details} shows that the one-set of every refinement
query is a union of $O(\lambda/\sigma)$ intervals, and
Lemma~\ref{lem:LS-localization} gives the same bound for every localization
query.  The analysis above applies to independent queries with type-dependent laws.
Appendix~\ref{app:localization-iid} converts the prescribed query counts into
one i.i.d. query law without changing the sample order or interval bound.
This completes the proof of Theorem~\ref{thm:main}.
\section{Proof of the Sample--Interval Tradeoff
  (Theorem~\ref{thm:interval-complexity})}
\label{app:interval-complexity}

This appendix proves Theorem~\ref{thm:interval-complexity}.  For the lower bound, Appendix~\ref{sec:interval-complexity-noisy-lower} combines the unrestricted lower bound in Theorem~9 of \citet{lau2026order} with an extension of their adaptivity-gap argument for Theorem~11 from one interval to $s$ intervals per query.
For the upper bound, Appendix~\ref{sec:grouped-refinement-proof} adapts the refinement construction in Section~\ref{sec:construction} by dividing the query banks introduced in~\eqref{eq:main-query-bank} into groups of at most $s$ cells and verifies that
the groupwise targets and variances recombine correctly.  In contrast, adapting the localization strategy to the same interval restriction requires a more substantial redesign, developed in Appendix~\ref{app:sharp-localization}.
Appendix~\ref{sec:interval-complexity-assembly} then combines the resulting
sample counts, and Appendix~\ref{app:localization-iid} converts the prescribed
localization and refinement families into a single i.i.d. query law.

\subsection{Interval-Dependent Lower Bound}
\label{sec:interval-complexity-noisy-lower}

The class of non-adaptive 1-bit estimators whose queries are $s$-interval
is contained in the class of all 1-bit mean estimators.
Consequently, the unrestricted lower bound in~\citep[Theorem~9]{lau2026order} already gives the localization and refinement terms in~\eqref{eq:full-s-tradeoff}. It remains to prove the interval-dependent term. We follow the informative-query argument used in the proof of \citep[Theorem 11]{lau2026order}, which corresponds to $s=1$.  We use the same hard family and the same strategy of averaging the per-location testing errors.  The only change caused by allowing $s$ intervals is that one query
can be informative for at most $2s$ hard locations, rather than at most two.
We prove this count and then record the resulting testing inequalities.

If $\lambda<2\sigma$, the interval-dependent term is already dominated by
the unrestricted refinement lower bound.  We may therefore assume
henceforth that $\lambda\ge2\sigma$. The hard family used in the proof of \citep[Theorem 11]{lau2026order} provides $N=\Theta(\lambda/\sigma)$ pairwise disjoint two-point support segments $I_1,\ldots,I_N$ and, at each location $j$, two distributions $P_{j,-},P_{j,+}\in\calP(k,\lambda,\sigma)$ supported on the endpoints of $I_j$.  Their means differ by more than $2\eps$.  If a deterministic query takes the same value at the two endpoints of $I_j$, then its response laws under $P_{j,-}$ and $P_{j,+}$ coincide.  If it separates the endpoints, its response laws are
$\operatorname{Bern}(1/2+a)$ and $\operatorname{Bern}(1/2-a)$, in either
order, where $a=\eps/\sigma$.  By decreasing the constant in the
small-accuracy assumption if necessary, we may assume that $a\le1/4$.
The two directed KL divergences coincide, and a direct calculation gives
\begin{equation}
\label{eq:informative-response-kl}
    D_{\rm KL}\left(
      \operatorname{Bern}\left(\frac12+a\right)
      \,\middle\|\,
      \operatorname{Bern}\left(\frac12-a\right)
    \right)
    =
    2a\cdot\log\left(\frac{1+2a}{1-2a}\right)
    \le
    16a^2
    \le
    C\frac{\eps^2}{\sigma^2}.
\end{equation}
These are the only properties of the hard family needed below.

Consider an arbitrary randomized non-adaptive protocol using $n$ samples, and
condition on all its internal randomness $R$.  The queries are then
deterministic.  Call a query \emph{informative for location $j$} if it
separates the two endpoints of $I_j$, and let $n_j(R)$ be the number of
queries informative for that location.

For any fixed $s$-interval query $Q$, its one-set $A=Q^{-1}(1)$ has at most $2s$ finite
boundary points.  If $Q$ is informative for location $j$, then $I_j$
contains one of these boundary points.  Since the segments
$I_1,\ldots,I_N$ are pairwise disjoint, each boundary point can belong to at
most one of them.  Hence, each query is informative for at most $2s$
locations, and consequently
\begin{equation}
\label{eq:total-informative-count}
    \sum_{j=1}^N n_j(R)\le2sn
    \qquad\text{for every realization of }R.
\end{equation}
We now complete the per-location testing argument.  Conditional on $R$, let
$\mathbb P_{j,-}^R$ and $\mathbb P_{j,+}^R$ denote the two
response-transcript laws at location $j$, and define
$
    D_j(R)
    =
    D_{\rm KL}\left(
      \mathbb P_{j,+}^R
      \,\middle\|\,
      \mathbb P_{j,-}^R
    \right).
$
Uninformative queries contribute zero divergence.  Since the observations
are independent, tensorization of KL divergence
\citep[Theorem~2.16(c)]{Polyanskiy_Wu_2025} and
\eqref{eq:informative-response-kl} give
\[
    D_j(R)
    \le
    C n_j(R)\frac{\eps^2}{\sigma^2}.
\]
Thresholding the estimate at the midpoint of the two means gives a binary
test.  Since the means differ by more than $2\eps$, an incorrect decision
implies estimation error greater than $\eps$.  Let $e_j(R)$ be the
conditional testing error, averaged over the two hypotheses.  The
Bretagnolle--Huber and Pinsker inequalities
\citep[see, e.g.,][]{tsybakov2009introduction} give, respectively,
\[
    e_j(R)
    \ge
    \frac{1}{4}\exp\left(-D_j(R)\right)
    \quad \text{and} \quad
    e_j(R)
    \ge
    \frac{1}{2}-\sqrt{\frac{D_j(R)}8}.
\]
Set $e_j=\E_R[e_j(R)]$ and
$\bar e=N^{-1}\sum_{j=1}^N e_j$.  Averaging the preceding inequalities over
$R$ and $j$, applying Jensen's inequality, and using
\eqref{eq:total-informative-count} yield
\begin{equation}
\label{eq:average-testing-error-lower}
    \bar e
    \ge
    \frac{1}{4}\exp\left(
      -C\frac{sn\eps^2}{N\sigma^2}
    \right),
    \quad \text{and} \quad
    \bar e
    \ge
    \frac{1}{2}
    -
    C\sqrt{
      \frac{sn\eps^2}{N\sigma^2}
    }.
\end{equation}
An $(\eps,\delta)$-PAC estimator satisfies $\bar e\le\delta$.
Fix a sufficiently small universal constant
$\delta_0\in(0,1/3)$.  If $\delta\le\delta_0$, the first inequality in
\eqref{eq:average-testing-error-lower} gives
\begin{equation}
\label{eq:interval-lower-with-N}
    n
    =
    \Omega\left(
      \frac{N\sigma^2}{s\eps^2}
      \log\frac{1}{\delta}
    \right).
\end{equation}
If $\delta\in[\delta_0,1/3)$, the second inequality in
\eqref{eq:average-testing-error-lower} gives
$n=\Omega(N\sigma^2/(s\eps^2))$.  Since
$\log(1/\delta)=\Theta(1)$ uniformly over this latter range,
\eqref{eq:interval-lower-with-N} again follows.
Using $N=\Theta(\lambda/\sigma)$, we conclude that
\begin{equation}
\label{eq:noisy-interval-lower}
    n
    =
    \Omega\left(
      \frac{\lambda\sigma}{s\eps^2}\log\frac{1}{\delta}
    \right).
\end{equation}
Combining~\eqref{eq:noisy-interval-lower} with the unrestricted lower bound
of \citet[Theorem~9]{lau2026order} proves the lower side of
\eqref{eq:full-s-tradeoff}.

\subsection{Grouped Refinement under the Interval Budget}
\label{sec:grouped-refinement-proof}

This subsection adapts the refinement construction of
Sections~\ref{sec:queries-main}--\ref{sec:filter-main} to the interval budget.
At each scale, we partition the fixed query bank into groups of at most $s$
cells.  For each group, we use independent query randomness and an independent
observation, with the query bank restricted to that group.  After
localization, the decoder likewise restricts the cover and filter to the
group.  With these replacements, the filtered-demodulation and
stochastic-quantization identities from
Lemmas~\ref{lem:filtered-demodulation} and~\ref{lem:cell-stochastic} apply
groupwise.

The additional work is to verify that the groupwise first-moment targets
partition the original target, that the independent group variances add, and
that the enlarged query budget gives the desired interval-dependent term.
We establish the first two properties here.
Appendix~\ref{sec:interval-complexity-assembly} then combines them with the
bias, scale allocation, and amplification analysis of
Appendix~\ref{app:upper-bound} and sums the resulting groupwise query counts.
Table~\ref{tab:grouped-refinement-correspondence} previews the correspondence;
the grouped queries and decoder are defined formally below.

\begin{table}[htbp]
\centering
\small
\caption{Correspondence between the ungrouped refinement construction and
its group-$g$ counterpart.}
\label{tab:grouped-refinement-correspondence}
\begin{tabular}{@{}p{0.42\linewidth}p{0.52\linewidth}@{}}
\toprule
Ungrouped refinement & Group-$g$ counterpart \\
\midrule
$\calD_i^{\rm bank}$ in~\eqref{eq:main-query-bank}
  & consecutive group
    $\calG_{i,g}\subseteq\calD_i^{\rm bank}$ \\

$Q_{i,t}^d$ in~\eqref{eq:main-query}
  & independent group-restricted query
    $Q_{i,g,t}^d$ in~\eqref{eq:grouped-refinement-query} \\

$\calW_i(c,r)$ in~\eqref{eq:main-cover-map}
  & $\calT_{i,g}(c,r)$ in~\eqref{eq:grouped-cover-filter} \\

$\calF_i(c)$ in~\eqref{eq:main-filter}
  & $\calF_{i,g}(c)$ in~\eqref{eq:grouped-cover-filter} \\

$H_{i,t}^d(c)$ and $p_i(c)$ in~\eqref{eq:filter-indicator}
  & $H_{i,g,t}^d(c)$ and $p_{i,g}(c)$ in
   ~\eqref{eq:grouped-filter-indicator} \\

$V_{i,t}^d(c)$ in~\eqref{eq:main-decoded-variable}
  & group-restricted decoder $V_{i,g,t}^d(c)$ in
   ~\eqref{eq:grouped-decoded-variable} \\
\bottomrule
\end{tabular}
\end{table}

Appendix~\ref{app:sharp-localization} constructs an $s$-interval localizer
that supplies a center $c$ satisfying $|c-\mu|\le8\sigma$.  For the analysis
below, condition on successful localization and on the realized center $c$,
and set $\bar\sigma=9\sigma$.  Then $\E[|X-c|^k]\le\bar\sigma^k$
by~\eqref{eq:moment-transfer}.  We retain the scales $\ell_i$, cutoff
$\imax$, truncation radius $r$, fixed banks~$\calD_i^{\rm bank}$, covers~$\calW_i(c,r)$, 
and filters~$\calF_i(c)$ from Section~\ref{sec:construction}.
All probabilities, expectations, and variances below are conditional on this
fixed $c$ and are taken over the independent refinement observations and
public randomness.

\paragraph{Partitioning the fixed bank.}
By Lemma~\ref{lem:cover-main}, $\calD_i^{\rm bank}$ contains every
scale-$i$ cover cell.  The filter cells belong to the bank by definition.
Enumerate the bank from left to right as
\[
    \calD_i^{\rm bank}
    =
    \{J_{i,1},\ldots,J_{i,M_i}\}
    \quad \text{where} \quad
    M_i
    \le C\left(1+\frac{\lambda}{\ell_i}\right).
\]
The cardinality bound is~\eqref{eq:bank-cardinality-detail}.
Partition this list into consecutive blocks of size $s$, except possibly
the last, and denote them by~$\calG_{i,1},\ldots,\calG_{i,G_i}$.  Then
\begin{equation}
\label{eq:number-scale-groups}
    G_i
    =
    \left\lceil\frac{M_i}{s}\right\rceil
    \le
    1+\frac{C}{s}\left(1+\frac{\lambda}{\ell_i}\right) 
    \le
    C'\max\left\{1,\frac{\lambda}{s\ell_i}\right\},
\end{equation}
where the final inequality uses $s\ge1$.  We observe that each replicate at scale $i$ uses
$G_i$ observations per orientation, one for each group.
Lemma~\ref{lem:grouped-refinement-moments} below shows that grouping preserves the
required moment bounds, so $G_i$ enters only through the sample count.

\paragraph{Grouped queries and decoder.}
For every $(i,g,t,d)$, independently of all other index tuples, draw a split
fraction $\zeta_{i,g,t}^d$ and Rademacher signs
$\{\eta_{J,i,g,t}^d:J\in\calG_{i,g}\}$ as in
Section~\ref{sec:queries-main}.  For each $J\in\calG_{i,g}$, define $A_{J,i,g,t}^d$ by the same formula as
$A_{J,t}^d$ in~\eqref{eq:candidate_subintervals}, with
$\zeta_{i,t}^d$ replaced by $\zeta_{i,g,t}^d$.
The grouped version of~\eqref{eq:main-query} is
\begin{equation}
\label{eq:grouped-refinement-query}
    Q_{i,g,t}^d(x)
    =
    \1\left\{
      x\in
      \bigcup_{\substack{J\in\calG_{i,g}\\
                         \eta_{J,i,g,t}^d=+1}}
      A_{J,i,g,t}^d
    \right\}.
\end{equation}
Every realized query in~\eqref{eq:grouped-refinement-query} is a union of at
most $s$ bounded intervals.  Let
\[
    Y_{i,g,t}^d
    =
    Q_{i,g,t}^d(X_{i,g,t}^d),
\]
where the observations $X_{i,g,t}^d$ are independent across all indices and
independent of the public randomness.
The partition and all grouped queries are fixed before the localization
output $c$ is observed.  After localization, the decoder restricts the selected cover and filter to
each group by defining
\begin{equation}
\label{eq:grouped-cover-filter}
    \calT_{i,g}(c,r)
    =
    \calW_i(c,r)\cap\calG_{i,g}
    \quad \text{and} \quad
    \calF_{i,g}(c)
    =
    \calF_i(c)\cap\calG_{i,g}.
\end{equation}
These are the group-$g$ counterparts summarized in
Table~\ref{tab:grouped-refinement-correspondence}.  Define the associated
filter event and its probability by
\begin{equation}
\label{eq:grouped-filter-indicator}
    H_{i,g,t}^d(c)
    =
    \1\left\{
      \eta_{J,i,g,t}^d=-1
      \text{ for every }J\in\calF_{i,g}(c)
    \right\}
    \quad\text{and} \quad
    p_{i,g}(c)
    =
    2^{-|\calF_{i,g}(c)|}
    \ge\frac{1}{8},
\end{equation}
where the inequality follows because
$\calF_{i,g}(c)\subseteq\calF_i(c)$ and $|\calF_i(c)|\le3$.
Substituting these group-restricted objects into
\eqref{eq:main-decoded-variable} gives
\begin{equation}
\label{eq:grouped-decoded-variable}
    V_{i,g,t}^d(c)
    =
    \frac{
      2H_{i,g,t}^d(c)\cdot Y_{i,g,t}^d
    }{
      p_{i,g}(c)
    }
    \cdot
    \sum_{J\in\calT_{i,g}(c,r)}
      w_J^d(c)\cdot\eta_{J,i,g,t}^d.
\end{equation}

The next lemma verifies the grouped analogues of the two per-scale properties
in Lemma~\ref{lem:per-scale-main}.  The first identity shows that the
groupwise targets sum to the original first-moment target.  The second shows
that the group-summed statistic satisfies the same tail-local variance bound.
\begin{lemma}[Grouped Per-Scale Moment Bounds]
\label{lem:grouped-refinement-moments}
Let $\tau_1(c)=1$ and, for $i\ge2$, let
$\tau_i(c)=\Pr(|X-c|\ge\ell_i)$.  For every scale $i$ and replicate $t$,
\begin{equation}
\label{eq:grouped-lemma-first-moment}
    \sum_{g=1}^{G_i}
    \E\left[
      V_{i,g,t}^{\mathrm L}(c)+V_{i,g,t}^{\mathrm R}(c)
    \right]
    =
    \sum_{J\in\calW_i(c,r)}
      \E\left[(X-c)\cdot\1\{X\in J\}\right].
\end{equation}
Moreover, for every $d\in\{\mathrm L,\mathrm R\}$,
\begin{equation}
\label{eq:grouped-lemma-second-moment}
\begin{aligned}
    \Var\left(\sum_{g=1}^{G_i}V_{i,g,t}^d(c)\right)
    =
    \sum_{g=1}^{G_i}\Var\left(V_{i,g,t}^d(c)\right)
    \le
    \sum_{g=1}^{G_i}\E\left[(V_{i,g,t}^d(c))^2\right]
    \le
    2304\ell_i^2\cdot\tau_i(c).
\end{aligned}
\end{equation}
\end{lemma}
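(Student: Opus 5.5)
The plan is to apply Lemma~\ref{lem:filtered-demodulation} separately within each group, conditionally on the group's split fraction. For fixed $(i,g,t,d)$, condition on $\zeta_{i,g,t}^d$. The candidate intervals $\{A_{J,i,g,t}^d:J\in\calG_{i,g}\}$ are then fixed. They are pairwise disjoint because they lie in distinct scale-$i$ grid cells. The signs $\eta_{J,i,g,t}^d$ are independent Rademacher variables and are independent of $X_{i,g,t}^d$.

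We invoke the lemma with
\[
\calI=\calG_{i,g},\qquad \calT=\calT_{i,g}(c,r),\qquad \calF=\calF_{i,g}(c),\qquad w_J=w_J^d(c).
\]
The required disjointness $\calT\cap\calF=\varnothing$ follows from $\calF_i(c)\cap\calW_i(c,r)=\varnothing$ (Lemma~\ref{lem:filter-geometry}) by intersecting with $\calG_{i,g}$. Under this correspondence $V=V_{i,g,t}^d(c)$ and $p=p_{i,g}(c)$.

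\emph{First moment.} Identity~\eqref{eq:filtered-first-identity-abstract} gives the conditional mean
\[
\sum_{J\in\calT_{i,g}(c,r)} w_J^d(c)\,\1\{X\in A_{J,i,g,t}^d\}.
\]
Averaging over $\zeta_{i,g,t}^d$ and $X$, then summing the two orientations, Lemma~\ref{lem:cell-stochastic} gives $\sum_{J\in\calT_{i,g}(c,r)}\theta_J$. Here we use that $U_{J}^d$ is uniform on $J$ and independent of $X_{i,g,t}^d$. Since the groups partition $\calD_i^{\rm bank}$ and $\calW_i(c,r)\subseteq\calD_i^{\rm bank}$ (Lemma~\ref{lem:cover-main}), the sets $\calT_{i,g}(c,r)$ for $g=1,\dots,G_i$ partition $\calW_i(c,r)$. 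Summing over $g$ therefore yields~\eqref{eq:grouped-lemma-first-moment}.

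\emph{Second moment.} The variables $V_{i,g,t}^d(c)$ for different $g$ are functions of independent observations and independent public randomness (with $c$ fixed). Hence the variance of their sum equals the sum of variances, and each variance is bounded by the corresponding second moment. By~\eqref{eq:filtered-integrated-second-identity-abstract}, conditionally on $\zeta$,
\[
\E\bigl[V_{i,g,t}^d(c)^2\bigr]=\frac{2}{p_{i,g}(c)}\Bigl(\sum_{J\in\calT_{i,g}}w_J^2\Bigr)\Pr\Bigl(X\in\bigcup_{K\in\calG_{i,g}\setminus\calF_{i,g}}A_K\Bigr).
\]
We bound the three factors as follows.
\begin{itemize}
\item By~\eqref{eq:grouped-filter-indicator}, $2/p_{i,g}(c)\le16$.
\item For every group, $\Pr(X\in\bigcup_{K\in\calG_{i,g}\setminus\calF_{i,g}}A_K)\le\tau_i(c)$. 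For $i\ge2$, any $K\in\calG_{i,g}\setminus\calF_{i,g}(c)$ lies in $\calD_i^{\rm bank}\setminus\calF_i(c)$ and so satisfies $\dist(c,K)\ge\ell_i$. For $i=1$ the bound $1$ is trivial.
\item Summing the weight factor over $g$ and using that the $\calT_{i,g}$ partition $\calW_i(c,r)$ gives $\sum_g\sum_{J\in\calT_{i,g}}w_J^2=\sum_{J\in\calW_i(c,r)}w_J^2\le 9\cdot(4\ell_i)^2=144\ell_i^2$, by Lemma~\ref{lem:cover-main}.
\end{itemize}
Combining these, the total is at most $16\cdot144\,\ell_i^2\tau_i(c)=2304\,\ell_i^2\tau_i(c)$, matching the stated constant.

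The main obstacle is the last step. A naive per-group bound of $O(\ell_i^2\tau_i)$ would lose a factor $G_i$ after summing over groups. The key point is that the tail-probability factor is bounded uniformly in $g$ and can be pulled out of the sum. What remains is then the weight mass, which is partitioned across groups rather than replicated. Groups containing no cover cells contribute zero, since for them $S\equiv0$.
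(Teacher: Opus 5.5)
Your proposal is correct and follows essentially the same route as the paper's proof. You apply Lemma~\ref{lem:filtered-demodulation} groupwise, conditional on the split, and use the fact that the sets $\calT_{i,g}(c,r)$ partition $\calW_i(c,r)$ both to recombine the first moments and to sum the weight factors. Together with the uniform bounds $2/p_{i,g}(c)\le 16$ and the tail factor $\tau_i(c)$, this yields the constant $16\cdot 9\cdot(4\ell_i)^2=2304\,\ell_i^2$. Independence across groups then gives variance additivity.
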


\begin{proof}
Fix $i,g,t,d$ and condition first on the split
$\zeta_{i,g,t}^d$.  The intervals
$\{A_{J,i,g,t}^d:J\in\calG_{i,g}\}$ are then fixed and pairwise disjoint.
Moreover, Lemma~\ref{lem:filter-geometry} gives
\[
  \calT_{i,g}(c,r)\cap\calF_{i,g}(c)=\varnothing.
\]
We may therefore apply Lemma~\ref{lem:filtered-demodulation} with
\[
  \calI=\calG_{i,g},
  \qquad
  A_J=A_{J,i,g,t}^d,
  \qquad
  \calT=\calT_{i,g}(c,r),
  \qquad
  \calF=\calF_{i,g}(c),
  \qquad
  w_J=w_J^d(c),
  \qquad
  X=X_{i,g,t}^d.
\]
Its first pointwise identity
\eqref{eq:filtered-first-identity-abstract}, followed by averaging over the
observation and the split, gives
\[
  \E[V_{i,g,t}^d(c)]
  =
  \sum_{J\in\calT_{i,g}(c,r)} 
    w_J^d(c) \cdot \Pr(X_{i,g,t}^d\in A_{J,i,g,t}^d).
\]
Summing this identity over $d\in\{\mathrm L,\mathrm R\}$ and applying
Lemma~\ref{lem:cell-stochastic} recovers
\[
  \sum_{J\in\calT_{i,g}(c,r)}
    \E[(X-c) \cdot \1\{X\in J\}].
\]
The sets $\calT_{i,g}(c,r)$ partition $\calW_i(c,r)$ as $g$ varies, which
proves~\eqref{eq:grouped-lemma-first-moment}.
For the second moment, the identity~\eqref{eq:filtered-integrated-second-identity-abstract} from Lemma~\ref{lem:filtered-demodulation} gives, conditional on
$\zeta_{i,g,t}^d$,
\begin{equation}
\label{eq:grouped-one-group-second-moment}
    \E\left[
      (V_{i,g,t}^d(c))^2
      \,\middle|\,
      \zeta_{i,g,t}^d
    \right]
    =
    \frac{2}{p_{i,g}(c)}
    \left(
      \sum_{J\in\calT_{i,g}(c,r)}
        (w_J^d(c))^2
    \right)
    \cdot
    \Pr\left(
      X_{i,g,t}^d
      \in
      \bigcup_{K\in\calG_{i,g}\setminus\calF_{i,g}(c)}
        A_{K,i,g,t}^d
      \,\middle|\,
      \zeta_{i,g,t}^d
    \right).
\end{equation}
For $i\ge2$, every
$K\in\calG_{i,g}\setminus\calF_{i,g}(c)$ lies outside
$\calF_i(c)$ and therefore satisfies $\dist(c,K)\ge\ell_i$.  Since
$A_{K,i,g,t}^d\subseteq K$, the conditional probability in
\eqref{eq:grouped-one-group-second-moment} is at most
\[
    \Pr(|X-c|\ge\ell_i)=\tau_i(c).
\]
For $i=1$, it is at most $\tau_1(c) = 1$.  Since $p_{i,g}(c)\ge1/8$, taking expectation over the split
therefore yields
\[
  \E[(V_{i,g,t}^d(c))^2]
  \le
  16\tau_i(c)
  \sum_{J\in\calT_{i,g}(c,r)}(w_J^d(c))^2.
\]
The groupwise target sets partition $\calW_i(c,r)$.  Lemma~\ref{lem:cover-main}
gives $|\calW_i(c,r)|\le9$ and $|w_J^d(c)|<4\ell_i$, and hence
\[
  \sum_{g=1}^{G_i}\E[(V_{i,g,t}^d(c))^2]
  \le
  16\tau_i(c)\cdot9(4\ell_i)^2
  =2304\ell_i^2 \cdot \tau_i(c).
\]
Finally, different groups use independent observations and public randomness.
Hence, the variance of their sum is the sum of their variances, which is at most
the sum of their second moments.  This proves the second assertion.
\end{proof}
Lemma~\ref{lem:grouped-refinement-moments} shows that the group-summed decoded
variables satisfy the same per-scale first-moment identity and tail-local
variance bound as the ungrouped variables in
Lemma~\ref{lem:per-scale-main}.  Consequently, the bias, variance
aggregation, scale allocation, and amplification arguments of
Appendix~\ref{app:upper-bound} apply after each ungrouped scale statistic is
replaced by its sum over groups.  The only sample-count change is that one
replicate at scale $i$ uses $G_i$ observations per orientation.
Appendix~\ref{sec:interval-complexity-assembly} accounts for this factor.

\subsection{\texorpdfstring{$s$}{s}-Interval Localization}
\label{app:sharp-localization}

This subsection proves the localization guarantee used in the upper bound of Theorem~\ref{thm:interval-complexity}.  The main challenge is to restrict every query one-set to at most $s$ intervals while preserving~$\log(\lambda/\sigma)$ as an additive coding term.  The precise guarantee is
as follows.

\begin{lemma}[\texorpdfstring{$s$}{s}-interval localization]
\label{lem:sharp-s-localization}
For every $\lambda\ge\sigma>0$, every integer $s\ge1$, and
$\delta_{\rm loc}\in(0,1)$, there is a randomized non-adaptive protocol
using independent query functions that returns $c\in[-\lambda,\lambda]$
satisfying
\[
    \Pr\left(
      |c-\mu|\le8\sigma
    \right)
    \ge
    1-\delta_{\rm loc},
\]
uniformly over all distributions with $\mu\in[-\lambda,\lambda]$ and
$\E[|X-\mu|]\le\sigma$.  Every realized one-set has at most $s$ interval
components, and the sample complexity is
\begin{equation}
\label{eq:sharp-s-localization}
    n_{\rm loc}^{(s)}
    =
    O\left(
      \log\frac{\lambda}{\sigma}
      +
      \max\left\{
        1,\frac{\lambda}{s\sigma}
      \right\}
      \cdot\log\frac{1}{\delta_{\rm loc}}
    \right).
\end{equation}
\end{lemma}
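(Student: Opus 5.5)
The plan splits into two regimes, following the proof sketch of Theorem~\ref{thm:interval-complexity}. In both regimes the central tool is a random-threshold localizer on a window of width $W$. It uses $O((W/\sigma)\log(1/\delta_{\rm loc}))$ samples, and this cost must be \emph{linear} in $W/\sigma$, not quadratic. For $s\le 8$ (more generally, whenever $\lambda/(s\sigma)\gtrsim 1$ up to constants) I would apply it directly with $W=\Theta(\lambda)$, using threshold queries $\1\{x\ge T\}$, which are $1$-interval. For $s>8$ I would combine a block-index code with a periodic (folded) version of the same localizer on a window of width $L=\Theta(\max\{\sigma,\lambda/s\})$. I would then stabilize the result by taking a median over three shifted block partitions.

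\emph{Random-threshold core.} Draw $T$ uniform on a grid of $\sigma$-spaced points in $[-W/2-C\sigma, W/2+C\sigma]$ and record $Y=\1\{X\ge T\}$. For each grid point $x_m$, the responses with $T=x_m$ give unbiased estimates of $1-F(x_m)$. Markov's inequality with $\E|X-\mu|\le\sigma$ gives $F(\mu-j\sigma)\le 1/j$ and $1-F(\mu+j\sigma)\le 1/j$. The decoder returns the grid point $c$ where the empirical survival function crosses $1/2$. It is projected onto $[-\lambda,\lambda]$, and it lies within $4\sigma$ of $\mu$ whenever no grid point at distance $\ge 3\sigma$ is misclassified. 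A single grid point receives about $n\sigma/W$ samples. The risk here is the cost of a union bound over all $W/\sigma$ grid points, which would add an extraneous $\log(W/\sigma)$ factor. I would avoid it because the margin grows with distance: at distance $j\sigma$ the gap from $1/2$ is at least $1/2-1/j$. One could also pool samples over dyadic groups of grid points. Either way, the Chernoff failure probabilities at distance $j\sigma$ decay fast enough that summing over $j$ gives $O(\delta_{\rm loc})$ once $n\sigma/W\gtrsim\log(1/\delta_{\rm loc})$. This is the step I expect to be the main obstacle, so I would attempt it first and carefully. The aim is to make the sum over far grid points genuinely geometric without losing a $\log(W/\sigma)$ factor.

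\emph{Block structure for $s>8$.} Partition a guarded range into at most $s/3$ consecutive blocks of width $L\ge 5\sigma$. For the block index, I would reuse the coding localizer from Appendix~\ref{app:tight-coding-localization}, with bins replaced by blocks and codewords taken from Lemma~\ref{lem:nearly-equidistant-code} with $\xi=1/4$. Each query is then a union of whole blocks, so it has at most $s$ components. The same disagreement-score argument decodes the true block or a neighbor with $O(\log s+\log(1/\delta_{\rm loc}))=O(\log(\lambda/\sigma)+\log(1/\delta_{\rm loc}))$ samples. For the offset, I would place the same relative threshold $UL$ in every block, with one-set $\bigcup_b[a_b+UL,a_b+L)$; this has at most $s$ components. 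The response then depends only on the folded variable $\phi(X)=(X-a)\bmod L$. Applying the random-threshold core to $\phi(X)$ with $W=L$ locates $\phi(\mu)$ to within $O(\sigma)$ using $O((L/\sigma)\log(1/\delta_{\rm loc}))$ samples. This works provided $\mu$ sits at distance $\ge L/3$ from block boundaries, since then folding does not move the Markov tail bounds near $\phi(\mu)$.

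\emph{Shifts and assembly.} I would use three partitions shifted by $L/3$ and $2L/3$, taking $L\ge 30\sigma$ so the margins absorb the $O(\sigma)$ errors. Every $\mu$ is then safely interior for at least two of them. Each of these two reconstructions (block index plus offset) is within $8\sigma$ of $\mu$ except with probability $\delta_{\rm loc}/3$. The median of the three reconstructions therefore lies between two good values and inherits the error bound. Summing the sample counts gives \eqref{eq:sharp-s-localization}. All query families are fixed in advance and mutually independent. The constants $4\sigma$, $30\sigma$, and so on would be retuned at the end so that the final error is at most $8\sigma$.
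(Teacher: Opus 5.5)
Your architecture is the paper's: thresholds for $s\le 8$, and for $s>8$ a code over whole blocks plus one common within-block threshold, on three partitions shifted by $L/3$, combined by a median. Two steps of the $s>8$ assembly fail as written.

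\emph{First gap: the block index.} You claim the disagreement-score argument recovers ``the true block or a neighbor.'' The assembly, however, needs each safe reconstruction $\beta+\widehat q L+\widehat u$ to be within $8\sigma$ of $\mu$. A neighbor error shifts it by exactly $L$, and in the main regime $L=\Theta(\lambda/s)\gg\sigma$. So, unlike the $5\sigma$ bins of Appendix~\ref{app:tight-coding-localization}, neighbor-level decoding is useless here. Moreover, that three-bin comparison needs $\xi\approx 0$; with $\xi=1/4$ it yields no gap at all.

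The missing step is to use safety for the block index as well. If $\mu$ is at distance $\ge\rho$ from every boundary, then $\Pr(X\notin B_\star)\le\sigma/\rho$ for the true block $B_\star$. The true block then has population disagreement at most $\sigma/\rho$, while every other block has at least $(1-\sigma/\rho)/4$. For $\sigma/\rho\le 1/16$ this constant gap ($11/64$ in the paper) gives \emph{exact} recovery from $O(\log K+\log(1/\alpha))$ queries.

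\emph{Second gap: the safety radius.} Take boundary phases $0,L/3,2L/3$. The points at distance $\ge L/3$ from one partition's boundaries form the middle thirds of its blocks, and the three partitions' middle thirds tile the line. So a generic $\mu$ is safe for exactly one partition, not two, and the two unsafe reconstructions can drag the median away. You need $\rho\le L/6$, so that the radius-$\rho$ neighborhoods of the three phases are disjoint; this is the paper's choice, with $L=96M\sigma$.

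Your folded offset argument survives at this radius, but not because folding leaves the far tail bounds untouched. Folding adds contamination $\Pr(X\notin B_\star)\le 6\sigma/L$ at every grid point. This is harmless only because all grid points lie within distance $L$ of $\phi(\mu)$, so $6\sigma/L<6/j$ at distance $j\sigma$ and the $O(1/j)$ tail decay survives.

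\emph{Comparison of the threshold core.} Your per-grid-point crossing decoder is a legitimate alternative to the paper's random-threshold lemma. However, what avoids the $\log(W/\sigma)$ loss is not the additive gap $1/2-1/j$: that gap saturates, so Hoeffding gives $e^{-\Theta(m)}$ at every far point. What works is the multiplicative bound $\Pr(\mathrm{Bin}(m,p)\ge m/2)\le(4p)^{m/2}$ with $p\le 1/j$, which is summable in $j$.

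With i.i.d.\ uniform grid draws there is a further wrinkle. Far points that receive only one or two samples fail with probability of order their tail mass $p_j$. Their total is controlled by $\sum_j p_j\le\E|X-\mu|/\sigma\le 1$; the bound $p_j\le 1/j$ alone leaves a $\log\log(W/\sigma)$ term.

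The paper's decoder is an ERM over thresholds, which sidesteps all of this. Every candidate $v>u+2h$ shares the $N_+\sim\mathrm{Bin}(n,h/L)$ samples in $[u+h,u+2h)$, so one geometric sum over ordered partial sums controls all candidates at once. It needs only agreement probability $3/4$ outside the margin, with no tail decay, which is exactly what the folded offset queries supply.

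Finally, your parenthetical that direct thresholds suffice whenever $\lambda/(s\sigma)\gtrsim 1$ is wrong. They cost $\Theta((\lambda/\sigma)\log(1/\delta_{\rm loc}))$, which fits the target essentially only when $s=O(1)$.
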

The construction below uses independent, type-dependent query families.
Appendix~\ref{sec:iid-s-localization} converts them into the single i.i.d.\
query law used in Theorem~\ref{thm:interval-complexity}.
Before giving the formal construction, we explain why direct grouping is
suboptimal and give a brief roadmap of the two localization regimes.  The
query families and their analysis are defined formally in the proof of
Lemma~\ref{lem:sharp-s-localization}, following
Lemma~\ref{lem:random-threshold-localization}.

\paragraph{Why direct grouping loses the additive coding term.}
The coding localizer of Lemma~\ref{lem:LS-localization} represents the
possible locations of a mean in $[-\lambda,\lambda]$ by
$\Theta(\lambda/\sigma)$ bins of width
$\Theta(\sigma)$.  The
one-set for a code coordinate is a union of these bins and may therefore have
$O(\lambda/\sigma)$ interval components. Splitting each such one-set into
subunions of at most $s$ components, as in
Appendix~\ref{sec:grouped-refinement-proof}, produces
$G=O(\max\{1,\lambda/(s\sigma)\})$ groups per coordinate. A direct groupwise implementation then incurs this
factor $G$ in the sample count, giving sample
count
$O\left( G
      \cdot
      \left(
        \log(\lambda/\sigma)
        +
        \log(1/\delta_{\rm loc})
      \right)
    \right).
$
Compared to~\eqref{eq:sharp-s-localization}, this gives the desired confidence cost
$G\log(1/\delta_{\rm loc})$ but the additive coding cost~$\log(\lambda/\sigma)$ is replaced by $G\log(\lambda/\sigma)$. Direct grouping is therefore suboptimal in general.  The construction below avoids this multiplicative dependence.

\paragraph{Random thresholds for
  \texorpdfstring{$s\le8$}{s <= 8}.}
For $s\le8$, draw independent thresholds
$T\sim\operatorname{Unif}[-\lambda-4\sigma,\lambda+4\sigma]$ and use the
threshold query $Q_T(x)=\1\{x\le T\}$.
The proof below applies
Lemma~\ref{lem:random-threshold-localization} with margin $h=4\sigma$ to
show that these queries localize the mean to within $8\sigma$ using
$O((\lambda/\sigma)\cdot\log(1/\delta_{\rm loc}))$ samples.  
Since $s\le8$, this has the order required in~\eqref{eq:sharp-s-localization}.

\paragraph{Block and offset for
  \texorpdfstring{$s>8$}{s > 8}.}
For $s>8$, we use the block-and-offset construction illustrated in
Figure~\ref{fig:shifted-block-localization}.  For each of three shifts, we
choose $L=\Theta\left(\max\left\{\sigma, \lambda/s\right\}\right)$
and partition $\mathbb R$ into blocks of width $L$.  The constants are
chosen so that at most $s$ blocks in each partition intersect
$[-\lambda,\lambda]$.  For a fixed shift, localizing $\mu$ amounts to
identifying the block containing it and estimating its offset from that
block's left endpoint.  To identify the block, we apply the coding mechanism of
Lemma~\ref{lem:LS-localization}, treating the blocks that intersect
$[-\lambda,\lambda]$ as the candidate locations.  To form an offset query, we draw a single
$T\sim\operatorname{Unif}[0,L]$.  For every block $[b,b+L)$ that intersects
$[-\lambda,\lambda]$, we include the subinterval $[b,b+T]$ in the query
one-set.  When the observation $X$ falls in the block $[b,b+L)$ containing
$\mu$, the returned bit is $\1\{X-b\le T\}$, an ordinary random-threshold
response for estimating the offset $\mu-b$.

If $\mu$ is near a block boundary, even an observation close to $\mu$ may
fall in a neighboring block.  We therefore call a shifted partition
\emph{safe} when $\mu$ is at least $L/6$ from every block boundary.  Because
the three boundary phases are separated by $L/3$, at least two of the three
partitions are safe for every $\mu$.  The proof below shows that, with high
probability, the block index and offset are recovered accurately for every
safe partition.  Thus at least two of the three resulting estimates are
accurate, and so is their median.
 
\begin{figure}[!t]
\centering
\textbf{(a) Shifted partitions}\par
\vspace{0.40em}
\begin{tikzpicture}[x=1.10cm,y=0.95cm,font=\small]
  \fill[filterred!10] (-0.7,-0.38) rectangle (1.3,2.38);
  \draw[filterred,densely dashed] (0.3,-0.48)--(0.3,2.48);
  \node[filterred,anchor=west,font=\small] at (0.42,2.30) {$\mu$};
  \foreach \y/\lab in {1.8/{shift $0$},0.9/{shift $L/3$},0/{shift $2L/3$}} {
    \draw[gray!75,thick] (-2.5,\y)--(8.5,\y);
    \node[anchor=east,font=\small] at (-2.75,\y) {\lab};
  }
  \foreach \x in {0,6} {
    \draw[queryblue,very thick] (\x,1.52)--(\x,2.08);
  }
  \foreach \x in {2,8} {
    \draw[queryblue,very thick] (\x,0.62)--(\x,1.18);
  }
  \foreach \x in {-2,4} {
    \draw[queryblue,very thick] (\x,-0.28)--(\x,0.28);
  }
  \node[filterred,anchor=west,font=\small] at (0.52,2.10) {unsafe};
  \node[covergreen!60!black,anchor=west,font=\small] at (0.52,1.20) {safe};
  \node[covergreen!60!black,anchor=west,font=\small] at (0.52,0.30) {safe};
\end{tikzpicture}

\vspace{1.20em}
\textbf{(b) Query families for one shifted partition}\par
\vspace{0.65em}
\begin{tikzpicture}[x=1.55cm,y=1.05cm,font=\small]
  \begin{scope}[xscale=1.50]
    \node[anchor=east,font=\small] at (-0.25,1.00)
      {offset query};
    \node[anchor=east,font=\small] at (-0.25,-0.90)
      {block-index query};

    \foreach \q in {0,1,2,3} {
      \fill[queryblue!22] (\q,0.65) rectangle (\q+0.58,1.35);
      \draw[gray!75] (\q,0.65) rectangle (\q+1,1.35);
    }
    \draw[<->,queryblue,thick] (0,1.65)--(0.58,1.65);
    \node[queryblue,font=\small,anchor=south] at (0.29,1.68) {$T$};

    \draw[filterred,densely dashed,thick] (1.72,0.28)--(1.72,1.68);
    \node[filterred,font=\small,anchor=west] at (1.80,1.48) {$\mu$};
    \draw[<->,filterred,thick] (1,0.36)--(1.72,0.36);
    \node[filterred,font=\small,anchor=north] at (1.36,0.30) {$u_\nu$};

    \foreach \q in {0,1,3} {
      \fill[covergreen!25] (\q,-1.25) rectangle (\q+1,-0.55);
    }
    \foreach \q in {0,1,2,3} {
      \draw[gray!75] (\q,-1.25) rectangle (\q+1,-0.55);
    }
    \foreach \q/\bit in {0/1,1/1,2/0,3/1} {
      \node[font=\small] at (\q+0.5,-0.90) {$\bit$};
    }
  \end{scope}
\end{tikzpicture}
\caption{The shifted block construction.  In panel~(a), each row represents a partition of $\mathbb R$ into blocks of width $L=\Theta\left(\max\left\{\sigma, \lambda/s\right\}\right)$, with the three partitions shifted relative to one another; the blue ticks mark their block boundaries.  The dashed red line marks $\mu$, and the shaded strip consists of the points at distance less than $L/6$ from $\mu$.  Because the three boundary phases are separated
by $L/3$ modulo $L$, this strip contains a boundary from at most one
partition; hence at least two partitions are safe.  In panel~(b), the offset
query uses the same draw $T\in[0,L]$ in every block that intersects
$[-\lambda,\lambda]$: from each block $[b,b+L)$, it includes the subinterval
$[b,b+T]$.  The block-index query includes each such block whose codeword
has a one in the sampled coordinate. Since at most $s$ blocks intersect $[-\lambda,\lambda]$, both one-sets
have at most $s$ interval components.}
\label{fig:shifted-block-localization}
\end{figure}

\paragraph{A common random-threshold lemma.}
Both regimes use the following auxiliary lemma with an unknown point
$u\in[0,L]$ and a margin parameter $h$.  The response to a uniformly sampled threshold $T$ must agree with the ideal comparison $\1\{u\le T\}$ with probability at least $3/4$ whenever $|T-u|\ge h$; no condition is imposed when $|T-u|<h$.  The proof of Lemma~\ref{lem:sharp-s-localization} shows that this condition
holds with $h=4\sigma$ for both the $s\le8$ threshold queries and the offset queries under a safe shift.

\begin{lemma}[Random-threshold localization]
\label{lem:random-threshold-localization}
Fix an interval length $L>0$, an unknown point $u\in[0,L]$, a margin
parameter $h\in(0,L]$, and a confidence level $\alpha\in(0,1)$. 
Let $n$ be a positive integer satisfying
\begin{equation}
\label{eq:random-threshold-sample-count}
    n
    \ge
    C\frac{L}{h}
    \cdot\log\frac{2}{\alpha},
\end{equation}
where $C>0$ is a sufficiently large universal constant.
Let $(T_j,Y_j)_{j=1}^n$ be independent pairs, where
$T_j\sim\operatorname{Unif}[0,L]$ and $Y_j\in\{0,1\}$.  Suppose that, for each $j\in\{1,\dots,n\}$ and for Lebesgue-almost every
$t\in[0,L]\setminus(u-h,u+h)$,
\begin{equation}
\label{eq:random-threshold-agreement}
    \Pr\left(
      Y_j=\1\{u \le t\}
      \,\middle|\,
      T_j=t
    \right)
    \ge
    \frac{3}{4}.
\end{equation}
Let $\widehat u$ be any minimizer of the empirical disagreement count:
\begin{equation}
\label{eq:random-threshold-minimizer}
    \widehat u
    \in
    \argmin_{w\in[0,L]}
    \sum_{j=1}^n
      \1\left\{
        Y_j\ne\1\{T_j\ge w\}
      \right\}.
\end{equation}
Then
\[
    \Pr\left(
      |\widehat u-u|\le2h
    \right)
    \ge
    1-\alpha.
\]
\end{lemma}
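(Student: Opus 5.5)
The plan is to avoid comparing a far-away candidate $w$ with $u$ itself. The population gap between $w$ and $u$ loses up to $h$ in the uncontrolled band $(u-h,u+h)$, which is what previously forced the weaker $4h$ guarantee. Instead, for every $w$ with $w>u+2h$ we compare $w$ with the nearer competitor $v=u+h$. This $v$ lies in $[0,L]$, since otherwise $w\le L<u+2h$. Symmetrically, for $w<u-2h$ we compare with $v=u-h$. If every such $w$ has a strictly larger empirical disagreement count than the corresponding $v$, then no minimizer $\widehat u$ can lie outside $[u-2h,u+2h]$, which is exactly the claim.

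\textbf{Step 1 (exact difference).} Consider the right side, $w>v=u+h$. For each $j$, the predictions $\1\{T_j\ge w\}$ and $\1\{T_j\ge v\}$ differ only when $T_j\in[v,w)$. There the candidate $v$ predicts $1=\1\{u\le T_j\}$, the ideal answer. Hence
\[
  \Delta(w)
  \coloneqq
  \sum_j\1\{Y_j\ne\1\{T_j\ge w\}\}-\sum_j\1\{Y_j\ne\1\{T_j\ge v\}\}
  =
  N[v,w)-2\,\mathrm{Dis}[v,w),
\]
where $N[a,b)$ counts indices with $T_j\in[a,b)$, and $\mathrm{Dis}[a,b)$ counts those that in addition satisfy $Y_j\ne\1\{u\le T_j\}$. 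Every $t\in[v,w)$ has $|t-u|\ge h$, so \eqref{eq:random-threshold-agreement} gives $\E\,\mathrm{Dis}[a,b)\le \tfrac14\E N[a,b)$ for $[a,b)\subseteq[v,L]$. The left side is symmetric.

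\textbf{Step 2 (discretization).} Let $w_m=v+mh/8$ for $m\ge 8$. For $w\in[w_m,w_{m+1})$, monotonicity of the counts gives
\[
  \Delta(w)\ge N[v,w_m)-2\,\mathrm{Dis}[v,w_{m+1}).
\]
The expectation of the right-hand side is at least
\[
  \frac{nh}{8L}\Bigl(m-\tfrac{m+1}{2}\Bigr)=\frac{nh(m-1)}{16L}\ge \frac{nhm}{32L}>0 .
\]
The case where $w_m$ runs past $L$ only shortens intervals and is handled identically. Thus it suffices to show that, simultaneously for all $m$ with $8\le m\le 8L/h$, one has $N[v,w_m)\ge \tfrac34\E N[v,w_m)$ and $\mathrm{Dis}[v,w_{m+1})\le \tfrac14\E N[v,w_{m+1})+\tfrac{nhm}{64L}$. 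With constant slack one then gets $\Delta(w)>0$ for all $w\ge v+h$, and in particular for all $w>u+2h$.

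\textbf{Step 3 (concentration; main obstacle).} A naive union bound over the $\Theta(L/h)$ grid points would cost an extra $\log(L/h)$, which \eqref{eq:random-threshold-sample-count} does not allow. This is the step I expect to be delicate. The plan is to use multiplicative Chernoff bounds, because both counts are binomial with means of order $nhm/L$. Each event above then fails with probability at most $\exp(-c\,nhm/L)\le(\alpha/2)^{c'Cm}$ for a universal constant $c'$. Summing this geometric series over $m\ge8$, and over both sides, gives a total failure probability at most $\alpha$ once $C$ is large. On the complementary event every $w\notin[u-2h,u+2h]$ is strictly beaten by $u\pm h$, so $|\widehat u-u|\le2h$.
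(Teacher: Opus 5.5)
Your proposal is correct in outline, up to a constant slip in Step 2 noted below, and it shares the paper's key idea. Every far candidate is beaten by the nearer competitor $u\pm h$, so the count difference is a sum over thresholds in $[u+h,w)$ with drift bounded away from zero. The failure probabilities then decay geometrically in the distance from $u\pm h$, which is why no $\log(L/h)$ appears.

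The routes differ in how uniformity over $w$ is obtained. The paper conditions on $T_1,\dots,T_n$ and observes that $\widehat R(w)-\widehat R(u+h)=\sum_{k\le r}Z_k^+$ depends on $w$ only through $r=\#\{j:T_j\in[u+h,w)\}$. Here $Z_k^+=2Y_k^+-1\in\{\pm1\}$ are the responses ordered by threshold, each with conditional mean at least $1/2$. Every $w>u+2h$ has $r\ge N_+=\#\{j:T_j\in[u+h,u+2h)\}$. Hoeffding plus a union bound over $r\ge N_+$ therefore gives conditional failure probability at most $e^{-N_+/8}/(1-e^{-1/8})$. The binomial moment generating function of $N_+$ then converts this into $9e^{-nh/(9L)}$.

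You instead peel in space over a deterministic $h/8$-grid, sandwich $\Delta(w)$ by monotonicity, and control two counts separately with multiplicative Chernoff bounds. This is a standard peeling argument and it works. The paper's indexing by the data-dependent count $r$ is exact, however: it needs no discretization, uses one Hoeffding bound per index, and yields explicit constants.

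The slip: your sufficient conditions in Step 2 are too loose. With $N[v,w_m)\ge\frac34\E N[v,w_m)$ and additive slack $\frac{nhm}{64L}$ you only get
\[
\Delta(w)\ge\frac{nh}{8L}\Bigl(\frac{3m}{4}-\frac{m+1}{2}-\frac{m}{4}\Bigr)=-\frac{nh}{16L},
\]
because the permitted deviations total $\frac{nhm}{16L}$, which exceeds the drift $\frac{nh(m-1)}{16L}$. Tightening to, for example, $N[v,w_m)\ge\frac78\E N[v,w_m)$ with slack $\frac{nhm}{256L}$ gives $\Delta(w)\ge\frac{nh(5m-8)}{128L}>0$ for $m\ge 8$. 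The Chernoff exponents remain of order $nhm/L$, so Step 3 goes through unchanged.

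A minor point: $\mathrm{Dis}[\cdot)$ need not be binomial, since the lemma assumes the pairs are independent but not identically distributed. It is a sum of independent indicators with total mean at most $\frac14\E N$, which is all the upper-tail Chernoff bound requires.
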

The proof of Lemma~\ref{lem:random-threshold-localization} is deferred to
Appendix~\ref{app:random-threshold-proof}.  We now use its stated guarantee
to prove Lemma~\ref{lem:sharp-s-localization}.

\begin{proof}[Proof of Lemma~\ref{lem:sharp-s-localization}]
Fix an arbitrary distribution of $X$ with mean
$\mu\in[-\lambda,\lambda]$ and
$\E[|X-\mu|]\le\sigma$.  We construct below a protocol depending only on
$\lambda$, $\sigma$, $s$, and $\delta_{\rm loc}$, and prove its guarantee
for this arbitrary distribution.

\emph{Random thresholds for $s\le8$.}
First suppose $s\le8$.  For each $j\in\{1,\ldots,n\}$, independently draw
$T_j\sim\operatorname{Unif}[-\lambda-4\sigma,\lambda+4\sigma]$ and issue
the threshold query $Q_j(x)=\1\{x\le T_j\}$.  Write $Y_j=Q_j(X_j)$.
For every possible threshold value $t$ with $|t-\mu|\ge4\sigma$,
conditional on $T_j=t$ the event $|X_j-\mu|<4\sigma$ implies
$Y_j=\1\{t\ge\mu\}$.
Since $T_j$ is independent of $X_j$, Markov's inequality therefore gives
\begin{equation}
\label{eq:global-threshold-agreement}
    \Pr\left(
      Y_j=\1\{t\ge\mu\}
      \,\middle|\,
      T_j=t
    \right)
    \ge
    \Pr\left(
      |X_j-\mu|<4\sigma
    \right)
    \ge
    \frac{3}{4}.
\end{equation}
To put the construction in the form required by
Lemma~\ref{lem:random-threshold-localization}, we translate the threshold
interval to start at zero.  Specifically, we set
\[
    \widetilde T_j=T_j+\lambda+4\sigma,
    \qquad
    L_0=2\lambda+8\sigma
    \quad\text{and}\quad
    u_0=\mu+\lambda+4\sigma.
\]
The pairs $(\widetilde T_j,Y_j)$ are independent across $j$, and
$\widetilde T_j\sim\operatorname{Unif}[0,L_0]$.  Since
$\widetilde T_j-u_0=T_j-\mu$, the translation preserves both the margin and
the ideal threshold response.  Hence, the preceding agreement bound~\eqref{eq:global-threshold-agreement} verifies the hypotheses of
Lemma~\ref{lem:random-threshold-localization} with $L=L_0$, $u=u_0$, and
$h=4\sigma$.
Taking $\alpha=\delta_{\rm loc}$, the lemma produces an estimate
$\widehat u_0$ satisfying
\[
    \Pr\left(
      |\widehat u_0-u_0|\le8\sigma
    \right)
    \ge
    1-\delta_{\rm loc}.
\]
By the sample requirement~\eqref{eq:random-threshold-sample-count}, the required number of samples is
\[
    O\left(
      \frac{L_0}{4\sigma}
      \cdot\log\frac{2}{\delta_{\rm loc}}
    \right)
    =
    O\left(
      \frac{2\lambda+8\sigma}{4\sigma}
      \cdot\log\frac{1}{\delta_{\rm loc}}
    \right)
    =
    O\left(
      \frac{\lambda}{\sigma}
      \cdot\log\frac{1}{\delta_{\rm loc}}
    \right)
\]
noting that $\lambda\ge\sigma$.  Since $s\le8$, we have $\lambda/\sigma\le8\max\{1,\lambda/(s\sigma)\}$, so this has the order claimed in~\eqref{eq:sharp-s-localization}.
In view of $u_0=\mu+\lambda+4\sigma$, we define $c$ by projecting
$\widehat u_0-\lambda-4\sigma$ onto $[-\lambda,\lambda]$.
Because $\mu\in[-\lambda,\lambda]$, projection cannot increase the error, so the preceding success guarantee also holds for $c$.  Finally,
$Q_j^{-1}(1)=(-\infty,T_j]$ is one interval for every $j$.  This proves the lemma when $s\le8$.

\emph{Block-and-offset construction for $s>8$.}
Now suppose that $s>8$.  We construct three coarse partitions of
$[-\lambda,\lambda]$ by restricting shifted block partitions of
$\mathbb R$ to this interval.  The underlying block width $L$ must be no smaller
than the scale $\sigma$ and large enough that at most $s$ blocks intersect
$[-\lambda,\lambda]$.  To achieve this, we choose
\begin{equation}
\label{eq:sharp-block-parameters}
    M
    \coloneqq
    \left\lceil
      \max\left\{1,\frac{2\lambda}{s\sigma}\right\}
    \right\rceil
    =\Theta\left(\max\left\{1, \frac{\lambda}{s\sigma}\right\} \right)
    \quad\text{and}\quad
    L
    \coloneqq
    96M\sigma
    =\Theta\left(\max \left\{\sigma, \frac{\lambda}{s} \right\} \right).
\end{equation}
For each shift index $\nu\in\{0,1,2\}$, let
$\beta_\nu=\nu L/3$ be the corresponding shift.  The resulting shifted
partition of $\mathbb R$ consists of the half-open blocks
$B_{\nu,q} \coloneqq [\beta_\nu+qL,\beta_\nu+(q+1)L)$
for $q\in\mathbb Z$.
Among these blocks, we retain those that intersect $[-\lambda,\lambda]$: Let
\[
    \calI_\nu
    \coloneqq
    \left\{
      q\in\mathbb Z:
      B_{\nu,q}\cap[-\lambda,\lambda]\ne\varnothing
    \right\}
\]
be the set of retained block indices, and let
$K_\nu\coloneqq|\calI_\nu|$ be the number of retained blocks.  The choice of block width in~\eqref{eq:sharp-block-parameters} ensures that~$K_\nu$ respects the interval budget:
\begin{equation}
\label{eq:candidate-block-count}
    K_\nu
    \le
    \frac{2\lambda}{L}+2
    \le
    \frac{2\lambda}{96M\sigma}+2
    \le
    \min\left\{\frac{2 \lambda}{96\sigma}, \frac{s}{96}\right\} + 2
    \le
    \frac{s}{96}+2
    \le
    s,
\end{equation}
where the final inequality uses $s>8$.  For each shifted partition, let $q_\nu^\star$ denote the index of the block
containing $\mu$, and let $u_\nu$ denote the corresponding offset from
the left endpoint of that block.  This gives the block--offset decomposition
\begin{equation}
\label{eq:block-offset-decomposition}
    \mu
    =
    \beta_\nu+q_\nu^\star L+u_\nu,
    \quad\text{where}\quad
    u_\nu\in[0,L).
\end{equation}
We call $B_{\nu,q_\nu^\star}$ the \emph{true block} in the partition indexed
by $\nu$.
Recovering the block index and offset is simplest when observations near
$\mu$ remain in the true block.  If $\mu$ lies close to a block boundary,
even a small deviation of the observation from $\mu$ can place it in a
neighboring block.  We therefore call the shifted partition indexed by
$\nu$ \emph{safe for $\mu$} if $\mu$ lies at least $L/6$ from every block
boundary:
\begin{equation}
\label{eq:safe-phase}
    \dist\left(
      \mu,\{\beta_\nu+qL:q\in\mathbb Z\}
    \right)
    \ge
    \frac{L}{6}.
\end{equation}
Modulo $L$, the boundary phases of the three shifted partitions are
$0$, $L/3$, and $2L/3$.  These phases are separated by $L/3$, so their
open neighborhoods of radius $L/6$ are pairwise disjoint.  
Hence, for every $\mu$, at most one shifted partition is unsafe and at least 
two are safe (see also Figure~\ref{fig:shifted-block-localization}(a)).

\emph{Query families and recovery for a safe shifted partition.}
If $K_\nu=1$, the unique retained block contains $\mu$, so its index
$q_\nu^\star$ is already known and only the offset must be recovered.  This
is achieved via the same random-threshold argument as the case $s\le8$.  When
$K_\nu\ge2$, both the block index and the offset are unknown, and the
protocol uses two query families.  We define and analyze the offset-query
family first; this part applies in both cases.  We then define and analyze
the block-index-query family under $K_\nu\ge2$.  All queries are issued
without knowing which shifted partitions are safe; safety is used only in
the accuracy analysis.

Fix a confidence level $\alpha\in(0,1)$ and an arbitrary shifted partition
indexed by~$\nu$ that is safe for~$\mu$.  We will show that
$O(M\cdot\log(2/\alpha))$ offset queries recover $u_\nu$ to within
$8\sigma$.  When $K_\nu\ge2$, we will also show that
$O(\log K_\nu+\log(2/\alpha))$ block-index queries recover the true block
index $q_\nu^\star$ exactly.
Each of these recovery guarantees holds with probability at
least $1-\alpha$.  The two recovery arguments use safety differently.  For
offset recovery, we will combine the block-width
choice~\eqref{eq:sharp-block-parameters} with the safety
condition~\eqref{eq:safe-phase} to establish the neighborhood
containment~\eqref{eq:safe-neighborhood-containment}.  For block-index
recovery, we will later use the same separation from the block boundaries to
derive the block-crossing bound~\eqref{eq:safe-block-contamination}.

For each $q\in\calI_\nu$, let
$b_{\nu,q}\coloneqq\beta_\nu+qL$ denote the left endpoint of
$B_{\nu,q}$.  For this fixed safe partition, use
$q_\star$, $b_\star$, and $u$ to denote the true block index, its left
endpoint, and the corresponding offset, respectively:
\[
    q_\star
    \coloneqq
    q_\nu^\star,
    \qquad
    b_\star
    \coloneqq
    b_{\nu,q_\star}
    \quad\text{and}\quad
    u
    \coloneqq
    u_\nu
    =
    \mu-b_\star.
\]
Thus $B_{\nu,q_\star}=[b_\star,b_\star+L)$ is the true block for this
fixed safe shifted partition.  The safety
condition~\eqref{eq:safe-phase} gives
$L/6\le u\le5L/6$, while
\eqref{eq:sharp-block-parameters} gives
$4\sigma\le L/24<L/6$.  Hence, $[u-4\sigma,u+4\sigma]\subseteq(0,L)$.  Since
$\mu=b_\star+u$, translating this inclusion by $b_\star$ gives
\begin{equation}
\label{eq:safe-neighborhood-containment}
    [\mu-4\sigma,\mu+4\sigma]
    \subseteq
    [b_\star,b_\star+L)
    =
    B_{\nu,q_\star}.
\end{equation}
In particular, every observation satisfying $|X-\mu|<4\sigma$ falls in
the true block.

\emph{Offset query and recovery.}
To estimate $u$ without knowing $q_\star$, we include the leftmost
subinterval of the same randomly chosen length in every block indexed by~$\calI_\nu$.  For the $j$th offset query, draw
$T_j\sim\operatorname{Unif}[0,L]$ independently across $j$ and independently
of the samples, and define
\begin{equation}
\label{eq:periodic-prefix-query}
    Q_{\nu,T_j}^{\rm off}(x)
    \coloneqq
    \1\left\{
      x\in
      \bigcup_{q\in\calI_\nu}
        [b_{\nu,q},b_{\nu,q}+T_j]
    \right\}.
\end{equation}
By~\eqref{eq:candidate-block-count}, the one-set has at most
$K_\nu\le s$ interval components.
Write $Y_j\coloneqq Q_{\nu,T_j}^{\rm off}(X_j)$ for the response.  On the
event $|X_j-\mu|<4\sigma$, the neighborhood
containment~\eqref{eq:safe-neighborhood-containment} gives
$
    X_j
    \in
    B_{\nu,q_\star}
    =
    [b_\star,b_\star+L).
$
The query definition~\eqref{eq:periodic-prefix-query} therefore reduces to
\begin{equation}
\label{eq:offset-as-threshold}
    Y_j
    =
    \1\{X_j\in[b_\star,b_\star+T_j]\}
    =
    \1\{X_j-b_\star\le T_j\}.
\end{equation}
Moreover, since $u=\mu-b_\star$, the same event gives
$|(X_j-b_\star)-u|=|X_j-\mu|<4\sigma$.
Thus, whenever $|X_j-\mu|<4\sigma$, the offset response has the same
random-threshold form as in the case $s\le8$, now with shifted observation
$X_j-b_\star$ and point $u$.  Since $T_j$ is independent of
$X_j$, the argument leading to
\eqref{eq:global-threshold-agreement} gives, for every $t\in[0,L]$ satisfying
$|t-u|\ge4\sigma$,
\begin{equation}
\label{eq:offset-threshold-agreement}
    \Pr\left(
      Y_j=\1\{t\ge u\}
      \,\middle|\,
      T_j=t
    \right)
    \ge
    \Pr\left(
      |X_j-\mu|<4\sigma
    \right)
    \ge
    \frac{3}{4}.
\end{equation}
The agreement bound~\eqref{eq:offset-threshold-agreement} verifies the
hypothesis of Lemma~\ref{lem:random-threshold-localization} with interval
length $L$, point $u=u_\nu$, margin $h=4\sigma$, and confidence
level $\alpha$.  Since
\eqref{eq:sharp-block-parameters} gives $L/h=24M$, the sample requirement
in~\eqref{eq:random-threshold-sample-count} is satisfied using
\begin{equation}
\label{eq:safe-offset-count}
    m_{{\rm off},\nu}
    =
    O\left(
      M\cdot\log\frac{2}{\alpha}
    \right)
\end{equation}
independent offset queries.  For this choice, the lemma produces an estimate
$\widehat u_\nu$ satisfying
\begin{equation}
\label{eq:safe-offset-success}
    \Pr\left(
      |\widehat u_\nu-u_\nu|\le8\sigma
    \right)
    \ge
    1-\alpha.
\end{equation}

\smallskip
\noindent\emph{Block-index query and recovery.}
We next recover the block index.  If $K_\nu=1$, set $\widehat q_\nu$ equal to the unique element of~$\calI_\nu$.  Then $\widehat q_\nu=q_\nu^\star$ deterministically and
$m_{{\rm idx},\nu}=0$.  We thus suppose that $K_\nu\ge2$.  
For block-index recovery, we use the sampled-coordinate coding construction of Lemma~\ref{lem:LS-localization}, with the blocks indexed by
$\calI_\nu$ replacing the bins used there.
Assign each block indexed by $q\in\calI_\nu$ a codeword
$z_q\in\{0,1\}^{d_{\rm code}}$ using
Lemma~\ref{lem:nearly-equidistant-code} with $\xi=1/4$.  We may take
$d_{\rm code}=O(\log K_\nu)$ while ensuring~$d_H(z_q,z_{q'})/d_{\rm code}\ge1/4$ for every pair of distinct retained
blocks $q$ and $q'$.   For the $j$th block-index query, draw
$I_j\sim\operatorname{Unif}\{1,\ldots,d_{\rm code}\}$ independently across
$j$ and independently of the samples, and define
\begin{equation}
\label{eq:block-index-query}
    Q_{\nu,I_j}^{\rm idx}(x)
    \coloneqq
    \1\left\{
      x\in
      \bigcup_{q\in\calI_\nu:z_{q,I_j}=1}
        B_{\nu,q}
    \right\}.
\end{equation}
By~\eqref{eq:candidate-block-count}, its one-set is a union of at most
$K_\nu\le s$ blocks.  Write
$Y_j\coloneqq Q_{\nu,I_j}^{\rm idx}(X_j)$ for the response.  Whenever
$X_j\in B_{\nu,q_\star}$, the query
definition~\eqref{eq:block-index-query} gives
$Y_j=z_{q_\star,I_j}$.  Thus, unless the observation leaves the true block,
the response is exactly a uniformly sampled code bit of the true block, as
in Lemma~\ref{lem:LS-localization}.
It remains to check that block crossings are sufficiently rare.  By the
safety condition~\eqref{eq:safe-phase}, leaving the true block requires
$|X-\mu|\ge L/6$.  Markov's inequality and the block-width
choice~\eqref{eq:sharp-block-parameters} therefore give
\begin{equation}
\label{eq:safe-block-contamination}
    p_{\rm out}
    \coloneqq
    \Pr\left(
      X\notin B_{\nu,q_\star}
    \right)
    \le
    \frac{6\sigma}{L}
    =
    \frac{1}{16M}
    \le
    \frac{1}{16}.
\end{equation}
As in Appendix~\ref{app:tight-coding-localization}, we decode by minimum
disagreement:  Given $m_{{\rm idx},\nu}$ responses
$(I_j,Y_j)_{j=1}^{m_{{\rm idx},\nu}}$, set
\[
    \widehat q_\nu
    \in
    \arg\min_{q\in\calI_\nu}
    \sum_{j=1}^{m_{{\rm idx},\nu}}
      \1\{Y_j\ne z_{q,I_j}\}.
\]
For a generic block-index coordinate and response $(I,Y)$, let $R_q^{\rm idx}\coloneqq
\E[\1\{Y\ne z_{q,I}\}]$
denote the population disagreement loss of block $q$.
Since the response
equals $z_{q_\star,I}$ whenever the observation remains in the true block,
we have $R_{q_\star}^{\rm idx}\le p_{\rm out}$.  For any
$q\ne q_\star$, independence of $I$ and $X$, uniform sampling of $I$, and
the relative-distance guarantee give
$R_q^{\rm idx}\ge
(1-p_{\rm out})\cdot d_H(z_q,z_{q_\star})/d_{\rm code}
\ge(1-p_{\rm out})/4$.
Consequently, every $q\ne q_\star$ satisfies the constant population-loss
gap
\begin{equation}
\label{eq:block-score-gap}
    R_q^{\rm idx}-R_{q_\star}^{\rm idx}
    \ge
    (1-p_{\rm out})\cdot\frac{1}{4}-p_{\rm out}
    \ge
    \frac{15}{16}\cdot\frac{1}{4}-\frac{1}{16}
    =
    \frac{11}{64}.
\end{equation}
For the fixed safe partition, this comparison is stronger than the one in
the proof of Lemma~\ref{lem:LS-localization} in
Appendix~\ref{app:tight-coding-localization}.  There, the argument rules out
only bins outside the set consisting of the true bin and its two neighbors;
here,~\eqref{eq:block-score-gap} compares the true block $q_\star$ directly
with every $q\ne q_\star$.

The remaining concentration step follows the same proof.  Applying
Hoeffding's inequality followed by a union bound over the $K_\nu$ candidate
blocks, as in~\eqref{eq:localization-uniform-concentration}, and using the
score gap $11/64$ from~\eqref{eq:block-score-gap}, it suffices to take
\begin{equation}
\label{eq:safe-index-count}
    m_{{\rm idx},\nu}
    =
    O\left(
      \log K_\nu
      +
      \log\frac{2}{\alpha}
    \right)
\end{equation}
independent block-index queries. 
On the resulting concentration event, the
empirical losses preserve the positive gap in
\eqref{eq:block-score-gap}, so the decoder selects the true block.  Hence,
\begin{equation}
\label{eq:safe-index-success}
    \Pr\left(
      \widehat q_\nu=q_\nu^\star
    \right)
    \ge
    1-\alpha.
\end{equation}
Together with~\eqref{eq:safe-offset-success}, the preceding analysis
establishes the two recovery guarantees stated above for every safe
shifted partition.

\noindent\emph{Combining the shifted reconstructions.}
For each shift index $\nu\in\{0,1,2\}$, run the offset and block-index
procedures using mutually independent queries and samples.  Set
$\alpha=\delta_{\rm loc}/6$ in the two recovery guarantees and define
\[
    c_\nu
    \coloneqq
    \beta_\nu+\widehat q_\nu L+\widehat u_\nu.
\]
If the shifted partition indexed by $\nu$ is safe, then the recovery
guarantees~\eqref{eq:safe-offset-success} and
\eqref{eq:safe-index-success} give
\[
    |\widehat u_\nu-u_\nu|
    \le
    8\sigma
    \quad\text{and}\quad
    \widehat q_\nu
    =
    q_\nu^\star
\]
on the corresponding success events.
The block--offset decomposition~\eqref{eq:block-offset-decomposition}
therefore gives
\begin{equation}
\label{eq:safe-shift-center-error}
    |c_\nu-\mu|
    =
    |\widehat u_\nu-u_\nu|
    \le
    8\sigma.
\end{equation}
For the fixed mean $\mu$, at most six recovery procedures are associated
with safe shifted partitions, and each such procedure fails with probability
at most $\alpha$.  A union bound therefore shows that all procedures
associated with safe shifted partitions succeed with probability at least
$1-\delta_{\rm loc}$.  At least two
shifted partitions are safe by the boundary-phase argument following
\eqref{eq:safe-phase}.  Hence, on this event, at least two of
$c_0,c_1,c_2$ lie in
$[\mu-8\sigma,\mu+8\sigma]$ by~\eqref{eq:safe-shift-center-error}, and their median lies in the same interval.
Define the final estimate $c$ by projecting this median onto
$[-\lambda,\lambda]$.  Since $\mu\in[-\lambda,\lambda]$, projection cannot
increase the error, and therefore $|c-\mu|\le8\sigma$.  Consequently,
$\Pr(|c-\mu|\le8\sigma)\ge1-\delta_{\rm loc}$.

\emph{Bounding the sample count.}
Summing~\eqref{eq:safe-offset-count} and~\eqref{eq:safe-index-count} over the three shifted partitions, and using the bounds on $M$ and $K_\nu$ in~\eqref{eq:sharp-block-parameters} and
\eqref{eq:candidate-block-count}, gives
\begin{equation}
\label{eq:s-localization-prescribed-count}
    \sum_{\nu=0}^2
    \left(
      m_{{\rm off},\nu}+m_{{\rm idx},\nu}
    \right)
    =
    O\left(
      \log\frac{\lambda}{\sigma}
      +
      \max\left\{
        1,\frac{\lambda}{s\sigma}
      \right\}
      \cdot\log\frac{1}{\delta_{\rm loc}}
    \right).
\end{equation}

All query functions are generated before any response is observed, so the
protocol is non-adaptive.  The interval-count verifications following
\eqref{eq:periodic-prefix-query} and~\eqref{eq:block-index-query} show that
every realized one-set has at most $s$ interval components.  This proves
Lemma~\ref{lem:sharp-s-localization}.
\end{proof}
For use in Appendix~\ref{sec:iid-s-localization}, note that the construction
for $s\le8$ consists of one i.i.d. query family, whereas the construction
for $s>8$ consists of at most six nonempty families: one offset family and,
when $K_\nu\ge2$, one block-index family for each of the three shifted
partitions.  Within each family the queries are i.i.d., and queries are
independent across families, although the family laws need not agree.  
Hence, the construction has the finite-family product structure
used in the proof of Lemma~\ref{lem:iid-s-localization}.

\subsection{Sample Complexity and Completion of
  Theorem~\ref{thm:interval-complexity}}
\label{sec:interval-complexity-assembly}

We now prove the upper bound in~\eqref{eq:full-s-tradeoff}, thereby
completing the proof of Theorem~\ref{thm:interval-complexity}.  The
argument has two stages.  We first
use the grouped moment bounds (Lemma~\ref{lem:grouped-refinement-moments}) in
Appendix~\ref{sec:grouped-refinement-proof} to show that the original
refinement analysis remains valid and that only the query count changes.
We then sum the resulting grouped-refinement budget to obtain
\eqref{eq:grouped-refinement-cost} and add the localization cost
\eqref{eq:sharp-s-localization}.  

\paragraph{Reduction to sample counting.}
Let $c$ be the output of the $s$-interval localizer in
Lemma~\ref{lem:sharp-s-localization}.  Condition on a successful
realization of the localization transcript.  Then $c$ is fixed and
satisfies $|c-\mu|\le8\sigma$.  Set $\bar\sigma=9\sigma$.
Equation~\eqref{eq:moment-transfer} gives
$\E[|X-c|^k]\le\bar\sigma^k$.  Moreover,
\eqref{eq:grouped-lemma-first-moment} and
\eqref{eq:grouped-lemma-second-moment} give the grouped counterparts of
the per-scale first-moment identity and tail-local second-moment bound in
Lemma~\ref{lem:per-scale-main}.  Consequently, after enlarging the universal constant in the allocation~\eqref{eq:main-ni} if necessary, the bias bound~\eqref{eq:base-bias}, the variance aggregation based on~\eqref{eq:main-ni} and~\eqref{eq:dyadic-tail-sum}, and the median
amplification in Appendix~\ref{sec:amplification} apply unchanged when
each scale statistic is replaced by its sum over groups.  Thus grouping
changes only the query count: by~\eqref{eq:number-scale-groups}, the
refinement cost at scale $i$ is multiplied by $G_i$ relative to the
ungrouped construction.

\paragraph{Grouped refinement sample complexity.}
For each scale $i$, group index $g\in\{1,\ldots,G_i\}$, and orientation
$d\in\{\mathrm L,\mathrm R\}$, draw $n_i$ queries independently using the
allocation in~\eqref{eq:main-ni}.  One grouped base estimate is
\[
    \widehat\theta_{\rm grp}(c)
    =
    \sum_{i=1}^{\imax}
    \frac{1}{n_i}
    \sum_{t=1}^{n_i}
    \sum_{g=1}^{G_i}
      \left(
        V_{i,g,t}^{\mathrm L}(c)
        +V_{i,g,t}^{\mathrm R}(c)
      \right).
\]
This estimate uses $2\sum_{i=1}^{\imax}G_i n_i$ observations.  The factor
two accounts for the two orientations and is absorbed into the constants.
By~\eqref{eq:main-ni} and
$\ell_i=\bar\sigma\cdot 2^{i-1}$,
\[
    n_i
    \le
    C_1\frac{\bar\sigma^2}{\eps^2}
      2^{(2-k)(i-1)}+1
    \quad \text{and} \quad
    G_i
    \le
    C_2\left(
      1+\frac{\lambda}{s\bar\sigma}2^{-(i-1)}
    \right).
\]
Multiplying these bounds and summing over the scales gives
\begin{equation}
\label{eq:grouped-count-expansion}
    \sum_{i=1}^{\imax}G_i \cdot n_i
    \le
      C\frac{\bar\sigma^2}{\eps^2}
        \sum_{i=1}^{\imax}2^{(2-k)(i-1)}
      +C\frac{\lambda\bar\sigma}{s\eps^2}
        \sum_{i=1}^{\imax}2^{(1-k)(i-1)} 
      +C\imax
      +C\frac{\lambda}{s\bar\sigma}
        \sum_{i=1}^{\imax}2^{-(i-1)}.
\end{equation}
Since $k>1$,
\begin{equation}
\label{eq:grouped-count-geometric-sums}
    \sum_{i=1}^{\imax}2^{(1-k)(i-1)}
    \le
    \frac{1}{1-2^{1-k}}
      \quad \text{and} \quad
    \sum_{i=1}^{\imax}2^{-(i-1)}
    \le2.
\end{equation}
Moreover, since
$\imax=O_k(\log(\bar\sigma/\eps))$, the assumptions
$\eps<\sigma$ and $\bar\sigma=9\sigma$ give 
\begin{equation}
\label{eq:grouped-count-rounding-bounds}
    \imax
    \le
    C_k\frac{\bar\sigma^2}{\eps^2}
    \le
    C_k\frac{\bar\sigma^2}{\eps^2}
      \sum_{i=1}^{\imax}2^{(2-k)(i-1)}
    \quad \text{and} \quad
    \frac{\lambda}{s\bar\sigma}
      \sum_{i=1}^{\imax}2^{-(i-1)}
    \le
    \frac{2\lambda}{s\bar\sigma}
    \le
    \frac{2\lambda\bar\sigma}{s\eps^2}.
\end{equation}
Applying~\eqref{eq:grouped-count-geometric-sums} and
\eqref{eq:grouped-count-rounding-bounds} to
\eqref{eq:grouped-count-expansion}, and using
$\bar\sigma=9\sigma$, gives
\begin{equation}
\label{eq:grouped-count-before-amplification}
    \sum_{i=1}^{\imax}G_i \cdot n_i
    \le
    C_k\frac{\bar\sigma^2}{\eps^2}
      \sum_{i=1}^{\imax}2^{(2-k)(i-1)}
    +
    C_k\frac{\lambda\sigma}{s\eps^2}.
\end{equation}
Up to constants, the first term on the right-hand side of
\eqref{eq:grouped-count-before-amplification} has the same order as the
ungrouped base-estimator count in~\eqref{eq:base-sample-cases}.
Applying the median amplification from
Appendix~\ref{sec:amplification}, with
$K=O(\log(1/\delta_{\rm ref}))$ independent grouped base blocks and
$\delta_{\rm ref}=\delta/4$, therefore gives the refinement sample
complexity
\begin{equation}
\label{eq:grouped-refinement-cost}
    O_k\left(
      \mathfrak R_k(\sigma/\eps,\delta)
      +
      \frac{\lambda\sigma}{s\eps^2}
      \cdot\log\frac{1}{\delta}
    \right).
\end{equation}

\paragraph{Completing Theorem~\ref{thm:interval-complexity}.}
Take $\delta_{\rm loc}=\delta/4$.  Together with $\delta_{\rm ref}=\delta/4$ above, this reserves the remaining
failure probability $\delta/2$ for the outer branch-count event introduced in Appendix~\ref{app:localization-iid}.
The localization upper bound~\eqref{eq:sharp-s-localization} of
Lemma~\ref{lem:sharp-s-localization}, together with
$\max\{1,x\}\le1+x$, gives
\begin{equation}
\label{eq:n-loc-s-expand}
    n_{\rm loc}^{(s)} =
    O\left(
      \log\frac{\lambda}{\sigma}
      +
      \max\left\{1,\frac{\lambda}{s\sigma}\right\}
      \log\frac{1}{\delta}
    \right)
    =
    O\left(
      \log\frac{\lambda}{\sigma}
      +
      \log\frac{1}{\delta}
      +
      \frac{\lambda}{s\sigma} \cdot
      \log\frac{1}{\delta}
    \right).
\end{equation}
Combining~\eqref{eq:grouped-refinement-cost} and~\eqref{eq:n-loc-s-expand}
and using $\eps\le\sigma$  and $\log(1/\delta)= O_k\left( \mathfrak R_k(\sigma/\eps,\delta) \right)$ yields 
\[
    n = O_k\left(
      \mathfrak R_k(\sigma/\eps,\delta)
      +
      \log\frac{\lambda}{\sigma}
      +
      \frac{\lambda\sigma}{s\eps^2}
      \log\frac{1}{\delta}
    \right),
\]
which is the upper bound in~\eqref{eq:full-s-tradeoff} for the prescribed
independent query families.

The analysis above applies to independent queries with type-dependent
laws.  Appendix~\ref{app:localization-iid} converts the prescribed refinement and
localization families into a single i.i.d. query law without changing the
sample order or the $s$-interval constraint.  This proves the upper side of
\eqref{eq:full-s-tradeoff} in the i.i.d. model.  Together with the
previously established lower side of~\eqref{eq:full-s-tradeoff}, this
completes the proof of Theorem~\ref{thm:interval-complexity}.

\subsection{Proof of the Random-Threshold Localization
  (Lemma~\ref{lem:random-threshold-localization})}
\label{app:random-threshold-proof}

This subsection proves Lemma~\ref{lem:random-threshold-localization}, which was stated
and used in Appendix~\ref{app:sharp-localization}. 

\emph{Proof outline.}
To rule out the event $\{\widehat u>u+2h\}$, we show that, with high
probability, every candidate $v>u+2h$ disagrees with more observed
responses than the reference candidate $u+h$ and therefore cannot be a
minimizer.  We compare with $u+h$ rather than with $u$ because
\eqref{eq:random-threshold-agreement} gives no guarantee when
$t\in(u-h,u+h)$.

For any candidate $v>u+2h$, the indicators
$\1\{t\ge u+h\}$ and $\1\{t\ge v\}$ coincide outside $[u+h,v)$.
For every threshold $t$ inside this interval,
\[
    \1\{t\ge u\}
    =
    \1\{t\ge u+h\}
    =
    1,
    \qquad
    \1\{t\ge v\}
    =
    0.
\]
By~\eqref{eq:random-threshold-agreement}, a response paired with such a
threshold is therefore more likely to agree with $u+h$ than with $v$.
Moreover, $[u+h,v)$ contains the length-$h$ interval
$[u+h,u+2h)$, and the expected number of sampled thresholds falling in this subinterval
is $nh/L$.  A concentration argument then shows that, with high probability, for every
$v>u+2h$, the responses paired with thresholds in $[u+h,v)$ agree more
often with $u+h$ than with $v$.  Since the two indicators coincide outside
this interval, every such $v$ has more empirical disagreements overall than
$u+h$ and therefore cannot be a minimizer.
The analogous comparison with $u-h$ rules out all candidates
below $u-2h$.

We now formalize the outline.
\begin{proof}[Proof of Lemma~\ref{lem:random-threshold-localization}]
For each candidate point $w\in[0,L]$ and each
$j\in\{1,\ldots,n\}$, define the label predicted by $w$ at the sampled
threshold $T_j$ to be $\1\{T_j\ge w\}$.  Accordingly, define the empirical
disagreement count of $w$ by
\begin{equation}
\label{eq:random-threshold-empirical-disagreement}
    \widehat R(w)
    \coloneqq
    \sum_{j=1}^n
      \1\left\{
        Y_j\ne\1\{T_j\ge w\}
      \right\}.
\end{equation}
Hence by its definition in~\eqref{eq:random-threshold-minimizer}, $\widehat u$ is a minimizer of $\widehat R$ over $[0,L]$.  The failure
event can be written as
\[
    \left\{
      |\widehat u-u|>2h
    \right\}
    =
    \{\widehat u>u+2h\}
    \cup
    \{\widehat u<u-2h\}.
\]
\noindent\emph{Upper deviation.}
If $u+2h\ge L$, then the event $\{\widehat u>u+2h\}$ is empty.  Hence,
suppose that $u+2h<L$.
By the definition of the predicted label, the candidates $v$ and $u+h$
predict different labels exactly for samples whose thresholds lie in
$[u+h,v)$.  On these samples, $u+h$ predicts $1$ and $v$ predicts $0$.
Each such sample contributes
\[
    \1\{Y_j\ne0\}
    -
    \1\{Y_j\ne1\}
    =
    Y_j-(1-Y_j)
    =
    2Y_j-1
\]
to $\widehat R(v)-\widehat R(u+h)$, whereas all other samples contribute
zero.  Thus, for every $v\in(u+h,L]$,
\begin{equation}
\label{eq:random-threshold-risk-right}
    \widehat R(v)-\widehat R(u+h)
    =
    \sum_{j:T_j\in[u+h,v)}
      (2Y_j-1).
\end{equation}
Every interval $[u+h,v)$ with $v>u+2h$ contains the fixed interval
$[u+h,u+2h)$.  Denote the number of sampled thresholds in this fixed
interval by
\begin{equation}
\label{eq:random-threshold-count-right}
    N_+
    \coloneqq
    \sum_{j=1}^n
      \1\{T_j\in[u+h,u+2h)\}.
\end{equation}
Since the thresholds are sampled uniformly from $[0,L]$, we have
$N_+ \sim \operatorname{Bin}\left( n, h/L \right)$.

Condition on the sampled thresholds $T_1,\ldots,T_n$.  To control
\eqref{eq:random-threshold-risk-right} simultaneously over all candidates
$v>u+2h$, order the relevant responses so that every candidate comparison
becomes a sum of the first $r$ terms of a single sequence.  
Let
\begin{equation}
\label{eq:random-threshold-total-count-right}
    N_{\mathrm R}
    \coloneqq
    \sum_{j=1}^n
      \1\{T_j\in[u+h,L)\}
\end{equation}
be the number of sampled thresholds in $[u+h,L)$.  Denote the corresponding
responses in nondecreasing order of their thresholds by
$Y_1^+,\ldots,Y_{N_{\mathrm R}}^+$, and set
\[
    Z_k^+
    \coloneqq
    2Y_k^+-1,
    \qquad
    k\in\{1,\ldots,N_{\mathrm R}\}.
\]
Hence, $Z_k^+$ is the contribution of the $k$-th ordered response to the sum
in~\eqref{eq:random-threshold-risk-right} whenever that response is
included.
For each fixed $v\in(u+h,L]$,~\eqref{eq:random-threshold-risk-right} can be rewritten as
\begin{equation}
\label{eq:random-threshold-ordered-sum-right}
    \widehat R(v)-\widehat R(u+h) =
    \sum_{k=1}^r Z_k^+
    \quad \text{where} \quad
     r =
    \sum_{j=1}^n
      \1\{T_j\in[u+h,v)\},
\end{equation}
where the sum is interpreted as zero when $r=0$.
For every $v\in(u+2h,L]$,~\eqref{eq:random-threshold-count-right},~\eqref{eq:random-threshold-total-count-right}, and the
definition of $r$ in~\eqref{eq:random-threshold-ordered-sum-right} give
\begin{equation}
\label{eq:random-threshold-count-comparison-right}
    N_+ =
  \sum_{j=1}^n
      \1\{T_j\in[u+h,u+2h)\} \le
    \sum_{j=1}^n
      \1\{T_j\in[u+h,v)\}
    =  r 
    \le \sum_{j=1}^n
      \1\{T_j\in[u+h,L)\}
    = N_{\mathrm R}.
\end{equation}
Thus simultaneous control over all $v\in(u+2h,L]$ reduces to controlling
the finite collection of sums indexed by
$r\in\{N_+,\ldots,N_{\mathrm R}\}$.

Suppose $\widehat u>u+2h$.
Since $\widehat u$ is a minimizer of $\widehat R$, applying~\eqref{eq:random-threshold-ordered-sum-right} with $v=\widehat u$ gives
\begin{equation*}
    \sum_{k=1}^{\widehat r}Z_k^+
    =
    \widehat R(\widehat u)-\widehat R(u+h)
    \le 0,
    \quad \text{where} \quad
     \widehat r
    \coloneqq
    \sum_{j=1}^n
      \1\{T_j\in[u+h,\widehat u)\}.
\end{equation*}
Since $\widehat u>u+2h$, applying
\eqref{eq:random-threshold-count-comparison-right} with $v=\widehat u$
gives $N_+\le\widehat r\le N_{\mathrm R}$.
Hence, at least one of the sums indexed by
$r\in\{N_+,\ldots,N_{\mathrm R}\}$ is nonpositive.  Consequently,
\begin{equation}
\label{eq:random-threshold-right-failure-inclusion}
    \{\widehat u>u+2h\}
    \subseteq
    \left\{
      \sum_{k=1}^r Z_k^+\le0
      \text{ for some }
      r\in\{N_+,\ldots,N_{\mathrm R}\}
    \right\}.
\end{equation}
We now bound the event on the right-hand side of
\eqref{eq:random-threshold-right-failure-inclusion}.  Conditional on $T_1,\ldots,T_n$, the ordering is deterministic.
Because the pairs $(T_j,Y_j)$ are independent, the ordered responses
$Y_1^+,\ldots,Y_{N_{\mathrm R}}^+$ are conditionally independent, and the
conditional law of each response depends only on its paired threshold.
Each $Y_k^+$ is paired with a threshold $t\ge u+h>u$, for which
$\1\{t\ge u\}=1$.  Hence,~\eqref{eq:random-threshold-agreement} gives
\begin{equation}
\label{eq:random-threshold-conditional-mean-right}
    \E\left[
      Z_k^+
      \,\middle|\,
      T_1,\ldots,T_n
    \right]
    =
    2
    \Pr\left(
      Y_k^+=1
      \,\middle|\,
      T_1,\ldots,T_n
    \right)
    -1
    \ge
    \frac{1}{2}
\end{equation}
for every $k\in\{1,\ldots,N_{\mathrm R}\}$.
By linearity of expectation, for every
$r\in\{1,\ldots,N_{\mathrm R}\}$, 
\[
    \sum_{k=1}^r
    \E\left[
      Z_k^+
      \,\middle|\,
      T_1,\ldots,T_n
    \right]
    \ge
    \frac{r}{2},
\]
and consequently,
\[
    \left\{
      \sum_{k=1}^r Z_k^+\le0
    \right\}
    \subseteq
    \left\{
      \sum_{k=1}^r
      \left(
        Z_k^+
        -
        \E\left[
          Z_k^+
          \,\middle|\,
          T_1,\ldots,T_n
        \right]
      \right)
      \le
      -\frac{r}{2}
    \right\}.
\]
Combining this inclusion with Hoeffding's inequality and using 
$Z_k^+\in\{-1,1\}$ yields
\begin{equation}
\label{eq:random-threshold-hoeffding-right}
    \Pr\left(
      \sum_{k=1}^r Z_k^+\le0
      \,\middle|\,
      T_1,\ldots,T_n
    \right)
    \le
    \exp\left(
      -\frac{2(r/2)^2}{4r}
    \right)
    =
    e^{-r/8}
\end{equation}
for every $r \ge 1$.
For $r=0$, the same bound holds trivially, with the empty sum interpreted
as zero.  Applying a conditional union bound in
\eqref{eq:random-threshold-right-failure-inclusion} and then using
\eqref{eq:random-threshold-hoeffding-right} gives
\begin{equation}
\label{eq:random-threshold-right-conditional-tail}
    \Pr\left(
      \widehat u>u+2h
      \,\middle|\,
      T_1,\ldots,T_n
    \right)
    \le
    \sum_{r=N_+}^{N_{\mathrm R}}e^{-r/8}
    \le
    \frac{e^{-N_+/8}}{1-e^{-1/8}}.
\end{equation}
Averaging~\eqref{eq:random-threshold-right-conditional-tail} over the
sampled thresholds and using the moment-generating function formula for $N_+\sim\operatorname{Bin}(n,h/L)$ gives
\begin{equation}
\label{eq:random-threshold-right-tail}
\begin{aligned}
    \Pr(\widehat u>u+2h)
    &\le
    \frac{1}{1-e^{-1/8}}
    \E\left[
      e^{-N_+/8}
    \right]
    \\
    &=
    \frac{1}{1-e^{-1/8}}
    \left(
      1-\frac{h}{L}
      \left(
        1-e^{-1/8}
      \right)
    \right)^n
    \\
    &\le
    \frac{1}{1-e^{-1/8}}
    \exp\left(
      -\frac{nh}{L}
      \left(
        1-e^{-1/8}
      \right)
    \right)
    \\
    &\le
    9
    \exp\left(
      -\frac{nh}{9L}
    \right),
\end{aligned}
\end{equation}
where the last two inequalities follow from $1-x \le \exp(-x)$ and $1 - e^{-1/8} \ge 1/9$.

\noindent\emph{Lower deviation.}
If $u-2h\le0$, then the event $\{\widehat u<u-2h\}$ is empty.  Otherwise,
for every $v\in[0,u-h)$, the candidates $v$ and $u-h$ predict different
labels exactly for samples whose thresholds lie in $[v,u-h)$.  On these samples, $v$ predicts $1$ and $u-h$ predicts $0$.  Hence,
\begin{equation}
\label{eq:random-threshold-risk-left}
    \widehat R(v)-\widehat R(u-h)
    =
    \sum_{j:T_j\in[v,u-h)}
      (1-2Y_j)
    =
    \sum_{j:T_j\in[v,u-h)}
      \bigl(2(1-Y_j)-1\bigr).
\end{equation}
Let
\[
    N_-
    \coloneqq
    \sum_{j=1}^n
      \1\{T_j\in[u-2h,u-h)\},
\]
and observe that~$N_-\sim\operatorname{Bin}(n,h/L)$.  Whenever $T_j\le u-h$,
we have by~\eqref{eq:random-threshold-agreement} that
\[
    \E\left[
      2(1-Y_j)-1
      \,\middle|\,
      T_1,\ldots,T_n
    \right]
    =
    2
    \Pr\left(
      Y_j=0
      \,\middle|\,
      T_1,\ldots,T_n
    \right)
    -1
    \ge
    \frac{1}{2}.
\]
This is the lower-deviation counterpart of~\eqref{eq:random-threshold-conditional-mean-right}.  Therefore, after
ordering the responses paired with thresholds in $[0,u-h)$ by
nonincreasing threshold, the argument leading from
\eqref{eq:random-threshold-right-failure-inclusion} to
\eqref{eq:random-threshold-right-tail} applies to
\eqref{eq:random-threshold-risk-left}, with increasing order replaced by
decreasing order, $Y_j$ replaced by $1-Y_j$, and $N_+$ replaced by $N_-$.
Hence,
\begin{equation}
\label{eq:random-threshold-left-tail}
    \Pr(\widehat u<u-2h)
    \le
    9
    \exp\left(
      -\frac{nh}{9L}
    \right).
\end{equation}
Combining~\eqref{eq:random-threshold-right-tail} and
\eqref{eq:random-threshold-left-tail}, and then applying
\eqref{eq:random-threshold-sample-count}, gives
\[
    \Pr\left(
      |\widehat u-u|>2h
    \right)
    \le
    18
    \exp\left(
      -\frac{nh}{9L}
    \right)
    \le
    18
    \left(
      \frac{\alpha}{2}
    \right)^{C/9}
    \le
    \alpha,
\]
where the final inequality holds for a sufficiently large universal
constant $C$.  This proves the lemma.
\end{proof}
\section{i.i.d. Query Conversion
  (Theorems~\ref{thm:main} and~\ref{thm:interval-complexity})}
\label{app:localization-iid}

This appendix converts the prescribed query allocations in
Appendices~\ref{app:upper-bound} and~\ref{app:interval-complexity} into the
common i.i.d. query laws asserted in
Theorems~\ref{thm:main} and~\ref{thm:interval-complexity}.  For each fixed
choice of the theorem parameters and sample budget, the common law may depend
on those parameters.  Here a \emph{query type} records only a query's discrete
role: its scale, orientation, and possibly group index for refinement, or its
procedure and shift for $s$-interval localization.  Conditional on the query types, all within-type randomness was sampled independently.

\paragraph{Overview and organization.}
The proof has three steps.  First,
Appendix~\ref{sec:iid-role-importance} converts the prescribed refinement
allocations analyzed in Appendix~\ref{app:upper-bound} and
Appendix~\ref{sec:grouped-refinement-proof}.  It samples each
refinement type in proportion to its prescribed count and reweights its
decoded contribution.  Lemma~\ref{lem:iidification} shows that this preserves the prescribed block's mean and variance bound without requiring the realized type counts to equal their prescribed values.

Second, Appendix~\ref{sec:iid-s-localization} handles $s$-interval
localization.  The finite-union localizer in
Appendix~\ref{app:coding-localization} is already i.i.d., as is the
random-threshold construction for $s\le8$ in the proof of
Lemma~\ref{lem:sharp-s-localization}.  For $s>8$, each of the three shifted partitions uses one offset family and,
when needed, one block-index family, giving at most six families in total. Lemma~\ref{lem:iid-s-localization} mixes these laws and shows that a constant-factor increase in the number of draws supplies the required number from every law with high probability, without changing the sample order.

Finally, Appendix~\ref{sec:complete-iid-proof} mixes the resulting
localization and refinement laws in proportion to their sample budgets.  The
branch probabilities are defined in
\eqref{eq:main-branch-probabilities}, and the bound
\eqref{eq:two-branch-occupancy} shows that both branches receive enough
responses with high probability.  On this event, the decoder retains the
required responses and forms the refinement blocks in
\eqref{eq:iid-block-estimator}, to which the median-amplification argument
from Appendix~\ref{sec:amplification} is applied.

Figure~\ref{fig:iid-query-law} gives a top-down view of the common query law
in the more involved case $s>8$.  Its caption describes the simplifications
for $s\le8$ and for Theorem~\ref{thm:main}.

\begin{figure}[htbp]
\centering
\begin{tikzpicture}[
  x=1cm,
  y=1cm,
  font=\small,
  box/.style={
    draw=gray!70,
    rounded corners=2pt,
    align=center,
    inner xsep=6pt,
    inner ysep=5pt,
    fill=white
  },
  branch/.style={
    box,
    text width=2.15cm,
    minimum height=0.90cm
  },
  choice/.style={
    box,
    text width=3.05cm,
    minimum height=1.00cm
  },
  terminal/.style={
    box,
    text width=3.45cm,
    minimum height=1.00cm
  },
  arrow/.style={->,thick,gray!75},
  probability/.style={
    font=\footnotesize,
    text=gray!80,
    fill=white,
    inner sep=1.5pt
  }
]
  \node[box,fill=gray!8,minimum width=1.90cm] (agent) at (0,0)
    {one agent};

  \node[branch,fill=covergreen!10] (loc) at (3.20,1.35)
    {localization\\branch};
  \node[branch,fill=queryblue!10] (ref) at (3.20,-1.35)
    {refinement\\branch};

  \coordinate (fork) at (1.55,0);
  \draw[thick,gray!75] (agent.east) -- (fork);
  \draw[arrow] (fork) |-
    node[pos=0.76,above left,probability]
    {$B_{\rm loc}^{(s)}/B_{\rm tot}^{(s)}$} (loc.west);
  \draw[arrow] (fork) |-
    node[pos=0.76,below left,probability]
    {$B_{\rm ref}^{(s)}/B_{\rm tot}^{(s)}$} (ref.west);

  \node[choice] (fam) at (7.35,1.35)
    {choose localization\\family $a$};
  \node[terminal,fill=covergreen!6] (locq) at (11.75,1.35)
    {draw $Q\sim\mathsf P_a$\\with fresh within-family\\randomness};
  \draw[arrow] (loc.east) --
    node[above,probability] {$\pi_a$} (fam.west);
  \draw[arrow] (fam.east) -- (locq.west);

  \node[choice] (role) at (7.35,-1.35)
    {choose grouped role\\$q=(i,d,g)$};
  \node[terminal,fill=queryblue!6] (refq) at (11.75,-1.35)
    {draw the type-$q$ query\\with fresh within-type\\randomness};
  \draw[arrow] (ref.east) --
    node[above,probability] {$p_{i,d,g}^{(s)}$} (role.west);
  \draw[arrow] (role.east) -- (refq.west);

  \node[
    draw=gray!55,
    dashed,
    rounded corners=2pt,
    align=center,
    inner xsep=6pt,
    inner ysep=4pt,
    fill=gray!5
  ] (decode) at (11.75,-2.65)
    {decoder reweights by $1/p_{i,d,g}^{(s)}$};
  \draw[->,dashed,gray!65] (refq.south) -- (decode.north);

  \node[align=center,text width=12.5cm] at (6.00,-3.55)
    {Independent repetition over agents gives one common i.i.d. query law.};
\end{tikzpicture}
\caption{Top-down construction of one query for the interval-restricted
protocol when $s>8$.  The outer split uses
\eqref{eq:main-branch-probabilities} with the superscripted budgets; the
localization-family and grouped-role choices use
\eqref{eq:s-localization-mixture-law} and
\eqref{eq:grouped-refinement-role-law}, respectively.  For $s\le8$, the
localization-family choice disappears and the localization branch uses the
single random-threshold law.  For Theorem~\ref{thm:main}, the localization
branch instead uses the finite-union law from
Appendix~\ref{app:coding-localization}, and the refinement role reduces from
$(i,d,g)$ to $(i,d)$ with probabilities
\eqref{eq:refinement-role-law}.  The dashed box denotes decoder-side
reweighting; refinement blocks are also formed after sampling.}
\label{fig:iid-query-law}
\end{figure}

\subsection{Randomizing Additive Refinement Roles}
\label{sec:iid-role-importance}

This subsection proves the decoder-side refinement reweighting shown in
Figure~\ref{fig:iid-query-law}.  The following lemma applies to both the
ungrouped types $(i,d)$ and the grouped types $(i,d,g)$ and treats one block
of $B$ queries.  In the global construction, it is applied conditionally to
each of the $K$ blocks formed from the refinement stream.

\begin{lemma}[Randomizing an additive allocation]
\label{lem:iidification}
Let $\calQ$ be a finite set of query types.  For each $q\in\calQ$, let
$n_q\ge1$, let $Z_q$ be square-integrable, and let
$Z_{q,1},\ldots,Z_{q,n_q}$ be independent copies of $Z_q$.  Define the
prescribed estimator by
\[
    \widehat\theta_{\rm pres}
    =
    \sum_{q\in\calQ}\frac{1}{n_q}
      \sum_{r=1}^{n_q}Z_{q,r}.
\]

Set $B=\sum_{q\in\calQ}n_q$ and $p_q=n_q/B$.  Draw
$I_1,\ldots,I_B$ independently from $(p_q)_{q\in\calQ}$.  Conditional on these
labels, draw $Z'_1,\ldots,Z'_B$ independently, with $Z'_t$ having the law of
$Z_{I_t}$, and define
\[
    W_t=\frac{Z'_t}{p_{I_t}}
    \quad\text{and}\quad
    \widehat\theta_{\rm iid}
    =\frac{1}{B}\cdot\sum_{t=1}^B W_t.
\]
Then
\begin{equation}
\label{eq:iidification-moments}
    \E[\widehat\theta_{\rm iid}]
    =
    \E[\widehat\theta_{\rm pres}]
    =
    \sum_{q\in\calQ}\E[Z_q]
    \quad\text{and}\quad
    \Var(\widehat\theta_{\rm iid})
    \le
    \sum_{q\in\calQ}\frac{\E[Z_q^2]}{n_q}.
\end{equation}
The same conclusions hold conditionally on a transcript $\mathcal H$ if, given $\mathcal H$, the labels are independent with probabilities
$(p_q)_{q\in\calQ}$ and the decoded contributions are conditionally
independent with the corresponding type laws.  In this case, every moment in \eqref{eq:iidification-moments} is interpreted
conditionally on $\mathcal H$.
\end{lemma}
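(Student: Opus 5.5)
The plan is to analyze a single summand $W_t$ and then use independence across $t$. Since the pairs $(I_t,Z'_t)$ are i.i.d. across $t$, the estimator $\widehat\theta_{\rm iid}$ is an average of $B$ i.i.d. copies of $W\coloneqq Z'/p_I$. Here $I\sim(p_q)$ and, given $I=q$, $Z'$ has the law of $Z_q$. This gives
\[
    \E[\widehat\theta_{\rm iid}]=\E[W]
    \qquad\text{and}\qquad
    \Var(\widehat\theta_{\rm iid})=\frac{\Var(W)}{B}\le\frac{\E[W^2]}{B}.
\]
Both moments of $W$ follow by conditioning on the label via the tower property. Square-integrability of each $Z_q$, together with the finiteness of $\calQ$, ensures that all these moments are finite.

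First, for the mean we compute
\[
    \E[W]=\sum_{q}p_q\cdot\frac{\E[Z_q]}{p_q}=\sum_q\E[Z_q].
\]
On the other side, linearity applied to the prescribed estimator gives $\E[\widehat\theta_{\rm pres}]=\sum_q n_q^{-1}\cdot n_q\E[Z_q]=\sum_q\E[Z_q]$, which proves the first identity. Second, for the second moment we compute
\[
    \E[W^2]=\sum_q p_q\cdot\frac{\E[Z_q^2]}{p_q^2}=\sum_q\frac{\E[Z_q^2]}{p_q}=\sum_q\frac{B\,\E[Z_q^2]}{n_q}.
\]
Dividing by $B$ yields exactly $\sum_q\E[Z_q^2]/n_q$, which is the claimed variance bound. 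The only point needing care is that $p_q=n_q/B>0$ because $n_q\ge1$, so the reweighting is well defined. Nothing in this argument requires the realized type counts to match the prescribed counts $n_q$.

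For the conditional version, I would repeat the same computation with every expectation taken conditionally on $\mathcal H$. The hypotheses are exactly what the argument uses: given $\mathcal H$, the labels are independent with law $(p_q)$, and the contributions are conditionally independent with the corresponding type laws. Hence the $W_t$ are conditionally i.i.d., and both the tower-property computation and the additivity of conditional variance go through verbatim.

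I do not expect a genuine obstacle here. The main thing to state explicitly is the i.i.d. structure of the pairs $(I_t,Z'_t)$, which follows from the independent labels and the conditionally independent draws.
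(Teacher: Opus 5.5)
Your proposal is correct and follows the paper's proof essentially line for line. Both condition on the label to get $\E[W]=\sum_q\E[Z_q]$ and $\E[W^2]=B\sum_q\E[Z_q^2]/n_q$, use independence of the $W_t$ to obtain $\Var(\widehat\theta_{\rm iid})=\Var(W)/B\le\E[W^2]/B$, and get the conditional version by repeating the same calculation given $\mathcal H$.
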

\begin{proof}
Let $I$, $Z'$, and $W$ denote a generic label, decoded contribution, and
reweighted contribution.  Conditioning on the label $I$ gives
\[
    \E[W]
    =\sum_{q\in\calQ}p_q\cdot\frac{\E[Z_q]}{p_q}
    =\sum_{q\in\calQ}\E[Z_q]
    \quad\text{and}\quad
    \E[W^2]
    =\sum_{q\in\calQ}p_q\cdot\frac{\E[Z_q^2]}{p_q^2}
    =B\cdot\sum_{q\in\calQ}\frac{\E[Z_q^2]}{n_q}.
\]
The first identity and linearity of expectation prove the mean identity in~\eqref{eq:iidification-moments}.  Since $W_1,\ldots,W_B$ are independent, the second identity gives
\[
    \Var(\widehat\theta_{\rm iid})
    =
    \frac{1}{B}\cdot\Var(W)
    \le
    \frac{1}{B}\cdot\E[W^2]
    =
    \sum_{q\in\calQ}\frac{\E[Z_q^2]}{n_q},
\]
which proves the variance bound in~\eqref{eq:iidification-moments}.  The conditional statement follows from the same calculation given $\mathcal H$.
\end{proof}

Lemma~\ref{lem:iidification} requires no lower bound on the realized count of
any refinement type, because every realized contribution is reweighted by its
type probability.  For $s>8$, the localization decoder instead requires
prescribed numbers of responses from several query families.

\subsection{i.i.d. Conversion of the
  \texorpdfstring{$s$}{s}-Interval Localizer}
\label{sec:iid-s-localization}

This subsection constructs the localization law shown in
Figure~\ref{fig:iid-query-law}.  The finite-union localizer used for
Theorem~\ref{thm:main} is already i.i.d. by
Appendix~\ref{app:coding-localization}, and the random-threshold construction
in Lemma~\ref{lem:sharp-s-localization} is already i.i.d. for $s\le8$.  For
$s>8$, we convert the prescribed family allocation in that lemma into one
i.i.d. law by mixing the offset and block-index laws used by the three shifted
partitions and retaining the required number of responses from each law.

\begin{lemma}[i.i.d. $s$-interval localization]
\label{lem:iid-s-localization}
For every $\lambda\ge\sigma>0$, integer $s\ge1$, and
$\delta_{\rm loc}\in(0,1/4)$, there is a distribution over $s$-interval query
functions and a decoder that uses $B_{\rm loc}^{(s)}$ i.i.d. draws from this
distribution, where
\begin{equation}
\label{eq:iid-s-localization-budget}
    B_{\rm loc}^{(s)}
    =
    O\left(
      \log\frac{\lambda}{\sigma}
      +
      \max\left\{1,\frac{\lambda}{s\sigma}\right\}
        \cdot\log\frac{1}{\delta_{\rm loc}}
    \right).
\end{equation}
The decoder returns $c\in[-\lambda,\lambda]$ satisfying
$|c-\mu|\le8\sigma$ with probability at least $1-\delta_{\rm loc}$, uniformly
over $\mu\in[-\lambda,\lambda]$ and distributions satisfying
$\E[|X-\mu|]\le\sigma$.
\end{lemma}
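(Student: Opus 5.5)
We reduce Lemma~\ref{lem:iid-s-localization} to Lemma~\ref{lem:sharp-s-localization} by mixing its query families and checking that enough draws land in each family. For $s\le8$ there is nothing to do: the random-threshold construction in the proof of Lemma~\ref{lem:sharp-s-localization} already uses i.i.d.\ thresholds $T_j\sim\operatorname{Unif}[-\lambda-4\sigma,\lambda+4\sigma]$, so we take $B_{\rm loc}^{(s)}$ equal to its prescribed count. The same applies to the finite-union localizer of Lemma~\ref{lem:LS-localization}.

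For $s>8$, the proof of Lemma~\ref{lem:sharp-s-localization} (run with $\delta_{\rm loc}/2$ in place of $\delta_{\rm loc}$) prescribes at most six families $a\in\mathcal A$: an offset family for each shift $\nu$, and a block-index family when $K_\nu\ge2$. Each family has an i.i.d.\ law $\mathsf P_a$ over $s$-interval queries and a required count $m_a$. Set $m=\sum_a m_a$, which is of the order \eqref{eq:iid-s-localization-budget}, and use the mixture $\pi_a=m_a/m$. Each agent independently draws a label $A_t\sim\pi$ and then a query $Q_t\sim\mathsf P_{A_t}$ with fresh within-family randomness. This is a single common law, and every realized query is $s$-interval, since each $\mathsf P_a$ is. Take $B_{\rm loc}^{(s)}=\lceil C(m+\log(12/\delta_{\rm loc}))\rceil$ for a large constant $C$. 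The extra $\log(1/\delta_{\rm loc})$ is absorbed into \eqref{eq:iid-s-localization-budget}.

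The decoder first checks the occupancy counts $N_a=\#\{t:A_t=a\}$. If $N_a\ge m_a$ for every $a$, it keeps the first $m_a$ responses with label $a$, in agent order, and runs the original decoder on them. Otherwise it outputs $c=0$.

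There are two points to verify.
\begin{itemize}
\item \emph{Occupancy.} We have $N_a\sim\operatorname{Bin}(B_{\rm loc}^{(s)},\pi_a)$ with mean at least $C(m_a+\log(12/\delta_{\rm loc}))\pi_a\ge Cm_a$. A multiplicative Chernoff bound gives $\Pr(N_a<m_a)\le\exp(-c'C\,\pi_a B)$. This is the main obstacle: a family with small $\pi_a$ can have mean only of order $m_a$, and a naive bound then fails unless $m_a\gtrsim\log(1/\delta_{\rm loc})$. This does hold here: each $m_a$ in \eqref{eq:safe-offset-count} and \eqref{eq:safe-index-count} contains an additive $\log(2/\alpha)=\Omega(\log(1/\delta_{\rm loc}))$ term, and we may inflate each $m_a$ so that $m_a\ge C''\log(12/\delta_{\rm loc})$. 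Then $\Pr(N_a<m_a)\le e^{-c'Cm_a}\le\delta_{\rm loc}/12$, and a union bound over at most six families gives total occupancy failure at most $\delta_{\rm loc}/2$.
\item \emph{Distribution of retained responses.} Conditional on the label sequence, the query–observation pairs are independent across agents, with agent $t$'s pair distributed as $\mathsf P_{A_t}\otimes P$. On the occupancy event, the retained responses for family $a$ are therefore exactly $m_a$ i.i.d.\ draws of the prescribed type, independent across families. This matches the independent-family structure analyzed in Lemma~\ref{lem:sharp-s-localization}, noted after its proof.
\end{itemize}

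Conditioning on the labels, Lemma~\ref{lem:sharp-s-localization} gives failure probability at most $\delta_{\rm loc}/2$ on the occupancy event. Adding the occupancy failure bound yields total failure at most $\delta_{\rm loc}$, uniformly over admissible $\mu$ and distributions. Projecting the decoder output onto $[-\lambda,\lambda]$, as in Lemma~\ref{lem:sharp-s-localization}, keeps $c\in[-\lambda,\lambda]$ without increasing the error.
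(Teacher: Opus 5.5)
Your proposal is correct and follows essentially the same route as the paper. You run the prescribed at-most-six-family construction at level $\delta_{\rm loc}/2$, inflate each $m_a$ to $\Omega(\log(1/\delta_{\rm loc}))$, sample from the mixture $\pi_a=m_a/\sum_{a'}m_{a'}$ with a constant-factor larger budget, control occupancy by a Chernoff bound plus a union bound over the families, and retain the first $m_a$ responses per label so that, conditional on the labels, the retained pairs have exactly the prescribed joint law. The extra additive $\log(12/\delta_{\rm loc})$ in your budget is harmless, and it is unnecessary once the $m_a$ have been inflated.
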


\begin{proof}
For $s\le8$, the constant-interval-budget part of the proof of
Lemma~\ref{lem:sharp-s-localization} already draws every query independently from the same random-threshold law.  Its stated guarantee and sample order therefore apply directly.  We henceforth assume $s>8$.

To specify the family laws and sample counts that the i.i.d. mixture must
reproduce, consider the prescribed-family construction in the proof of
Lemma~\ref{lem:sharp-s-localization} with its failure parameter set to
$\delta_{\rm loc}/2$.  The remaining $\delta_{\rm loc}/2$ will account for
the possibility that the mixture supplies too few queries from some family.
For each shift $\nu\in\{0,1,2\}$, the prescribed allocation contains the
offset-query family in \eqref{eq:periodic-prefix-query}, with the count in
\eqref{eq:safe-offset-count}, and, when $K_\nu\ge2$, the block-index family
in \eqref{eq:block-index-query}, with the count in
\eqref{eq:safe-index-count}.

Let $\mathcal A$ index the nonempty pairs $a=(\nu,r)$, where
$\nu\in\{0,1,2\}$ and
$r\in\{\mathrm{off},\mathrm{idx}\}$; thus $|\mathcal A|\le6$.  For each $a\in\mathcal A$, let $m_a$ be its prescribed count and let $\mathsf P_a$ be
its query law.  The queries are mutually independent, and those within family
$a$ are i.i.d. from $\mathsf P_a$.  By increasing the prescribed counts if necessary, without changing their order, we may assume that
$m_a\ge c_0\log(1/\delta_{\rm loc})$ for every nonempty family, where
$c_0>0$ is universal.

Set $B_{\rm pres}^{(s)}=\sum_{a\in\mathcal A}m_a$ and define
\begin{equation}
\label{eq:s-localization-mixture-law}
    \pi_a
    =
    \frac{m_a}{B_{\rm pres}^{(s)}}
    \quad\text{and}\quad
    \mathsf P_{\rm loc}^{(s)}
    =
    \sum_{a\in\mathcal A}\pi_a\cdot\mathsf P_a.
\end{equation}
Choose a sufficiently large universal constant $C_{\rm fam}$ and take
$B_{\rm loc}^{(s)}
=\left\lceil C_{\rm fam}\cdot B_{\rm pres}^{(s)}\right\rceil$
i.i.d. draws from this law, retaining each family label.  
For each $a\in\mathcal A$, let $N_a$ be the number of sampled queries carrying family label $a$, and define the event
\[
    \mathcal E_{\rm fam}
    =
    \bigcap_{a\in\mathcal A}\{N_a\ge m_a\}.
\]
For every $a$,
$\E[N_a]
=B_{\rm loc}^{(s)}\cdot\pi_a
\ge C_{\rm fam}\cdot m_a$.  Since $C_{\rm fam}$ was chosen sufficiently large, a Chernoff bound,
the estimate $m_a=\Omega(\log(1/\delta_{\rm loc}))$, and
$|\mathcal A|\le6$ give
\[
    \Pr(\mathcal E_{\rm fam}^{\mathsf c})
    \le
    \sum_{a\in\mathcal A}\exp\left(-\Omega(m_a)\right)
    \le
    \frac{\delta_{\rm loc}}{2}.
\]
If $\mathcal E_{\rm fam}$ fails, the localization decoder outputs an
arbitrary point of $[-\lambda,\lambda]$.  On $\mathcal E_{\rm fam}$, it
retains the first $m_a$ query--response pairs from each family.  This
selection depends only on the family labels.  Conditional on those labels, the retained queries remain independent and the
family-$a$ query subsequence has law $\mathsf P_a^{\otimes m_a}$.  The
corresponding observations remain independent with the original data law and
are independent of the query randomness.  Hence, the retained query--response
pairs have exactly the joint law analyzed in the prescribed construction, so
\[
    \Pr\left(\{|c-\mu|>8\sigma\}\cap\mathcal E_{\rm fam}\right)
    \le
    \frac{\delta_{\rm loc}}{2}.
\]
Consequently,
\[
    \Pr(|c-\mu|>8\sigma)
    \le
    \Pr(\mathcal E_{\rm fam}^{\mathsf c})
    +
    \Pr\left(\{|c-\mu|>8\sigma\}\cap\mathcal E_{\rm fam}\right)
    \le
    \delta_{\rm loc}.
\]
Every law $\mathsf P_a$ is supported on $s$-interval queries.  Moreover,
\eqref{eq:s-localization-prescribed-count} gives the order in
\eqref{eq:iid-s-localization-budget}, and
$B_{\rm loc}^{(s)}=O\left(B_{\rm pres}^{(s)}\right)$.
\end{proof}
The failure guarantee in Lemma~\ref{lem:iid-s-localization} already includes
the event $\mathcal E_{\rm fam}^{\mathsf c}$ that some localization family
supplies fewer than its prescribed number of queries. The global construction below
only needs an additional count event for the outer
localization--refinement split.

\subsection{The Global i.i.d. Query Law}
\label{sec:complete-iid-proof}

This subsection constructs the outer localization--refinement mixture.
Figure~\ref{fig:iid-query-law} shows its $s>8$ interval-restricted form.  We
first give the proof for Theorem~\ref{thm:main} and then list the changes for
Theorem~\ref{thm:interval-complexity}.

\paragraph{The common query law for Theorem~\ref{thm:main}.}
Set $\delta_{\rm loc}=\delta_{\rm ref}=\delta/4$.  Let $B_{\rm loc}$ be the
sample budget of the finite-union localizer in
Lemma~\ref{lem:LS-localization}, and let
$K=\Theta(\log(1/\delta_{\rm ref}))$ be the number of refinement blocks
chosen in Appendix~\ref{sec:amplification}.  If necessary, enlarge
$B_{\rm loc}$ so that
$B_{\rm loc}\ge\lceil\log(1/\delta_{\rm loc})\rceil$.  The decoder ignores
any additional localization responses, so this enlargement changes neither
the localization guarantee nor the sample order.
We then define
\begin{equation}
\label{eq:main-iid-budgets}
    B_{\rm base}
    =
    2\sum_{i=1}^{\imax}n_i,
    \qquad
    B_{\rm ref}
    =
    K\cdot B_{\rm base},
    \qquad
    B_{\rm tot}
    =
    B_{\rm loc}+B_{\rm ref},
    \quad\text{and}\quad
    n_{\rm tot}=\lceil C B_{\rm tot}\rceil
\end{equation}
for a sufficiently large universal constant $C$.
Here $n_{\rm tot}$ is the total number of agents, or equivalently the total
sample size, in the i.i.d. protocol.

Independently for each agent, draw a branch label
$A\in\{\mathrm{loc},\mathrm{ref}\}$ according to
\begin{equation}
\label{eq:main-branch-probabilities}
    \Pr(A=\mathrm{loc})
    =\frac{B_{\rm loc}}{B_{\rm tot}}
    \quad\text{and}\quad
    \Pr(A=\mathrm{ref})
    =\frac{B_{\rm ref}}{B_{\rm tot}}.
\end{equation}
In the localization branch, draw a query from the i.i.d. law in
Appendix~\ref{app:coding-localization}.  In the refinement branch, draw the
type $q=(i,d)$ according to
\begin{equation}
\label{eq:refinement-role-law}
    p_{i,d}=\frac{n_i}{B_{\rm base}},
    \quad \text{for each }
    i=1,\ldots,\imax
    \quad\text{and}\quad
    d\in\{\mathrm L,\mathrm R\},
\end{equation}
and then draw fresh split and sign variables for that type.  The probabilities
in \eqref{eq:refinement-role-law} sum to one because
$B_{\rm base}=2\sum_i n_i$.  Neither the branch choice nor a refinement query
depends on the eventual localization output $c$; only the decoder uses $c$.
Thus, all $n_{\rm tot}$ full query functions are i.i.d.  The decoder retains
the branch label, the refinement type when applicable, and the internal random
choices.

\paragraph{Enough agents in both branches.}
Let $L_{\rm loc}$ and $L_{\rm ref}$ be the numbers of agents assigned to the
two branches, and define
\[
    \mathcal E_{\rm count}
    =
    \{L_{\rm loc}\ge B_{\rm loc}\}
    \cap
    \{L_{\rm ref}\ge B_{\rm ref}\}.
\]
By the outer branch probabilities in~\eqref{eq:main-branch-probabilities} 
and the budget definitions in~\eqref{eq:main-iid-budgets},
\[
    \E[L_{\rm loc}]
    =
    n_{\rm tot}\cdot\frac{B_{\rm loc}}{B_{\rm tot}}
    \ge
    C B_{\rm loc}
    \quad\text{and}\quad
    \E[L_{\rm ref}]
    =
    n_{\rm tot}\cdot\frac{B_{\rm ref}}{B_{\rm tot}}
    \ge
    C B_{\rm ref}.
\]
Since the universal constant $C$ in the definition of $n_{\rm tot}$ was chosen sufficiently large, a Chernoff bound gives
\begin{equation}
\label{eq:two-branch-occupancy}
    \Pr(\mathcal E_{\rm count}^{\mathsf c})
    \le
    e^{-\Omega(B_{\rm loc})}
    +
    e^{-\Omega(B_{\rm ref})}
    \le
    \delta_{\rm loc}+\delta_{\rm ref}
    = \frac{\delta}{4} + \frac{\delta}{4}
    = \frac{\delta}{2}.
\end{equation}
The second inequality uses
$B_{\rm loc}=\Omega(\log(1/\delta_{\rm loc}))$ from
Appendix~\ref{app:coding-localization} and
$B_{\rm ref}=\Omega(\log(1/\delta_{\rm ref}))$ from
Appendix~\ref{sec:amplification}, together with the sufficiently large choice
of $C$.

If $\mathcal E_{\rm count}$ fails, the decoder outputs an arbitrary estimate
in $[-\lambda,\lambda]$.  On $\mathcal E_{\rm count}$, it retains the first
$B_{\rm loc}$ localization-labeled query--response pairs and the first
$B_{\rm ref}$ refinement-labeled pairs.  Because this selection
uses only the branch labels, conditional on those labels the retained pairs remain independent with their original within-branch laws.

\paragraph{Localization and refinement blocks.}
On $\mathcal E_{\rm count}$, apply the finite-union localization decoder to the retained localization responses and let
$\mathcal E_{\rm loc}=\{|c-\mu|\le8\sigma\}$.  Conditional on any branch-label
realization in $\mathcal E_{\rm count}$, these responses are i.i.d. from the
law in Appendix~\ref{app:coding-localization}.  Hence,
\[
    \Pr(\mathcal E_{\rm loc}^{\mathsf c}
    \cap 
    \mathcal E_{\rm count})
\le\delta_{\rm loc}.
\]
Divide the retained refinement responses into $K$ consecutive blocks of
exactly $B_{\rm base}$ responses.  Let $\mathcal H$ contain all branch labels
and the localization transcript, including $c$.  Conditional on $\mathcal H$,
the refinement queries and responses remain independent with type probabilities given by \eqref{eq:refinement-role-law}; hence the $K$ blocks are conditionally independent.

For each block $b$, let $I_{b,t}$ denote the type of its $t$-th query and let $V_{b,t}(c)$ denote the corresponding decoded refinement contribution.
Define
\begin{equation}
\label{eq:iid-block-estimator}
    W_{b,t}
    =\frac{V_{b,t}(c)}{p_{I_{b,t}}}
    \quad\text{and}\quad
    \widehat\theta_b
    =\frac{1}{B_{\rm base}}
      \sum_{t=1}^{B_{\rm base}}W_{b,t}.
\end{equation}
On $\mathcal E_{\rm count}\cap\mathcal E_{\rm loc}$, condition on
$\mathcal H$.  Then $c$ is fixed, and
Lemma~\ref{lem:iidification} applies to $\widehat\theta_b$ with prescribed
counts $n_{i,d}=n_i$.  Combining the mean identity in
\eqref{eq:iidification-moments} with the bias bound
\eqref{eq:base-bias}, and the variance inequality in
\eqref{eq:iidification-moments} with the single-query second-moment bound
\eqref{eq:single-query-second-moment}, the allocation
\eqref{eq:main-ni}, and the dyadic tail bound
\eqref{eq:dyadic-tail-sum}, gives
\begin{equation}
\label{eq:iid-block-properties}
    \left|
      \E[\widehat\theta_b\mid\mathcal H]-(\mu-c)
    \right|
    \le\frac{\eps}{2}
    \quad\text{and}\quad
    \Var(\widehat\theta_b\mid\mathcal H)
    \le\frac{\eps^2}{64}.
\end{equation}

\paragraph{Completion of Theorem~\ref{thm:main}.}
With the conditional per-block moment bounds in
\eqref{eq:iid-block-properties} established, the remainder follows the
corresponding proof in Appendix~\ref{app:upper-bound}.  We reuse the per-block Chebyshev bound and the median-amplification argument from Appendix~\ref{sec:amplification}, as well as the sample-count and
interval-bound analysis from Appendix~\ref{sec:sample-complexity}.  The only additional ingredient is the
outer branch-count failure in~\eqref{eq:two-branch-occupancy}.
Define
\begin{equation}
\label{eq:iid-final-estimator}
    \widehat\mu
    =
    c+\operatorname{med}_{1\le b\le K}\widehat\theta_b.
\end{equation}
On $\mathcal E_{\rm count}\cap\mathcal E_{\rm loc}$, condition on
$\mathcal H$.  The Chebyshev argument leading to
\eqref{eq:base-failure}, applied with
\eqref{eq:iid-block-properties}, shows that each block differs from
$\mu-c$ by more than $\eps$ with conditional probability at most $1/16$.
Since the blocks are conditionally independent, the median-amplification
bound in~\eqref{eq:median-failure} gives conditional refinement failure
probability at most $\delta_{\rm ref}$.  Combining this bound with the branch-count
and localization failures gives
\begin{equation}
\label{eq:iid-main-total-failure}
    \Pr(|\widehat\mu-\mu|>\eps)
    \le
    \Pr(\mathcal E_{\rm count}^{\mathsf c})
    +
    \Pr(\mathcal E_{\rm loc}^{\mathsf c} \cap \mathcal E_{\rm count})
    +
    \delta_{\rm ref}
    \le
    \frac{\delta}{2}
    +
    \delta_{\rm loc}
    +
    \delta_{\rm ref}
    =
    \delta.
\end{equation}
The calculation in Appendix~\ref{sec:sample-complexity} shows that
$B_{\rm tot}$ has the order in \eqref{eq:main-rate-compact}.  Since
$n_{\rm tot}=O(B_{\rm tot})$ by definition~\eqref{eq:main-iid-budgets}, the i.i.d. conversion preserves this order.  It also preserves the per-query interval bound: Appendix~\ref{sec:query-details}, specifically
\eqref{eq:bank-cardinality-detail}, shows that every refinement query has $O(\lambda/\sigma)$ interval components, while Lemma~\ref{lem:LS-localization} gives the same bound for every localization
query.  This proves the i.i.d. assertion in
Theorem~\ref{thm:main}.

\paragraph{Changes for Theorem~\ref{thm:interval-complexity}.}
The proof differs from the preceding completion only in the localization and
refinement branch laws and their budgets.  We record these substitutions
below; the outer branch-count argument in
\eqref{eq:two-branch-occupancy}, the conditional block construction in
\eqref{eq:iid-block-estimator}, and the subsequent confidence-amplification
and union-bound arguments apply unchanged.

Retain $\delta_{\rm loc}=\delta_{\rm ref}=\delta/4$ and the same
$K=\Theta(\log(1/\delta_{\rm ref}))$.  For localization, use
Lemma~\ref{lem:iid-s-localization}, which converts the prescribed localizer
of Lemma~\ref{lem:sharp-s-localization} into one i.i.d. law.  For refinement,
use the grouped query types $(i,d,g)$ from
\eqref{eq:grouped-refinement-query}, with the group counts $G_i$ in
\eqref{eq:number-scale-groups}.  The prescribed construction in
Appendix~\ref{sec:interval-complexity-assembly} assigns $n_i$ queries to each
such type.  Accordingly, for
$i\in\{1,\ldots,\imax\}$,
$d\in\{\mathrm L,\mathrm R\}$, and
$g\in\{1,\ldots,G_i\}$, define
\begin{equation}
\label{eq:grouped-refinement-role-law}
    B_{\rm base}^{(s)}
    =
    2\sum_{i=1}^{\imax}G_i\cdot n_i
    \quad\text{and}\quad
    p_{i,d,g}^{(s)}
    =
    \frac{n_i}{B_{\rm base}^{(s)}}.
\end{equation}
The probabilities in \eqref{eq:grouped-refinement-role-law} sum to one.
Set
\[
    B_{\rm ref}^{(s)}
    =
    K\cdot B_{\rm base}^{(s)},
    \qquad
    B_{\rm tot}^{(s)}
    =
    B_{\rm loc}^{(s)}+B_{\rm ref}^{(s)},
    \quad\text{and}\quad
    n_{\rm tot}^{(s)}
    =
    \left\lceil C B_{\rm tot}^{(s)}\right\rceil.
\]

Use the branch rule in \eqref{eq:main-branch-probabilities} with the
superscripted budgets.  In the localization branch, draw from the law in
Lemma~\ref{lem:iid-s-localization}; in the refinement branch, draw
$(i,d,g)$ with probability $p_{i,d,g}^{(s)}$ and then sample its within-type
randomness afresh.  Thus the full query functions are i.i.d.

Let $\mathcal E_{\rm count}^{(s)}$ be the event that the localization and
refinement branches receive at least $B_{\rm loc}^{(s)}$ and
$B_{\rm ref}^{(s)}$ agents, respectively.  The argument in
\eqref{eq:two-branch-occupancy} gives
$
    \Pr\left(
      (\mathcal E_{\rm count}^{(s)})^{\mathsf c}
    \right)
    \le
    \delta/2.
$
This event concerns only the outer branch split; the internal localization
family counts are already included in
Lemma~\ref{lem:iid-s-localization}.
If $\mathcal E_{\rm count}^{(s)}$ fails, the decoder outputs an arbitrary
estimate in $[-\lambda,\lambda]$.  On $\mathcal E_{\rm count}^{(s)}$, it
retains the required localization responses and divides the refinement
stream into $K$ blocks of size $B_{\rm base}^{(s)}$.  Define each block estimator by
\eqref{eq:iid-block-estimator}, replacing $B_{\rm base}$, $I_{b,t}$, and
$p_{I_{b,t}}$ by $B_{\rm base}^{(s)}$, $(i,d,g)$, and
$p_{i,d,g}^{(s)}$, respectively, and define $\widehat\mu$ by
\eqref{eq:iid-final-estimator}.

Let
$\mathcal E_{\rm loc}^{(s)}=\{|c-\mu|\le8\sigma\}$, and let
$\mathcal H^{(s)}$ contain the branch labels and the localization transcript.
Conditional on any branch-label realization in
$\mathcal E_{\rm count}^{(s)}$, the retained localization sample has the law
in Lemma~\ref{lem:iid-s-localization} and hence
\[
    \Pr\left(
      (\mathcal E_{\rm loc}^{(s)})^{\mathsf c}
      \cap
      \mathcal E_{\rm count}^{(s)}
    \right)
    \le\delta_{\rm loc}.
\]
On
$\mathcal E_{\rm count}^{(s)}\cap\mathcal E_{\rm loc}^{(s)}$, conditional on
$\mathcal H^{(s)}$, the grouped refinement blocks are independent.
Lemma~\ref{lem:iidification}, applied with $n_{i,d,g}=n_i$, together with
\eqref{eq:grouped-lemma-first-moment} and
\eqref{eq:grouped-lemma-second-moment}, gives the same conditional block
bounds as \eqref{eq:iid-block-properties}.  The median-amplification bound in
\eqref{eq:median-failure} and the union bound in
\eqref{eq:iid-main-total-failure}, with the superscripted events, therefore
give total failure probability at most~$\delta$.  As before, no lower bound
on any realized grouped-type count is required.

Every localization query has at most $s$ interval components by
Lemma~\ref{lem:iid-s-localization}, and every grouped refinement query has at
most $s$ components by \eqref{eq:grouped-refinement-query}.  Moreover, the
localization order in \eqref{eq:iid-s-localization-budget} matches the
prescribed order in \eqref{eq:n-loc-s-expand}, while
\eqref{eq:grouped-refinement-cost} gives the grouped refinement order.
It follows that $B_{\rm tot}^{(s)}$, and hence $n_{\rm tot}^{(s)}$, has the order in
\eqref{eq:full-s-tradeoff}.  This proves the i.i.d. assertion in
Theorem~\ref{thm:interval-complexity}; taking $s=s_k^{\rm opt}$ also proves
the achievability statement in
Corollary~\ref{cor:optimal-components}.
\section{Further Related Work}
\label{app:concurrent-work}

In this appendix, we first place the problem within the broader literature on mean
estimation and communication constraints.  We then compare our refinement construction with the concurrent protocols
of \citet{miao2026universal} and \citet{hu2026interaction}, and explain how
the two VALG theorem candidates relate to these refinement mechanisms.  The comparisons follow the two goals in
the rate analysis: recovering the residual mean $\mu-c$ up to controlled bias,
and making each scale's second moment carry the corresponding tail-probability
factor.  We use our notation for shared objects and retain the terminology of the cited papers only when it marks a substantive difference. 

\subsection{Prior Work}
\label{app:earlier-work}

Section~\ref{sec:related-work} discusses the 1-bit mean-estimation papers most directly related to ours. Here we place the problem in the broader literatures on high-probability mean estimation and  communication-constrained learning and estimation.

\paragraph{Classical mean estimation.}
Without communication constraints, high-probability mean estimation under finite-moment assumptions provides the statistical benchmark. Under finite variance, \citet{devroye2016subgaussian} characterized the
possibilities and limitations of sub-Gaussian mean estimation, and
\citet{lee2022optimal} attained the optimal asymptotic constant without
requiring the variance as an input. \citet{cherapanamjeri2022optimal}
established optimal guarantees under only a $1+\alpha$ moment, when the
variance need not exist. Following the optimal finite-variance result of \citet{lee2022optimal}, \citet{minsker2023efficient} showed that a modified median-of-means estimator attains the same asymptotically optimal sub-Gaussian constant under a finite $2+\eta$ moment.

\paragraph{Communication constrained learning and estimation.}
A broad literature studies distributed estimation under communication,
privacy, and related information constraints. Representative lower-bound
approaches include
information-theoretic arguments
\citep{zhang2013information,shamir2014fundamental}, distributed
data-processing inequalities \citep{braverman2016communication}, geometric
methods \citep{han2018geometric}, reductions from communication complexity
\citep{duchi2019lower}, Fisher-information arguments
\citep{barnes2020lower}, and a unified framework for interactive protocols
\citep{acharya2023unified}.

\paragraph{The role of interaction.}
For learning and estimation under communication and related information
constraints, whether interaction improves the optimal rate depends on the problem.  \citet{dagan2020interaction} proved an exponential separation in distributed learning, while \citet{acharya2022role} established separations for structured high-dimensional estimation.  For distributed Gaussian mean estimation with unknown variance,~\citet{cai2022distributed} showed that interaction can reduce the communication cost of adaptation.  For locally private hypothesis selection, \citet{gopi2020locally} developed multi-round protocols, and \citet{pour2024sample} subsequently proved a strict sample-complexity advantage over noninteractive protocols.  By contrast, \citet{kazemi2025sample} showed that sequential interaction does not reduce the sample complexity of distributed simple binary hypothesis testing.  These examples motivate the question studied here, although none treats the scalar nonparametric 1-bit mean-estimation model.

\paragraph{Distributed empirical mean estimation.}
A separate literature studies communication-efficient estimation of the
empirical mean of a fixed collection of vectors, particularly for distributed and federated learning~\citep{suresh2017distributed,
  konevcny2018randomized, davies2020new, vargaftik2021drive,
  mayekar2021wyner, vargaftik2022EDEN, benbasat2024accelerating,
babu2025unbiased}.
These works treat the client vectors as fixed and measure error relative to their arithmetic average, rather than estimating the population mean of an unknown distribution; see~\citep[\S 1.3]{lau2026order} for further discussion on this distinction.

\subsection{Summary of Comparisons with Concurrent Work}

The four non-adaptive protocols represented in
Table~\ref{tab:concurrent-comparison}, i.e., ours, the two constructions of
\citet{miao2026universal}, and that of \citet{hu2026interaction}, share
the outer architecture shown in Figure~\ref{fig:information-flow}(b) and use
the same coding-based localization strategy from
\citep{lau2026order}.  Their main methodological differences therefore lie
in refinement.
Table~\ref{tab:concurrent-comparison} summarizes the differences between the refinement mechanisms, which are
developed further in Appendices~\ref{app:miao-comparison}
and~\ref{app:hu-zhong-comparison}.  

Section~\ref{sec:related-work} also discusses VALG \citep{zhang2026valg}, an agentic system for machine learning theory whose evaluation includes two theorem candidates for the present problem.  We do not give these candidates separate columns in Table~\ref{tab:concurrent-comparison}, primarily because the main ideas in their refinement mechanisms are already captured in the
table:  The first closely parallels
the dyadic-offset construction of~\citet{miao2026universal}, while the second is methodologically closest to the successive-scale construction of~\citet{hu2026interaction}, though its per-scale refinement mechanism differs.  The second candidate also uses Rademacher signs to isolate contributions from selected regions, as do Hu and Zhong's construction, Miao's signed-grid construction, and ours.

Among them, ours has the most direct conceptual connection to the refinement strategy in \citep{lau2026order}: it likewise estimates and sums contributions from individual cells, but fixes the candidate refinement queries in advance and uses the center $c$ to select the relevant cells and to filter and reweight the stored responses at the decoder; see Section~\ref{sec:stoquant-intuition}.  The other concurrent constructions instead use alternative refinement strategies based on different identities for recovering the residual $\mu-c$.  We do not claim that this closer conceptual connection is necessarily an advantage for either theoretical or
practical purposes.

\begin{table}[htbp]
\centering
\footnotesize
\setlength{\tabcolsep}{4pt}
\renewcommand{\arraystretch}{1.18}
\begin{tabularx}{\textwidth}{@{}>{\raggedright\arraybackslash}p{2.15cm}
  >{\raggedright\arraybackslash}X
  >{\raggedright\arraybackslash}X
  >{\raggedright\arraybackslash}X@{}}
\toprule
Aspect & This paper & \citet{miao2026universal} & \citet{hu2026interaction} \\
\midrule
Decomposition of $\mu-c$
& Sum of $\E[(X-c)\1\{X\in J\}]$ over cells selected using $c$
& \emph{Dyadic offsets:} within-cell offsets telescope across successive
scales.  \emph{Signed grids:} averaging neighboring-cell contributions over
random shifts and widths approximates $X-c$
& Successive truncations telescope to a large-scale approximation of $X-c$ \\
\addlinespace[2pt]
Use of decoded center
& Select the relevant geometric cover from the finite dyadic banks
& \emph{Dyadic offsets:} choose grids whose boundaries are far from $c$.
  \emph{Signed grids:} retain shifts that place $c$ away from its cell
  boundaries
& Use the shifted block containing $c$ and retain the query pair only when
  $c$ is away from the block boundaries \\
\addlinespace[2pt]
Source of the tail factor in the second moment
& Filter responses influenced by near-center candidate intervals and reweight the retained responses
& \emph{Dyadic offsets:} successive offset changes cancel when $X$ is near~$c$.  \emph{Signed grids:} boundary separation keeps $X$ and $c$ in the
  same cell when they are close, making the subtraction zero
& The derivative weight is zero for thresholds near $c$; on retained shifts,
  a nonzero contribution requires an observation far from~$c$ \\
\addlinespace[2pt]
\bottomrule
\end{tabularx}
\caption{Comparison of the four non-adaptive protocols.  The two labeled
descriptions in the Miao column refer to its alternative dyadic-offset and signed-grid constructions.  
}
\label{tab:concurrent-comparison}
\end{table}

\subsection[Comparison with Miao]{Comparison with~\citet{miao2026universal}}
\label{app:miao-comparison}

Once the center $c$ has been decoded, the remaining task is to estimate the residual $\mu-c$.  \citet{miao2026universal} gives two constructions that estimate this residual through different identities while obtaining the same type of tail-dependent second-moment bound discussed in Section~\ref{sec:stoquant-intuition}.

\paragraph{First construction: dyadic offsets.}
The first construction uses two grids at each dyadic scale, shifted by half a cell relative to one another.  For a given grid, the offset of a point is its distance from the left endpoint of the cell containing it.  The queries randomize over the possible grid shifts in advance.  After $c$ is known, the decoder keeps only those queries whose shifts place $c$ away from the relevant cell boundaries and reweights the retained contributions to preserve the desired expectation.
At the finest grid scale, uniform thresholding gives an unbiased estimate of the difference between the offsets of $X$ and $c$.  At each subsequent scale, the same method estimates how this offset difference changes between two successive scales.  These changes telescope, leaving the offset difference at the coarsest scale.  This final difference equals $X-c$ whenever no grid boundary lies between them, and the moment assumption controls the resulting bias.
Moreover, when $X$ is sufficiently close to $c$ relative to a given scale, the offsets of $X$ and $c$ differ by exactly $X-c$ at both that scale and the next.  The correction between the two scales therefore vanishes.  The preceding filtering step enables this cancellation, which is the source
of the required tail factor in the second-moment bound.

\paragraph{Second construction: signed random grids.}
The second construction draws a random cell width and grid shift, then assigns an independent Rademacher sign to every cell.  The returned bit encodes the sign of the cell containing $X$.  After recovering $c$, the decoder keeps only those queries whose shifts place $c$ away from the relevant cell boundaries.  For each retained query, it subtracts the sign of the cell containing~$c$ from the returned sign and multiplies the result by the difference between the signs of the two neighboring cells.  The subtraction is zero whenever $X$ and $c$ occupy the same cell.  Averaging over the random signs cancels the contributions from all cells
except the two neighboring cells in expectation.  Averaging over the grid shift and over all cell widths then reconstructs $X-c$ exactly; restricting the widths to a suitable finite range introduces a controlled bias.
The same-cell cancellation and boundary filter also yield the required tail
factor.  Without the boundary filter, $X$ could cross a cell boundary even
when $|X-c|$ is arbitrarily small.  For a retained query of cell width $r$,
however, such a crossing requires $|X-c|$ to be at least a fixed fraction of
$r$.  The decoder contribution therefore vanishes below this distance, which
supplies the tail factor in the second-moment bound.

\paragraph{Comparison with our construction.}
Both constructions resemble ours at a high level: after $c$ is decoded, the decoder filters and reweights responses from refinement queries collected
in advance.  The quantities reconstructed are nevertheless different.  Miao's first
construction telescopes changes in within-cell offsets, while the second
averages contributions from neighboring cells over random shifts and cell
widths.  We instead select a spatial cover and directly estimate
$\E[(X-c) \cdot \1\{X\in J\}]$ for its cells.  The filtering also plays different
roles.  In the first construction, choosing grids whose boundaries are far
from $c$ controls the resulting bias and enables the cancellation used in the
second-moment bound.  In the second, the boundary filter is used in both the
reconstruction identity and the second-moment bound.  Our candidate-interval
filter is used only for second-moment control.  Miao's second construction
also shares its randomly shifted signed partitions and boundary filtering
with \citet{hu2026interaction}, although the two methods reconstruct different
quantities.

\subsection[Comparison with Hu and Zhong]{Comparison with \citet{hu2026interaction}}
\label{app:hu-zhong-comparison}

\paragraph{Successive-scale decomposition.}
\citet{hu2026interaction} estimate the residual $\mu-c$ by decomposing an
approximation to the identity function $z\mapsto z$ across geometrically
increasing scales $\ell_i$.  For each $i$, they define a function $H_i(z)$ that equals $z$ when $|z|\leq\ell_i$ and vanishes when $|z|\geq2\ell_i$.  They set $g_0=H_0$ and $g_i=H_i-H_{i-1}$ for $i\geq1$.   These functions telescope: $\sum_{i=0}^M g_i=H_M$.  Consequently, summing the quantities estimated at the different scales gives $\E[H_M(X-c)]$ rather than $\E[X-c]$.  The moment
assumption bounds the resulting bias.

\paragraph{Center-independent fixed-scale construction.}
To estimate $\E[g_i(X-c)]$ without knowing $c$ when the queries are chosen,
each query pair randomly shifts a partition of $\mathbb R$ into blocks whose length is proportional to $\ell_i$.  It draws one relative threshold position uniformly at random and uses that position in every block.  It also assigns a Rademacher sign to each block that the decoder could later select, with these signs pairwise independent. The first query combines the sign of the block containing the observation with its comparison to the threshold in that block, while the second query encodes only the block sign.  The two queries use independent
observations.

After $c$ is known, the decoder selects the block containing it and retains
the query pair only when $c$ is sufficiently far from the block endpoints,
reweighting the retained contribution to preserve the desired expectation.
Multiplying the signed versions of both responses by the sign of the selected
block cancels the contributions from all other blocks in expectation.  The second response supplies, in expectation, the correction needed to
subtract the threshold comparison at $c$ from that at $X$.  Averaging this
difference over the random threshold, with a weight determined by the
derivative of~$g_i$, recovers $\E[g_i(X-c)]$.  This boundary filter is needed for the identity because it ensures that every~$x$ for which $g_i(x-c)$ can be nonzero lies inside the selected block.

The preceding boundary filter also enters the second-moment bound.  For
$i\geq1$, the derivative of $g_i$ vanishes near zero.  Together, the vanishing derivative and the boundary filter make the decoder contribution vanish whenever both observations in the query pair are sufficiently close to $c$ relative to $\ell_i$.  This supplies the required tail factor in the second-moment bound.  The component at $i=0$ instead uses the unconditional bound at the smallest
scale.

\paragraph{Comparison with our construction.}
The use of random signs and decoder-side filtering resembles our construction at a high level, but the quantities being reconstructed are different.  At each scale, Hu and Zhong use the block containing $c$ to recover $\E[g_i(X-c)]$ and then add these quantities across scales.  We instead split
decoder-selected cells and directly estimate $\E[(X-c)\1\{X\in J\}]$ for each selected cell $J$.  The filtering also plays different roles.  Hu and Zhong retain a query pair according to the position of $c$ within its randomly shifted block; this filtering is needed both to recover
$\E[g_i(X-c)]$ and to obtain the second-moment bound.  Our filtering depends
on the location of a candidate interval relative to $c$ and is used only for
second-moment control.

\end{document}